\newtheorem{theorem}{Theorem}
\newtheorem{corollary}{Corollary}
\newtheorem{lemma}{Lemma}
\newtheorem{proposition}{Proposition}
\newtheorem{definition}{Definition}
\begin{document}

\title{Deep Convolutional Neural Networks with Zero-Padding: Feature Extraction and Learning}

\author{Zhi Han, Baichen Liu,
        Shao-Bo Lin, ~and Ding-Xuan Zhou
\IEEEcompsocitemizethanks{
\IEEEcompsocthanksitem Z. Han and B. Liu are with the State Key Laboratory of Robotics, Shenyang Institute of Automation, Chinese Academy of Sciences, Shenyang, China and Institutes for Robotics and Intelligent Manufacturing, Chinese Academy of Sciences, Shenyang, China.

\IEEEcompsocthanksitem S.B. Lin is with the  Center of Intelligent Decision-Making and Machine Learning, School of Management, Xi'an Jiaotong University, Xi'an, China.

\IEEEcompsocthanksitem D.X. Zhou  is with the School of Mathematics and Statistics, University of Sydney, Sydney NSW 2006, Australia.

\IEEEcompsocthanksitem Corresponding author: S. B. Lin (sblin1983@gmail.com)}}

%\markboth{IEEE TRANSACTIONS ON Pattern Analysis and Machine Intelligence}%
%{Shell \MakeLowercase{\textit{et al.}}: Bare Demo of IEEEtran.cls for Computer Society Journals}

\IEEEtitleabstractindextext{%
\begin{abstract}
This paper studies the performance of deep convolutional neural networks (DCNNs) with zero-padding in feature extraction and learning. After verifying the roles of zero-padding in enabling   translation-equivalence, and  pooling in its translation-invariance driven nature, we show that  with similar number of free parameters,   any deep fully connected networks (DFCNs) can be represented by DCNNs with zero-padding. This demonstrates that DCNNs with zero-padding is essentially better than DFCNs in feature extraction. Consequently, we derive universal consistency of DCNNs with zero-padding and show its translation-invariance in the learning process. All our theoretical results are verified by numerical experiments including both toy simulations and real-data running.
\end{abstract}

% Note that keywords are not normally used for peerreview papers.

\begin{IEEEkeywords}
Deep learning, deep convolutional neural network, zero-padding, pooling, learning theory
\end{IEEEkeywords}}

% make the title area
\maketitle

\IEEEdisplaynontitleabstractindextext

\IEEEpeerreviewmaketitle

\IEEEraisesectionheading{\section{Introduction}\label{Sec.Introduction}}

\IEEEPARstart{I}n the era of big data, machine learning, especially deep learning, has received  unprecedented success in numerous  application regions including computer vision \cite{cipolla2013machine}, management science \cite{pallathadka2021impact},  finance \cite{dixon2020machine}, economics \cite{athey2018impact}, games \cite{bowling2006machine} and so on.  To reveal   the mystery   behind the success is the recent focus and will be an
eternal  task of machine learning, requiring not only  to understand the running mechanism of specific learning schemes, but also to provide solid theoretical verifications. As shown in Figure \ref{steps}, a machine learning scheme can be regarded as a two-step strategy that searches suitable feature mappings to obtain a sequence of features at first and then finds  appropriate linear combinations of the obtained  features for the learning purpose. In this way, the main difference of learning schemes lies in the different feature mappings.
For example, the kernel approach \cite{shawe2004kernel} utilizes the kernel-based mapping to extract features and deep learning \cite{goodfellow2016deep} employs deep neural networks (deep nets for short) for feature representation.
% Therefore, the quality of a machine learning scheme   significantly depends on the  feature mapping that determines not only its applicable ranges  but also  its computational burden and running mechanism.

\begin{figure}[htbp]
\centerline{\includegraphics[width=0.5\textwidth]{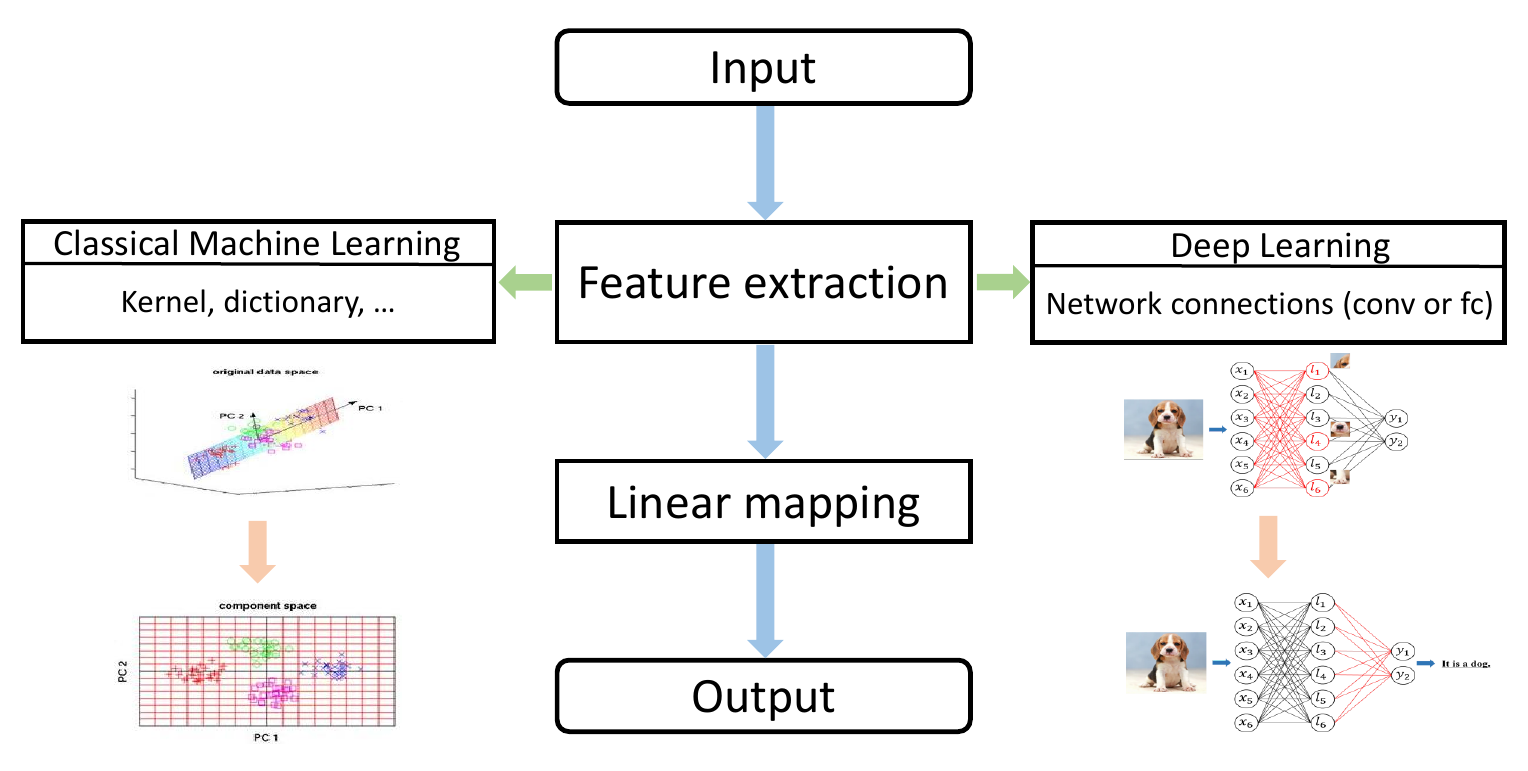}}
\caption{The steps of feature extraction of classical machine learning and deep learning.}
\label{steps}
\end{figure}

The great success of deep learning \cite{deng2014deep,lecun2015deep,goodfellow2016deep} implies that deep nets are excellent feature extractors in representing the translation invariance \cite{kayhan2020translation}, rotation invariance \cite{zhang2019rotation}, calibration invariance \cite{chatzidakis2019towards}, spareness \cite{bengio2009learning},    manifold structure\cite{brahma2015deep}, among others \cite{bengio2013representation}.
Furthermore, avid   research activities in deep learning theory   proved that, equipped with suitable  structures, deep nets outperform the classical shallow neural networks (shallow nets for short) in capturing the smoothness \cite{yarotsky2017error}, positioning the input \cite{chui2020realization},  realizing the sparseness in  frequency  and spatial   domains \cite{schwab2019deep,lin2018generalization}, reflecting the rotation-invariance \cite{chui2019deep}, grasping the composite-structure \cite{mhaskar2016deep}, group-structure \cite{han2020depth}, manifold-structure \cite{shaham2018provable} and hierarchy-structure \cite{kohler2016nonparametric}. These encouraging theoretical assertions together with the
notable success in application  seem that the running mechanism of deep learning is put on  the order of the day.

The problem is, however,  that there are three challenged gaps between the established theoretical results and desired running mechanisms of deep learning.  At first, the structures of deep nets in theoretical analysis and applications for the same purpose are totally different. For example, to embody the rotation invariance, practicioners  focus  on tailoring the network to obtain a structured deep nets to automatically extracting the feature  \cite{zhang2019rotation} while theoretical analysis devotes to proving the existence of a deep fully connected neural network (DFCN) via tuning the weights. Then, as the product  of full matrix frequently does not obey to the commutative law, DFCN
 requires  strict orders of the extracted features, which consequently yields that
the features extracted by DFCNs cannot be  combined directly to explain the success of deep learning in practice. At last,
practicioners are willing to encode the a-priori knowledge into the training process via setting suitable network structure which is beyond the scope of existing theoretical analysis.
The
 above three  gaps between existing theoretical analysis and application requirements, highlighted in Figure \ref{gaps}, significantly dampens the spirits  of both  practicioner and theoretical analysts, making them drive  in totally different directions in understanding deep learning.

\begin{figure}[htbp]
\centerline{\includegraphics[width=0.48\textwidth]{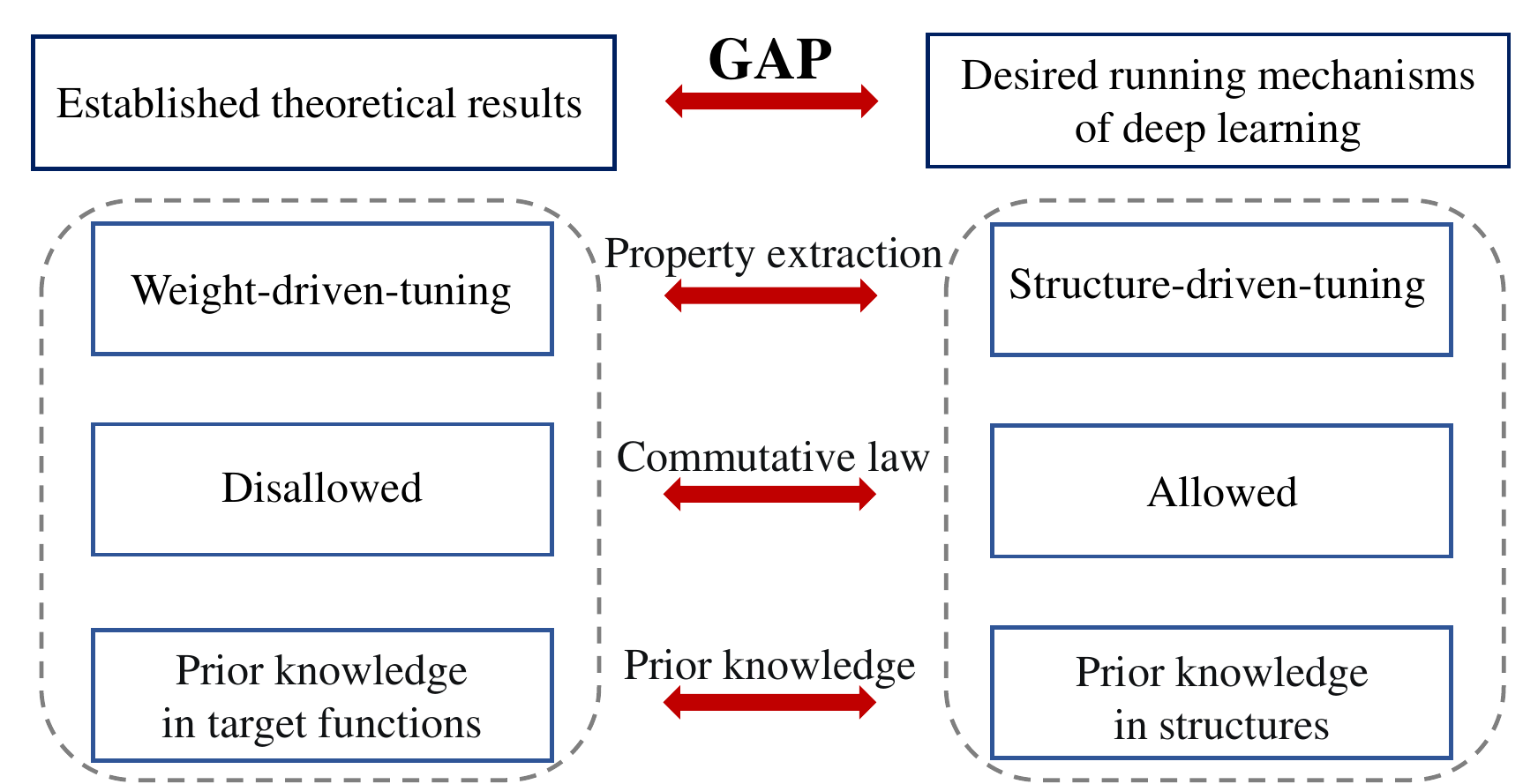}}
\caption{Three challenged gaps between the established theoretical results and desired running mechanisms of deep learning.}
\label{gaps}
\end{figure}

Noting these gaps, some theoretical analysis has been  carried out on analyzing the learning performance of structured deep nets with the intuition that there exist  some features that  can be extracted by the structure of deep nets. Typical examples includes \cite{chui2019deep} for deep nets with tree structures, \cite{petersen2020equivalence} for deep convolutional neural networks (DCNN) with multi-channel, \cite{oono2019approximation} for  DCNN with resnet-type structures, and   \cite{zhou2018deep,zhou2020universality} for deep convolutional neural networks (DCNN) with zero-padding. However, these results seem not to provide sufficient advantages of structured deep neural networks since they neither present   solid theoretical explanations on   why structured deep nets  outperform others, nor give theoretical guidances on which features can be extracted  by specific network structures. This motivates our study in this paper to show why and when DCNN performs better than DFCN and how to tailor the structure (zeor-padding, depth, filter length and pooling scheme) of DCNN for a specific learning task.

\begin{figure}[htbp]
\centerline{\includegraphics[width=0.5\textwidth]{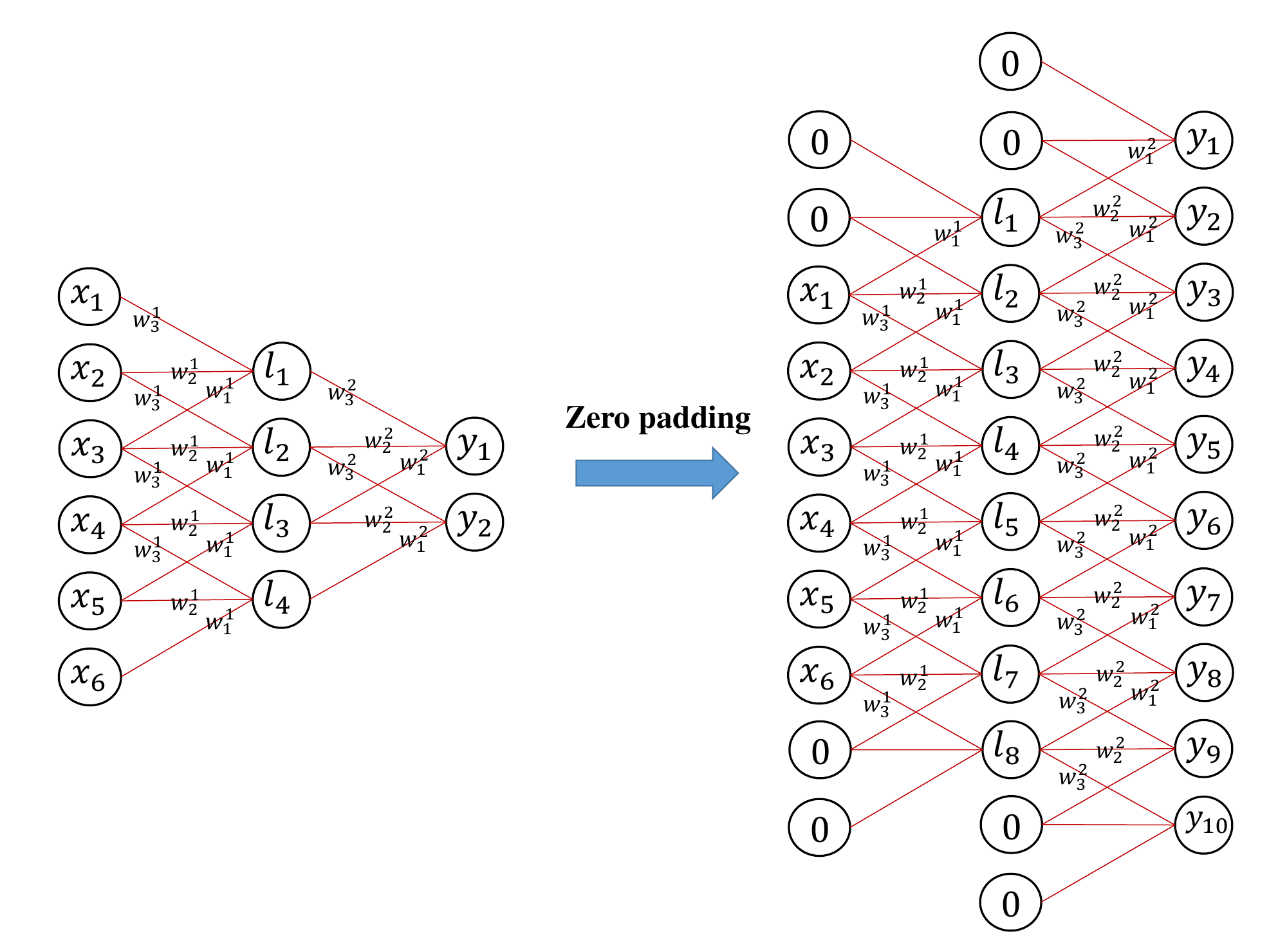}}
\caption{Zero-padding and network structure.}
\label{fig:zero-padding}
\end{figure}

We  study  the performance of DCNN with one-dimensional and one-channel convolution in
feature extraction  and learning.
As shown in Figure \ref{fig:zero-padding}, if there is not any zero-padding imposed to the  convolution structure, DCNN then
behaves a contracting nature (we call such DCNN as  cDCNN for short) in the sense that the width decreases with respect to the depth,    prohibiting its  universality in feature extraction \cite{hanin2017approximating,lin2022universal}.
Therefore, without the help of other structures such as fully-connected layers and zero-padding, there are numerous   features that cannot be extracted by  cDCNN. Noticing that additional fully-connected layer may destroy the convolutional structure of cDCNN, we are interested in conducting zero-padding as Figure \ref{fig:zero-padding}  and making the network possess an expansive nature. We call DCNN with zero-padding like Figure \ref{fig:zero-padding} as eDCNN and study the performance of eDCNN in feature extraction and learning via  considering   the following three problems:

$\diamond$ (P1): What is the role of zero-padding in eDCNN?

$\diamond$ (P2): How to specify the  pooling scheme to improve the performance of eDCNNs?

$\diamond$ (P3): Why and when are DCNNs better than widely studied deep fully connected networks (DFCNs)?

As shown in Figure \ref{fig:zero-padding}, zero-padding enlarges the  size of the  extracted features to guarantee the universality, and therefore plays a crucial role in DCNN.   Problem (P1) indeed refers to how to explore the important role of  zero-padding in enhancing the   representation and learning performances of cDCNN.  It is well known that pooling drives the converse direction as zero-padding  to shrink the size via designing suitable sub-sampling mechanism.
% Furthermore, via pooling, DCNN acts better in reflecting the translation invariance \cite{kayhan2020translation}.
Problem (P2) focuses on the role of pooling in eDCNN and studies its theoretical advantages in improving the generalization performance and enabling the feature extraction of eDCNN. With the help of a detailed analysis of the roles of zero-padding and pooling,   theoretical guarantees for the pros and cons of eDCNN, compared with the cDCNN and DFCN, should be
illustrated, which is the main topic of problem (P3). In a nutshell, the above three problems are crucial to understand  the running mechanisms of structured deep nets and probing  into the reason why structured deep nets perform better than  DFCN.

Our purpose in this paper is to provide answers to the aforementioned three problems from the representation theory \cite{bengio2013representation} and statistical learning theory  viewpoints  \cite{gyorfi2002distribution,cucker2007learning}. The main contributions can be concluded as follows:

$\bullet$ Methodology development: We study the role of zero-padding and pooling in eDCNN and find that with suitable pooling strategy, eDCNN  possesses the translation invariance  and  is universal in feature extraction.    These findings show that eDCNN   is better than DFCN in encoding the translation invariance in the network structure without sacrificing its peformance in extracting other features, and is also better than cDCNN  in term of universality in approximation and learning.
% since DCNN with zero-padding
% conducts feature extraction by a unified network structure rather than adding additional fully connected layers.
Therefore, we actually  provide an alternative network structure for deep learning with more clear running mechanisms and excellent performances in feature extraction and learning.

$\bullet$ Theoretical novelty: We provide solid theoretical verifications on the excellent performance of eDCNN   in   feature extraction and learning. From the feature extraction viewpoint, we prove that zero-padding enables eDCNN to be translation-equivalent and pooling enables it  to be translation-invariant. Furthermore, we   prove that encoding the translation-equivalence (or translation invariance) into the network, eDCNN performs not worse than DFCN in extracting other features in the sense that with similar number of free parameters,    eDCNN  can approximate DFCN within an arbitrary accuracy but not vice-verse. From the learning theory perspective, we prove that eDCNN  is capable of yielding universally consistent learner and encoding the translation-invariance.
% we prove that to learn   translation-invariant functions, DCNN with zero padding performs better than DFCN in terms of its  smaller generalization error bounds.

$\bullet$ Application guidance: By the aid of  theoretical analysis, we conduct several numerical simulations  on both toy data and real-world applications to show the excellent performance of eDCNN in feature extraction and learning. Our numerical results show that eDCNN  always perform not worse than DFCN and cDCNN. Furthermore, if the data process some  translation-invariance, then eDCNN  is better than other two networks, which provides a guidance on how to use eDCNN.

In summary, we study the feature extraction and learning performances of eDCNN  and provide theoretical answers to problems (P1-P3). For (P1), we prove that zero-padding enables DCNN to reflect the translation-equivalence and improve the performance  of DCNNs in  feature extraction. For (P2), we show that pooling plays a crucial role in reducing the number of parameters of eDCNN without sacrificing its performance in feature extraction. For (P3), we exhibit that if the learning task  includes some translation-equivalence or translation-invariance, eDCNN  is essentially better than other network structures, showing why and when DCNN outperforms DFCN.

The rest of the paper is organized as follows. In the next section, we introduce eDCNN. In Section \ref{Sec.zero-padding}, we study the role of zero-padding in eDCNN. In Section \ref{Sec.pooling},  theoretical analysis is carried out to demonstrate the importance of pooling in eDCNN. In Section \ref{Sec.feature-extraction}, we compare eDCNN with DFCN in feature extraction.
In Section \ref{Sec.Learning}, we verify the  universal consistency of eDCNN and show its translation-invariance in the learning process.
In Section  \ref{sec.Numerical}, numerical experiments concerning both toy simulations and real-world applications are made to illustrate the excellent performance of eDCNN   and verify our theoretical assertions.  In the last section, we draw some conclusions of our results. All proofs of the theoretical assertions  are postponed to Supplementary Material of this paper.

\section{Deep Convolutional Neural Networks}\label{sec.DCNN}
Let  $L\in\mathbb N$ be the depth of a deep net, $d_0=d$ and $d_\ell \in \mathbb{N}$  be the width of the $\ell$-th hidden layer for  $\ell=1,\dots,L$.
For any $\vec{v}\in\mathbb R^{d_{\ell-1}}$, define $\mathcal J_{\ell,W^\ell,\vec{b}^\ell}:\mathbb R^{d_{\ell-1}}\rightarrow\mathbb R^{d_\ell}$ as the affine operator  by
\begin{equation}\label{affine-mapping}
    \mathcal J_{\ell,W^\ell,\vec{b}^\ell}(x):=W^\ell  x+\vec{b}^\ell,
\end{equation}
where $W^\ell$  is a $d_\ell\times d_{\ell-1}$ weight matrix and  $\vec{b}^\ell\in\mathbb R^{d_\ell}$ is a bias vector.
Deep nets with depth $L$ is then defined  by
\begin{equation}\label{deep-net}
     \mathcal N_{d_1,\dots,d_L}(x)
     = \vec{a}^L\cdot  \sigma\circ \mathcal J_{L,W^L,\vec{b}^L} \circ \sigma  \circ \dots \circ \sigma\circ\mathcal J_{1,W^1,\vec{b}^1}(x),
\end{equation}
where $\vec{a}^L \in\mathbb R^{d_L}$, $\sigma(t):=\max\{t,0\}$ is the ReLU function,   $\sigma(x)=(\sigma(x^{(1)}),\dots,\sigma(x^{(d)}))^T$ for $x=(x^{(1)},\dots,x^{(d)})^T$ and $f\circ g(x)=f(g(x))$. Denote by $\mathcal H_{d_1,\dots,d_L}$ the set of deep nets with depth $L$ and width  $d_\ell$ in the $\ell$-th hidden layer.
In this way, deep learning can be regarded as  a  learning scheme that utilizes the feature mapping $\vec{V}_{d_1,\dots,d_L}:\mathbb R^d\rightarrow\mathbb R^{d_L}$ defined by
\begin{equation}\label{data-mappling-deep-learning}
     \vec{V}_{d_1,\dots,d_L}(x):= \sigma\circ \mathcal J_{L,W^L,\vec{b}^L} \circ \sigma  \circ \dots \circ \sigma\circ\mathcal J_{1,W^1,\vec{b}^1}(x)
\end{equation}
to extract data features at first and then uses a simple linear combination of the extracted  features for a  learning  purpose. The quality of learning  is then determined by  the feature mapping  $\vec{V}_{d_1,\dots,d_L}$, which depends on the depth $L$, width $d_\ell$,   bias vectors $\vec{b}^\ell$, and more  importantly, the structure of  weight matrices $W^\ell$, $\ell=1,\dots,L$.
The structure of the weight matrices actually determines the structure of deep nets. For example, full weight matrices correspond to DFCNs \cite{yarotsky2017error}, sparse weight matrices are related to deep  sparsely connected networks (DSCNs) \cite{petersen2018optimal}, Toeplitz-type weight matrices refer to eDCNNs.   Figure~\ref{4networks} presents four structures of deep nets and the associated weight matrices.
% Denote by $\mathcal H_{d_1,\dots,d_L}^{DFCN},  \mathcal H_{d_1,\dots,d_L}^{DSCN}, \mathcal H_{d_1,\dots,d_L}^{cDCNN}$  and $  \mathcal H_{d_1,\dots,d_L}^{eDCNN}$ the set of deep nets formed as \eqref{deep-net} with DFCNs, DSCNs, cDCNNs and eDCNNs, respectively.

\begin{figure*}[htbp]
\centerline{\includegraphics[width=\textwidth]{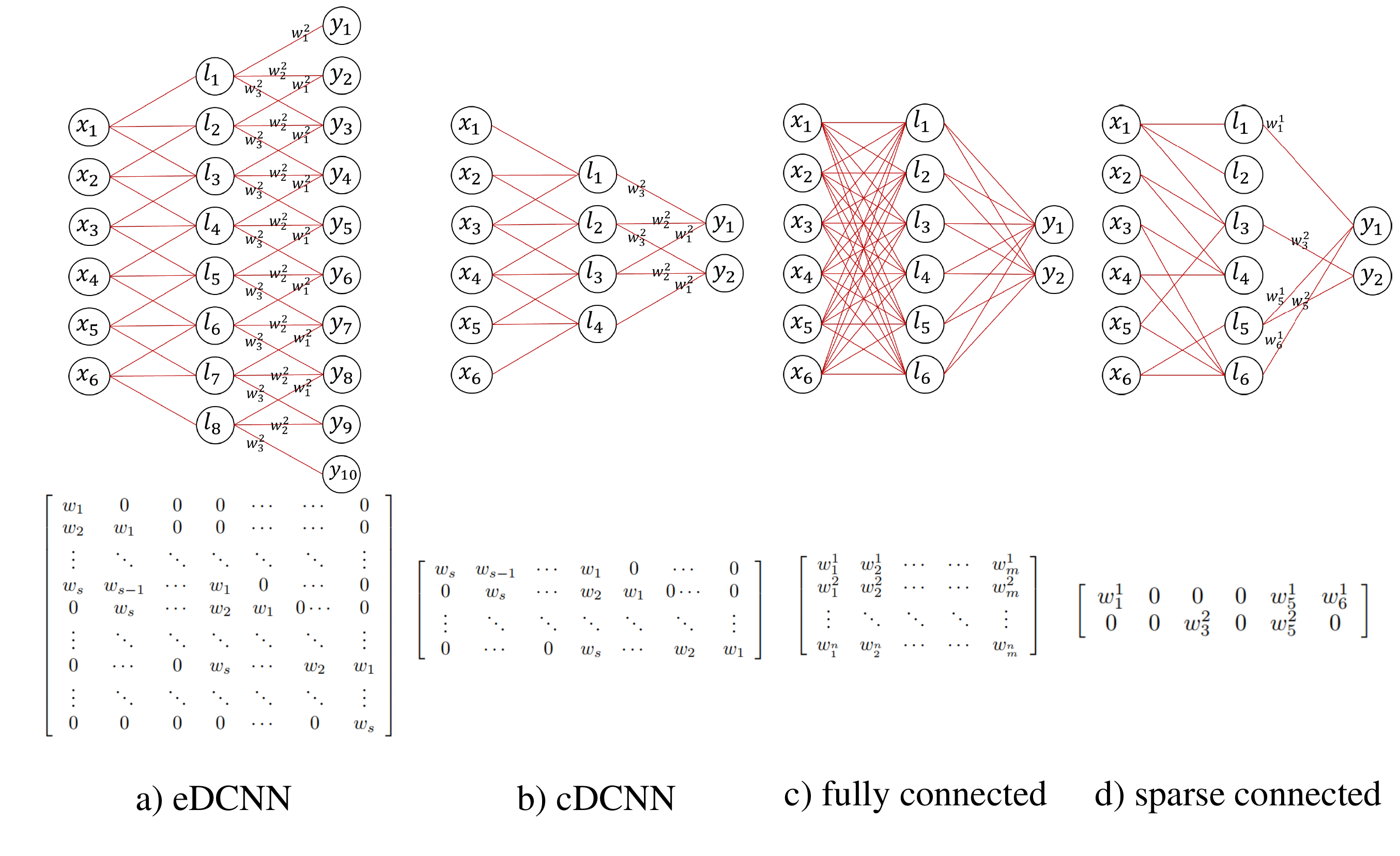}}
\caption{Four structures of eDCNN, cDCNN, DFCN and DSCN.}
\label{4networks}
\end{figure*}

The performances of DFCN and DSCN in feature extraction and learning have been extensively studied in theory\cite{yarotsky2017error,petersen2018optimal,lin2018generalization,schwab2019deep,mhaskar2016deep,schmidt2020nonparametric,kohler2016nonparametric,han2020depth,chui2020realization,lin2017does,safran2017depth,shaham2018provable}. In particular, \cite{shaham2018provable} proved that DFCN succeeds in capturing the manifold structure of the inputs; \cite{safran2017depth} verified that DFCN is capable of realizing numerous features such as the locality and $\ell_1$-radial, and \cite{petersen2018optimal} showed that DSCN benefits in extracting the piece-wise smooth features of the data. Despite these encouraging developments, there is a crucial challenge that the derived provable properties of DFCNs and DSCNs require totally different structures for different learning tasks, not only in different width and depth, but also in the structure of weight matrices. This makes the running mechanism of deep learning still a mystery, since it is quite difficult to design a unified DFCN or DSCN structure suitable for all learning tasks to embody the power of depth.

Noting this, cDCNN  comes into researchers' insights due to its unified structure and popularity in practice \cite{gu2018recent}.  For $s\in\mathbb N$, let $\vec{w}=(w_j)_{j=-\infty}^\infty$ be a filter of   length $s$, i.e.  $w_j\neq0$ only for $0\leq j\leq s$. For any $\vec{v}\in\mathbb R^{d'}$ and $d'\in\mathbb N$, define the one-dimensional and one-channel  convolution without zero padding by
\begin{equation}\label{convolution1}
    (\vec{w}\star\vec{v})_j=\sum_{k=j-s}^{j}w_{j-k}v_{k+s},\qquad j=1,\dots,d'-s.
\end{equation}
For $\vec{v}\in\mathbb R^{d_{\ell-1}}$ and $d_\ell=d_{\ell-1}-s$, define   the contracting convolution operator $\mathcal C^{\star}_{\ell,\vec{w}^\ell,\vec{b}^\ell}:\mathbb R^{d_{\ell-1}}\rightarrow\mathbb R^{d_\ell}$ as
\begin{equation}\label{convolutional-mapping-cDCNN}
    \mathcal C_{\ell,\vec{w}^\ell,\vec{b}^\ell}^\star(\vec{v}):=\vec{w}^\ell\star\vec{v}+\vec{b}^\ell
\end{equation}
for $\vec{w}^\ell$ supported on $\{0,1,\dots,s\}$  and  bias vector  $\vec{b}^\ell\in\mathbb R^{d_\ell}$.  Then cDCNN   is defined by
\begin{equation}\label{cDCNN}
     \mathcal N_{L,s}^{\star}(x)
     := \vec{a}_L\cdot  \sigma\circ \mathcal C_{L,\vec{w}_L,\vec{b}_L}^{\star} \circ \sigma  \circ \dots \circ \sigma\circ\mathcal C^{\star}_{1,\vec{w}_1,\vec{b}_1}(x).
\end{equation}
% Since the convolutional structure presented in \eqref{convolution1} exhibits a contracting nature, i.e. the dimension of $\vec{w}\star\vec{v} $ is smaller than the original vector $\vec{v}$, we call DCNN defined by \eqref{cDCNN} as  contracting DCNN (cDCNN for short).
Due to the contracting nature of cDCNN, we get $d_\ell\leq d$ for all $\ell=1,\dots,L$. This makes cDCNN even not be universal  in approximation, since the smallest width of a deep net required to guarantee the universality is $d+1$ according to the minimal-width theory of deep nets approximation established in \cite{hanin2017approximating}. As a result, though cDCNN is able to extract certain specific features, it is impossible to use the unified cDCNN structure for all learning tasks. A  feasible remedy for this drawback of cDCNN is to add several fully connected layers after the convolutional layers  to enhance the versatility. The problem is, however, that how to set the width and depth of the fully connected layers becomes a complication, making the network structure also unclear. Furthermore, the added  fully connected layers may destroy several important properties such as the translation-equivalence, translation-invariance and calibration invariance of cDCNNs \cite{kayhan2020translation}, making it difficult to theoretically analyze the role of convolution operator \eqref{convolutional-mapping-cDCNN} in the learning process.

Another approach to circumvent the non-universality of cDCNN is to use zero-padding to widen the network, just as \cite{zhou2020universality,zhou2020theory,lin2022universal} did. For any $\vec{v}\in\mathbb R^{d'}$, define
\begin{equation}\label{convolution}
    (\vec{w}*\vec{v})_j=\sum_{k=1}^{d'}w_{j-k}v_k,\qquad j=1,\dots,d'+s.
\end{equation}
% According to \eqref{convolution},    there is a $d'\times (d'+s)$ sparse Toeplitz-type matrix  $\widetilde{W}$   such that \cite{zhou2020universality}
% \begin{equation}\label{sparse-structure}
%     \vec{w}* \vec{v}=\widetilde{W}\vec{v},\qquad \forall \vec{v}\in\mathbb R^D.
% \end{equation}
Compared with \eqref{convolution1}, zero-padding is imposed in the convolution operation, making the
 convolution  defined by \eqref{convolution} have an expansive   nature,   just as Figure~\ref{fig:zero-padding} purports to show.  For $\vec{v}\in\mathbb R^{d_{\ell-1}}$ and $d_\ell=d_{\ell-1}+s$,
denote the expansive convolution operator $\mathcal C_{\ell,\vec{w}^\ell,\vec{b}^\ell}:\mathbb R^{d_{\ell-1}}\rightarrow\mathbb R^{d_\ell}$ by
\begin{equation}\label{convolutional-mapping}
    \mathcal C_{\ell,\vec{w}^\ell,\vec{b}^\ell}(\vec{v}):=\vec{w}^\ell* \vec{v}+\vec{b}^\ell
\end{equation}
for $\vec{w}^\ell$ supported on $\{0,1,\dots,s\}$  and  bias vector  $\vec{b}^\ell\in\mathbb R^{d_\ell}$. eDCNN  is  then mathematically defined by
\begin{equation}\label{eDCNN}
     \mathcal N_{L,s}(x)
     = \vec{a}_L\cdot \vec{V}^{eDCNN}_{d_1,\dots,d_L},
\end{equation}
where
\begin{equation}\label{def.V}
      \vec{V}^{eDCNN}_{d_1,\dots,d_L}:=\sigma\circ \mathcal C_{L,\vec{w}_L,\vec{b}_L} \circ \sigma  \circ \dots \circ \sigma\circ\mathcal C_{1,\vec{w}_1,\vec{b}_1}(x).
\end{equation}
Denote by $\mathcal H_{L,s}$ the set of all eDCNNs formed as \eqref{eDCNN}.

% Due to the expansive nature of eDCNN, it was proved in \cite{zhou2020universality} that eDCNN is the universal approximant in the sense that,  without adding any fully connected layers,  if there are sufficient many hidden layers, eDCNN is capable of approximating an arbitrary integrable  functions  into an arbitrary accuracy. Therefore, eDCNN makes it possible to maintain the translation-equivalence, translation-invariance and calibration invariance of the convolution structure which is beyond the capability of DFCN, and guarantee the universality that is impossible for cDCNN.

It is well known that DFCNs do not always obey the commutative law in the sense that there are infinitely many full matrices $A$ and $B$ such that $AB\neq BA$. This
means that the order of the affine operator defined by \eqref{affine-mapping} affects the quality of feature extraction significantly and implies that there must be a strict order for DFCNs to extract different features.
Changing the order of hidden layers thus leads to totally different running mechanisms  of DFCNs. Differently, the convolutional operators defined in \eqref{convolution1} and \eqref{convolution} succeed in breaking through the above bottlenecks of the affine operator by means of admitting the commutative law.

\begin{lemma}\label{lemma:communication}
Let $s\in\mathbb N$. If $\vec{w}^{1},\vec{w}^{2} $ are supported on  $\{0,\dots,s\}$, then
\begin{equation}\label{culmulative-law}
    \vec{w}^{1}\otimes\vec{w}^{2}
   =\vec{w}^{2}\otimes\vec{w}^{1},
\end{equation}
where $\otimes$ denotes either the contracting convolution $\star$ in \eqref{convolution1} or the expansive convolution $*$ in \eqref{convolution}.
\end{lemma}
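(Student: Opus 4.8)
The plan is to reduce the identity \eqref{culmulative-law} to the commutativity of the underlying discrete convolution sum, which itself follows from a single change of summation variable. The cleanest framing is to identify each filter $\vec{w}^i$ supported on $\{0,\dots,s\}$ with the polynomial $p_i(z)=\sum_{j=0}^{s}w_j^i z^j$, and to observe that convolving two such filters produces exactly the coefficient sequence of the product $p_1(z)p_2(z)$, a polynomial of degree at most $2s$. Since multiplication in $\mathbb{R}[z]$ is commutative, $p_1 p_2 = p_2 p_1$, and reading off coefficients yields $\vec{w}^1\otimes\vec{w}^2=\vec{w}^2\otimes\vec{w}^1$. This viewpoint handles both $*$ and $\star$ at once, since the two operations differ only in which block of product coefficients is retained; the commutativity lives entirely in the bilinear sum and is insensitive to the truncation convention.

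For the expansive convolution $*$ of \eqref{convolution}, I would write out $(\vec{w}^1*\vec{w}^2)_j=\sum_k w^1_{j-k}w^2_{k}$ and apply the substitution $m=j-k$ to obtain $\sum_m w^1_{m}w^2_{j-m}=(\vec{w}^2*\vec{w}^1)_j$. The support condition guarantees that only indices with $0\le k\le s$ and $0\le j-k\le s$ contribute, so the sequence is supported on $\{0,\dots,2s\}$ and the reindexing maps the effective summation range bijectively onto itself. Here no boundary term is lost and the argument is immediate.

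For the contracting convolution $\star$ of \eqref{convolution1}, the summation window $k=j-s,\dots,j$ depends explicitly on $j$, so the same substitution must be checked to preserve this window. I would verify that after reflecting the index, the truncated sum remains symmetric in $\vec{w}^1$ and $\vec{w}^2$; this is precisely the step where the common support on $\{0,\dots,s\}$ is essential, since it forces every filter tap that the reflection would push outside the nominal window to vanish, so no term is gained or dropped asymmetrically. Invoking the support condition to annihilate all such edge contributions makes the two truncated sums coincide termwise.

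I expect the main obstacle to be exactly this boundary bookkeeping for $\star$: one must confirm that the index reflection never silently discards a nonzero tap near the ends of the $j$-dependent window. The expansive case is essentially free, so the entire proof hinges on making the contracting case's reindexing airtight, after which \eqref{culmulative-law} follows for both choices of $\otimes$.
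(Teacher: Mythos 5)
Your proposal is correct and follows essentially the same route as the paper: the paper likewise proves the $*$ case via the polynomial symbol $\tilde{\vec{w}}=\sum_k w_k z^k$ and commutativity of polynomial multiplication, and handles the $\star$ case by expanding the truncated sum and using the common support $\{0,\dots,s\}$ to rewrite $(\vec{w}^1\star\vec{w}^2)_j$ in the manifestly symmetric form $\sum_{m}w^1_m w^2_{j+s-m}$. The boundary bookkeeping you flag for $\star$ is exactly the step the paper carries out, so there is no gap.
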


The commutative law established in Lemma \ref{lemma:communication}   shows that  DCNN  is able to extract different features without considering which feature should be extracted at first, which is totally different from DFCN and presents the outperformance of the convolutional structure  over the classical   inner product structure in DFCN derived from the affine mapping \eqref{affine-mapping}.

We then show the advantage of eDCNN over cDCNN in approximation.   Due to the contracting nature, the depth of cDCNN is always smaller than $d/s$, making the maximal number of  free parameters of cDCNN  not larger than $d$, which is impossible to guarantee the universality \cite{hanin2017approximating}.   The following lemma provided in \cite[Theorem 1]{zhou2020universality} illustrates the universality of eDCNN.

\begin{lemma}\label{Lemma:universality}
 Let $2\leq s\leq d$. There holds
$$
   \lim_{L\rightarrow\infty}
   \inf_{g\in \mathcal H_{L,s}}\|f-g\|_{C(\mathbb I^d)}=0,\qquad\forall f\in C(\mathbb I^d).
$$
\end{lemma}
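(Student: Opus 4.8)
The plan is to reduce the universality of eDCNN to the classical universal approximation theorem for shallow ReLU networks, and then to show that any such shallow net can be realized inside $\mathcal H_{L,s}$ once $L$ is large enough. First I would invoke the density of shallow ReLU networks in $C(\mathbb I^d)$: given $f\in C(\mathbb I^d)$ and $\varepsilon>0$, there exist $N\in\mathbb N$, coefficients $c_j\in\mathbb R$, directions $\alpha_j\in\mathbb R^d$ and thresholds $\theta_j\in\mathbb R$ with $\|f-\sum_{j=1}^N c_j\sigma(\alpha_j\cdot x+\theta_j)\|_{C(\mathbb I^d)}<\varepsilon$. It then suffices to produce, for every sufficiently large $L$, some $g\in\mathcal H_{L,s}$ reproducing this shallow net, so that $\inf_{g\in\mathcal H_{L,s}}\|f-g\|_{C(\mathbb I^d)}\le\varepsilon$; letting $\varepsilon\to0$ (and correspondingly $L\to\infty$) yields the stated limit.

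The core step is a factorization argument that expresses one long convolution as a cascade of short ones. I would stack the directions $\alpha_1,\dots,\alpha_N$ into a single finite ``master'' filter $W$ so that the expansive convolution $W*x$ carries, at prescribed output coordinates, all the inner products $\alpha_j\cdot x$. Identifying a filter with its generating polynomial $\widehat W(z)=\sum_k W_k z^k$, convolution corresponds to polynomial multiplication, which is the algebraic content behind the commutativity in Lemma \ref{lemma:communication}. Because every real polynomial factors into real linear and quadratic factors, and $s\ge2$ lets each factor have degree at most $s$, one can write $\widehat W=\prod_{\ell=1}^{L}\widehat{w^\ell}$ with each $w^\ell$ supported on $\{0,\dots,s\}$, once $L$ (roughly $L(s-1)$ against the length of $W$) is large enough. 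Translating back, $W*x=w^{L}*\cdots*w^{1}*x$, so the required linear features are produced by $L$ successive length-$s$ convolutions, i.e.\ exactly the convolutional backbone of an eDCNN.

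The remaining difficulty is that an eDCNN applies $\sigma$ after every layer, whereas the factorization reproduces the linear map $x\mapsto W*x$ only if the intermediate activations act affinely. I would resolve this by a linearization-by-bias trick: since $x\in\mathbb I^d$ is bounded and the filters $w^\ell$ are fixed, every intermediate pre-activation is uniformly bounded, so choosing the bias vectors $\vec{b}^\ell$ large enough forces each argument of $\sigma$ in layers $1,\dots,L-1$ to be nonnegative; there $\sigma$ acts as the identity and the cascade computes $W*x$ up to an explicitly known constant offset. At the last layer I apply the genuine ReLU to obtain $\sigma(\alpha_j\cdot x+\theta_j)$ at the designated coordinates, absorbing the accumulated offsets and the thresholds $\theta_j$ into the final bias, and then take the output vector $\vec{a}_L$ to select and weight these coordinates by the $c_j$. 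This reproduces the shallow net inside $\mathcal H_{L,s}$.

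I expect the main obstacle to be the second step, namely the simultaneous bookkeeping of the factorization and the coordinate placement. One must verify that a single master filter can encode all $N$ directions at separated output positions without the convolution's spill-over corrupting the designated coordinates, and that the polynomial factorization into degree-$\le s$ factors (which genuinely requires $s\ge2$ to accommodate irreducible real quadratics) can be arranged so that the number of factors is controlled by $L$, with trivial identity-type factors inserted to reach any prescribed depth. Once this alignment is in place, the linearization and output-layer steps are routine, and the density in $C(\mathbb I^d)$ follows.
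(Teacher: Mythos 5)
Your proposal is correct and takes essentially the same route as the paper's source for this lemma: the paper does not reprove it but quotes it from Theorem~1 of \cite{zhou2020universality}, whose argument (mirrored by the machinery in this paper's Appendix, namely Lemma~\ref{Lemma:convolution-fraction}, Lemma~\ref{Lemma:factorization} and Lemma~\ref{Lemma:Induction}) proceeds exactly as you describe. That is, one stacks the directions of a dense shallow ReLU net into a master filter, factors its symbol polynomial into real factors of degree at most $s$ (this is where $s\ge 2$ enters), linearizes the intermediate ReLUs by sufficiently large constant biases, and uses the final bias and output vector to recover the shallow net at designated coordinates.
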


The universality of eDCNN shows that
 with sufficiently many hidden layers and appropriately tuned weights, eDCNN can extract any other features, which shows its advantage over  cDCNN in approximation.

\section{The Power of Zero-Padding in feature extraction}\label{Sec.zero-padding}
In this section, we   analyze the role of zero-padding to answer problem (P1). Our study starts with the  bottleneck of the contracting convolution structure \eqref{convolution1} in representing the translation-equivalence.
 We say that  a $d$-dimensional vector $\vec{v}_{p,d,j}$ is supported on $\{j,j+1,\dots,j+p-1\}$ with $j+p\leq d+1$, if
\begin{equation}\label{translation}
   \vec{v}_{p,d,j}= (\overbrace{0,\dots,0}^{j-1},v_1,\dots,v_p,\overbrace{0,\dots,0}^{d-p-j+1})^T.
\end{equation}
Let  $A_{j,d}$ be the $d\times d$  matrix whose $(j+i,1+i)$- components   with $i=0,1,\dots,d-j$ are 1 while the others are 0. Then, it is easy to check that $A_{j,d}$ is a
 translation operator (or matrix) satisfying
\begin{equation}\label{role-of-transi}
    A_{j,d} \circ \vec{v}_{p,d,1}=\vec{v}_{p,d,j}.
\end{equation}
We present  definitions of translation-equivalence and translation-invariance \cite{kayhan2020translation} as follows.

\begin{definition}\label{Def:translation}
     Let $\mathcal G_{d',d}:\mathbb R^d\rightarrow\mathbb R^{d'}$ be a linear operator and $A_{j,d}$ be the translation operator satisfying \eqref{role-of-transi}. If
$$
   \mathcal  G_{d',d}\circ A_{j,d}\circ \vec{v}_{p,d,1}=A_{j,d'}\circ\mathcal G_{d',d} \circ \vec{v}_{p,d,1}, \quad \forall j=1,\dots,d-p,
$$
then $\mathcal G_{d',d}$ is said to be  translation-equivalent.
Furthermore, if
$$
   \mathcal  G_{d',d}\circ A_{j,d}\circ \vec{v}_{p,d,1}= \mathcal G_{d',d} \circ\vec{v}_{p,d,1},\quad \forall j=1,\dots,d-p,
$$
then the linear operator  $\mathcal G_{d',d}$ is said to be translation-invariant.
\end{definition}

\begin{figure}[htbp]
\centerline{\includegraphics[width=0.5\textwidth]{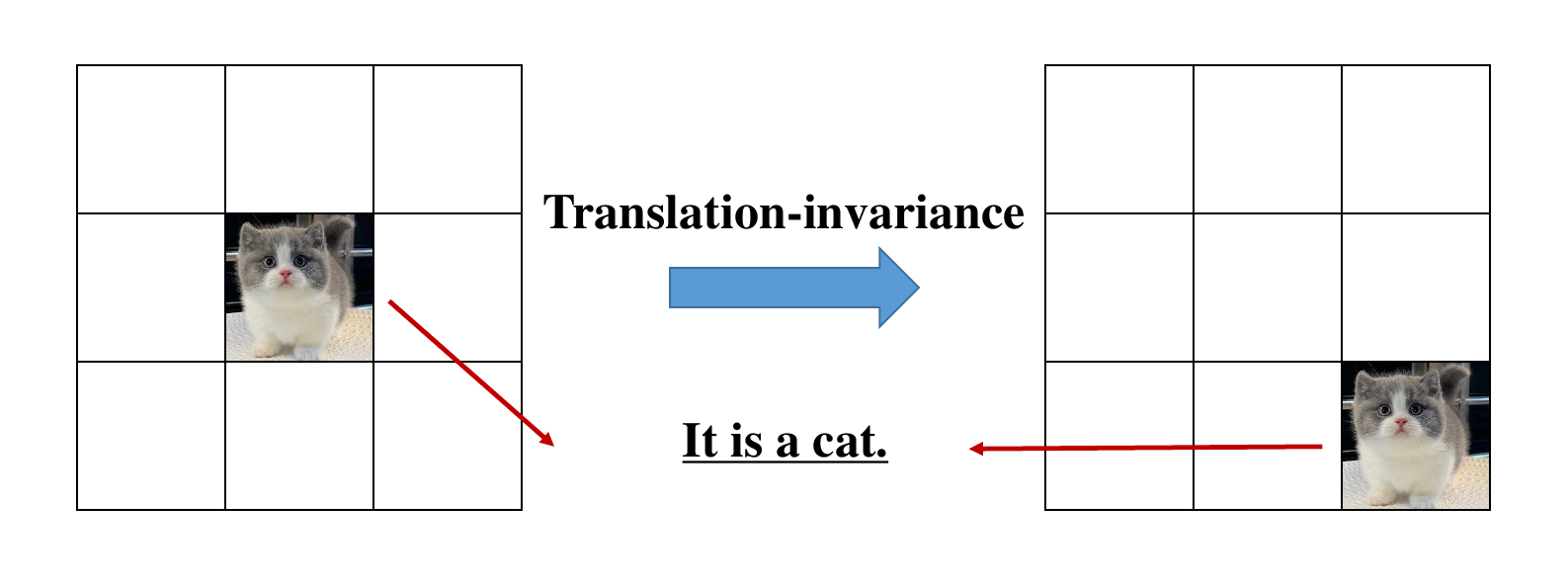}}
\caption{Translation-invariance of image recognition.}
\label{invariance}
\end{figure}

Translation-equivalence and translation-invariance are important features in numerous application regions \cite{kayhan2020translation,bengio2009learning}. Taking image recognition for example, as shown in Figure~\ref{invariance}, a judgement of the cat is easily to  be made independent of  the location of  inputs, showing the translation-invariance of the learning task. Unfortunately, the classical cDCNN is not always capable of    encoding the translation-equivalence
into the network structure, just as the the following lemma exhibits.

\begin{lemma}\label{lemma:bad-translation-equi}
Let $d'\in\mathbb N$, $1\leq p\leq d'$ and $2\leq s\leq d'$. There exist   a $\vec{w}^\ell$ supported on $\{0,1,\dots,s\}$ and some $j\in\{1,\dots,d'-p+1\}$ such
that
$$
    A_{j,d'-s}\circ(\vec{w}^\ell\star\vec{v}_{p,d',1})\neq \vec{w}^\ell\star(A_{j,d'}\circ \vec{v}_{p,d',1}).
$$
\end{lemma}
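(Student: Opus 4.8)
The plan is to refute translation-equivariance for the contracting convolution by exhibiting an explicit triple $(\vec{w}^\ell, j, \vec{v}_{p,d',1})$ for which the two sides disagree in a single, easily identified coordinate. The conceptual reason the identity must break is that the convolution in \eqref{convolution1} drops the output length from $d'$ to $d'-s$, truncating near the right boundary: shifting the input first (the right-hand side) lets the convolution window pick up signal content that shifting the already-truncated output (the left-hand side) can only replace by zeros. I would therefore locate a coordinate where the left-hand side is forced to vanish while the right-hand side is not.

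First I would record the one structural fact needed about the translation operator: by the definition of $A_{j,d}$ preceding \eqref{role-of-transi}, its first $j-1$ rows are identically zero, so for every $\vec{u}$ the vector $A_{j,d}\circ\vec{u}$ satisfies $(A_{j,d}\circ\vec{u})_m=0$ for $m=1,\dots,j-1$. Applied on the left with dimension $d'-s$, this gives $[A_{j,d'-s}\circ(\vec{w}^\ell\star\vec{v}_{p,d',1})]_m=0$ for all $m\le j-1$, irrespective of the filter. It then suffices to choose $\vec{w}^\ell$ and $j$ so that the right-hand side $\vec{w}^\ell\star\vec{v}_{p,d',j}$ (using $A_{j,d'}\circ\vec{v}_{p,d',1}=\vec{v}_{p,d',j}$ from \eqref{role-of-transi}) is nonzero at one such coordinate $m\le j-1$.

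Concretely, taking the translation amount $j=2$ (legitimate whenever $d'\ge s+1$ and $d'\ge p+1$, so that $2\in\{1,\dots,d'-p+1\}$ and the first output coordinate exists) together with the single-tap filter $w^\ell_{s-1}=1$ and all other taps zero — which is supported on $\{0,\dots,s\}$ because $s\ge2$ — I would evaluate the first output coordinate of $\vec{w}^\ell\star\vec{v}_{p,d',2}$ by setting the output index in \eqref{convolution1} to $1$. Using $(\vec{v}_{p,d',2})_n=v_{n-1}$ for $n\in\{2,\dots,p+1\}$ and zero otherwise, the only surviving term in the sum is the one whose filter index equals $s-1$, which yields $(\vec{w}^\ell\star\vec{v}_{p,d',2})_1=v_1$. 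Since $v_1\neq0$ for a genuine support beginning at position $1$, the right-hand side is nonzero in coordinate $1=j-1$, whereas the left-hand side vanishes there, giving the claimed inequality.

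The only real obstacle is bookkeeping: aligning the index range $k\in[j-s,j]$ in \eqref{convolution1} with the shift indexing of $A_{j,d}$ at the truncation boundary, and checking that the chosen $j$ lies in the admissible set $\{1,\dots,d'-p+1\}$ while the coordinate $j-1$ is a valid output index in $\{1,\dots,d'-s\}$. In the nondegenerate regime $d'\ge\max\{s,p\}+1$ both are immediate; the borderline cases $d'=s$ (empty convolution output) and $p=d'$ are degenerate and would be noted or excluded separately.
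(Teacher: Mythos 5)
Your proof is correct and reaches the same conclusion as the paper, but by a genuinely different mechanism. The paper's own argument (the sentence following the lemma) is a global support count: it chooses a filter so that $\vec{w}^\ell\star\vec{v}_{p,d',1}$ has few nonzero entries (the convolution of an edge-supported vector is truncated), notes that $A_{j,d'-s}$ preserves that count, and observes that for a shift $s\le j\le d'-p-s$ deep into the interior the right-hand side has strictly more nonzero entries; the two sides therefore cannot agree. You instead evaluate both sides at the single coordinate $m=1$: the structure of $A_{2,d'-s}$ forces the left-hand side to vanish there, while the one-tap filter $w_{s-1}=1$ makes $(\vec{w}^\ell\star\vec{v}_{p,d',2})_1=v_1\neq 0$. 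Your route buys two things: it is fully explicit (the paper's counts of $s$ versus $2s-1$ nonzero items are stated without verification and do not obviously match a generic computation, whereas your single-entry evaluation is checkable line by line), and it works with the small shift $j=2$ under the mild condition $d'\ge\max\{s,p\}+1$, whereas the paper's interior-shift argument implicitly needs $d'\ge p+2s$ for the range $s\le j\le d'-p-s$ to be nonempty. The degenerate cases you flag ($s=d'$, giving an empty output, and $p=d'$, which leaves only the identity shift $j=1$) are genuine gaps in the lemma as stated and afflict the paper's argument at least as much as yours; excluding them explicitly, as you propose, is the right call. One small point to make airtight in a final write-up: the assumption $v_1\neq 0$ should be stated as part of reading $\vec{v}_{p,d',1}$ as genuinely supported on $\{1,\dots,p\}$, since the lemma's quantification over the vector is left implicit in the paper.
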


It is easy to check that there is a $\vec{w}^\ell$  supported on $\{0,1,\dots,s\}$ such that there are $s$ non-zero items in $\vec{w}^\ell\star\vec{v}_{p,d',1}$ and consequently $s$ non-zero  items in $A_{j,d'-s} \circ (\vec{w}^\ell\star\vec{v}_{p,d',1})$,
but  for $s\leq j\leq d'-p-s$ there are $2s-1$ non-zero items in $\vec{w}^\ell\star(A_{j,d'}\circ \vec{v}_{p,d',1})$, which proves the above lemma directly. Lemma \ref{lemma:bad-translation-equi} does not show that cDCNN  always cannot encode the translation-invariance, but demonstrates that cDCNN limits to do this when the support of the target   lies on the edges.
Our   result in this section is to show that zero-padding defined by \eqref{convolution} is capable of breaking through the above drawback of cDCNN.

\begin{proposition}\label{Proposition:translation-equiv}
 Let $L\in\mathbb N$, $1\leq p\leq d$, $2\leq s\leq d$, $d_0=d$, $d_\ell=d+\ell s$ and  $\vec{w}^\ell$  be supported on $\{0,1,\dots,s\}$, $\ell=1,\dots,L$. For any $1\leq j\leq d-p+1$, there holds
\begin{eqnarray}\label{trans-equiv}
  &&\vec{w}^L*\dots *\vec{w}^1*(A_{j,d}\circ\vec{v}_{p,d,1})\nonumber\\
  &=&
  \vec{w}^L*\dots * (A_{j,d_1}\circ(\vec{w}^1*\vec{v}_{p,d,1})) \nonumber\\
  &=&\dots=
  A_{j,d_L}\circ(\vec{w}^L*\dots *\vec{w}^1*\vec{v}_{p,d,1})
 .
\end{eqnarray}
\end{proposition}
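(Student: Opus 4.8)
The plan is to reduce the multi-layer identity \eqref{trans-equiv} to a single-layer shift-equivariance statement for the expansive convolution \eqref{convolution}, and then to iterate it layer by layer, peeling off one filter at a time from the inside out. Concretely, the first step is to prove that for any vector $\vec{v}$ supported on $\{1,\dots,q\}$ with $q+j-1\leq d'$, the expansive convolution commutes with the translation operator in the sense that $\vec{w}*(A_{j,d'}\circ\vec{v})=A_{j,d'+s}\circ(\vec{w}*\vec{v})$. I would prove this by a direct computation from \eqref{convolution}: writing $\vec{u}=A_{j,d'}\circ\vec{v}$, so that $u_k=v_{k-(j-1)}$ by \eqref{role-of-transi}, and substituting into $(\vec{w}*\vec{u})_m=\sum_k w_{m-k}u_k$, the index shift $k\mapsto k-(j-1)$ turns this into $(\vec{w}*\vec{v})_{m-(j-1)}$, which is exactly the $m$-th component of $A_{j,d'+s}\circ(\vec{w}*\vec{v})$. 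The essential point, and the reason zero-padding is used, is that \eqref{convolution} outputs all indices $m=1,\dots,d'+s$, so it behaves as a genuine untruncated convolution on the integers; a full convolution is shift-equivariant, and the only thing that could spoil equivariance is boundary truncation, which the expansion precludes. The support hypothesis $q+j-1\leq d'$ guarantees that $A_{j,d'}$ merely shifts $\vec{v}$ without clipping any nonzero entry off the edge of $\mathbb{R}^{d'}$.

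The second step is to track how the support and the admissible range of $j$ propagate through the layers. Since $\vec{w}^\ell$ is supported on $\{0,\dots,s\}$ and $\vec{v}_{p,d,1}$ on $\{1,\dots,p\}$, a straightforward induction shows that $\vec{w}^\ell*\dots*\vec{w}^1*\vec{v}_{p,d,1}$ is supported on $\{1,\dots,p+\ell s\}$ and lives in $\mathbb{R}^{d_\ell}$ with $d_\ell=d+\ell s$. The key observation is that the margin $d_\ell-(p+\ell s)+1=d-p+1$ is invariant in $\ell$: the support grows by $s$ at each layer, but so does the ambient dimension, so the hypothesis $1\leq j\leq d-p+1$ is exactly the support condition required by the single-layer step at every level. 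This is precisely where the expansive nature of \eqref{convolution} pays off relative to the contracting convolution \eqref{convolution1}, whose shrinking domain eventually clips the shifted support and breaks equivariance, as already witnessed by Lemma \ref{lemma:bad-translation-equi}.

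The final step is the induction that assembles the chain \eqref{trans-equiv}. Using associativity of the convolution, I would rewrite each intermediate expression as $\vec{w}^L*\dots*\vec{w}^{\ell+1}*\big(A_{j,d_\ell}\circ(\vec{w}^\ell*\dots*\vec{w}^1*\vec{v}_{p,d,1})\big)$ and apply the single-layer identity of Step 1 to the innermost factor $A_{j,d_\ell}\circ(\dots)$, which is legitimate because Step 2 certifies that $A_{j,d_\ell}$ does not clip the support of $\vec{w}^\ell*\dots*\vec{w}^1*\vec{v}_{p,d,1}$. This moves the translation operator one filter outward and produces the next expression in the chain, the common outer factors $\vec{w}^L*\dots*\vec{w}^{\ell+1}$ being identical on both sides so that equality is preserved. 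Iterating from $\ell=0$ up to $\ell=L$ transports $A_j$ from the input all the way to the output and yields \eqref{trans-equiv}. I expect the main obstacle to be purely bookkeeping: keeping the convolution indices, the support intervals, and the shift amount $j-1$ consistent across the changing dimensions $d_\ell$, and verifying that the invariant margin $d-p+1$ indeed reproduces the single-layer support condition at each step.
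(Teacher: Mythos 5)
Your proposal is correct and follows essentially the same route as the paper: the paper's Lemma \ref{Lemma:translation-equivalence} is exactly your single-layer commutation identity (stated via the Toeplitz matrix $T_{d_\ell,d_{\ell-1}}^{\vec w}$ rather than directly from \eqref{convolution}, and proved by the same componentwise index shift), and Proposition \ref{Proposition:translation-equiv} is then obtained by iterating it layer by layer just as in your Step 3. Your Step 2, making explicit that the support grows by $s$ per layer while the margin $d_\ell-(p+\ell s)+1=d-p+1$ stays invariant so the single-layer hypothesis remains valid at every level, is a point the paper leaves implicit, and is a worthwhile addition.
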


The proof of Proposition \ref{Proposition:translation-equiv} will be given in Appendix.  Proposition \ref{Proposition:translation-equiv} shows that if a vector $v_{p,d,j}$ is translated to $v_{p,d,j'}$ for $j'\neq j$, then the output of the convoluted vector $\vec{w}^L*\dots *\vec{w}^1*\vec{v}_{p,d,j}$ is also translated with step $j'-j$. This illustrates that without tuning weights, the expansive convolution structure succeeds in encoding the translation-equivalence  feature of the data, showing its outperformance over   the inner product structure in DFCNs and the contracting convolutional structure in cDCNNs.

\section{The Importance of Pooling in eDCNN}\label{Sec.pooling}

% In this section, we study the role of pooling in eDCNN in demonstrating its capacity-reduction, translation-invariance driven and network-stacking promoter properties.

% Though zero-padding enables eDCNN to encode the translation-equivalence in the network structure,  it is unclear whether eDCNN is better than other network structures such as DFCN and DSCN in extracting other features.  Different from cDCNN that conducts pooling to reduce the extractor features, the pooling strategy designed for eDCNN is more technical  to  reduce the capacity without sacrificing the performance of eDCNN in feature extraction.

In this section, we borrow the idea of  the location-based pooling (down-sampling) scheme from \cite{zhou2020theory}  to equip eDCNN to reduce the size of extracted features and enhance its performance of encoding the translation-invariance, simultaneously.  For any $  d'\in\mathbb N$, the location-based pooling  scheme  $\mathcal S_{d',u,j}:\mathbb R^{d'}\rightarrow \mathbb R^{[d'/u]}$ for a vector $\vec{v}\in\mathbb R^{d'}$ with   scaling parameter $u$ and location parameter $0\leq j\leq d'$  is defined by
\begin{equation}\label{def.down-sampling}
    \mathcal S_{d',u,j}(\vec{v})=(v_{ku+j})_{k=1}^{[d'/u]},
\end{equation}
where $[a]$ denotes the integer part of the real number $a$ and $v_{ku+j}=0$. For any $d\in\mathbb N$, if we set $v_{ku+j}=0$ for $ku+j>d$,  then $\mathcal S_{d',u,j}$  can be regarded as a operator from $\mathbb R^{d'}$ to $\mathbb R^d$.
 As shown in Figure~\ref{pooling} , $\mathcal S_{d',u,j}$ devotes to selecting  $[d'/u]$ neurons from $d'$ features and the selection rule is only based on the location of the neurons,
which is totally different from the classical max-pooling that selects   neurons with the largest value or the average-pooling  which synthesizes the averaged value of several neurons.

\begin{figure}[htbp]
\centerline{\includegraphics[width=0.5\textwidth]{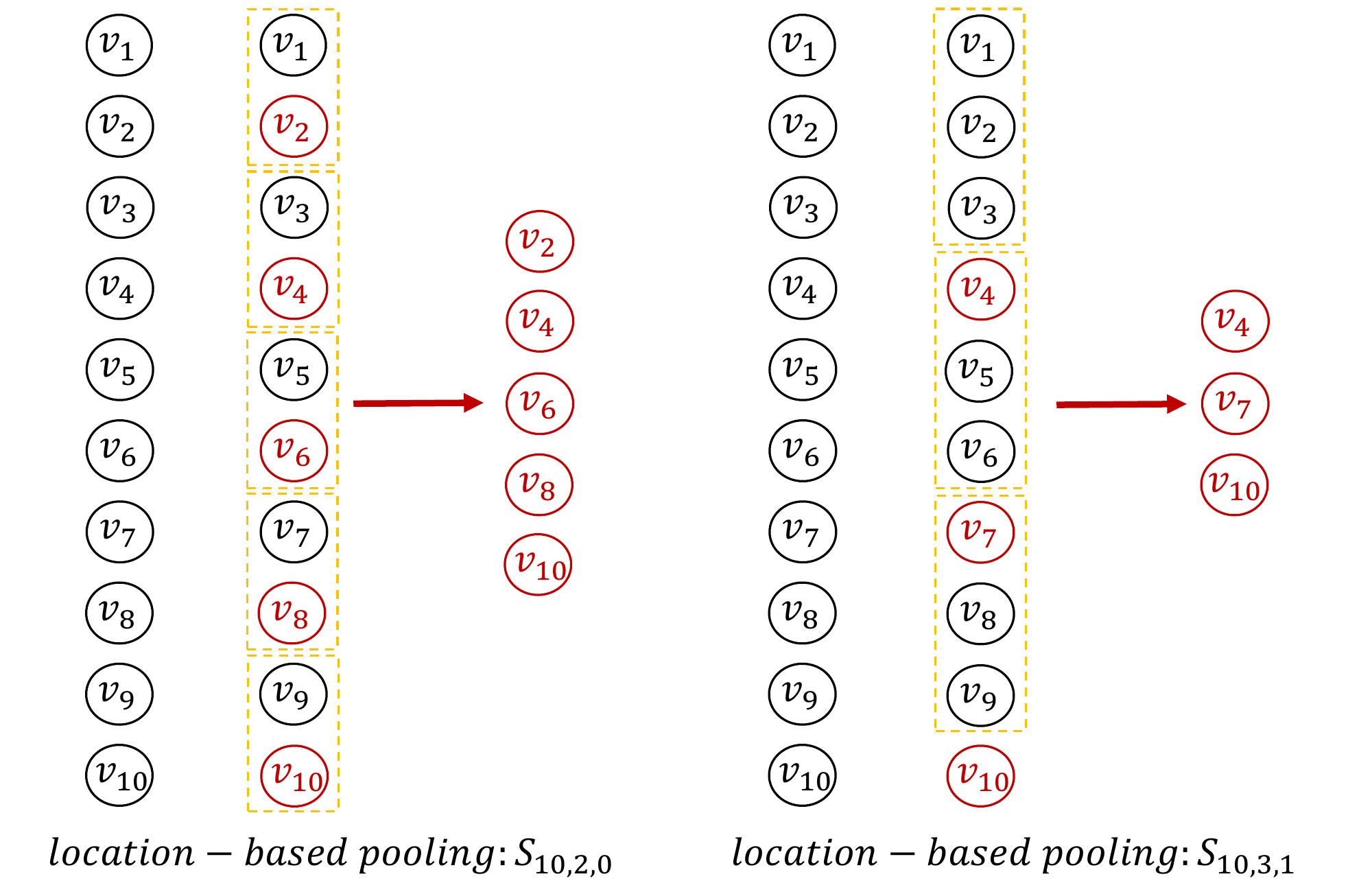}}
\caption{Location-based pooling.}
\label{pooling}
\end{figure}

With the proposed location-based pooling scheme, we  show that eDCNN succeeds in encoding the translation-equivalence (or translation-invariance) without sacrificing the  performance of DCFN in extracting other features. Given $d',\tilde{d},\ell\in\mathbb N$ satisfying $\tilde{d}=d'+\ell s$, $\vec{b}\in\mathbb R^{d'}$ and $\vec{w}^{1},\dots,\vec{w}^{L}$ supported on $\{0,\dots s\}$ with $L=\lceil\frac{\tilde{d}d'}{s-1}\rceil$,  for any $x\in\mathbb R^{d'}$, define  a multi-layer convolutional operator with pooling by
\begin{equation}\label{multi-layer-convolutional}
   \mathcal B_{L,d',k}(x):= \mathcal S_{\tilde{d},d',k}( \vec{w}^{L}*\dots*\vec{w}^{1}*x+\vec{b}).
\end{equation}
The following proposition is the main result of this section.

\begin{proposition}\label{Proposition:C-Struc-fea-deep}
Let $J,d\in\mathbb N$, $2\leq s\leq d$, $d_0=d$, $d_1,\dots,d_J\in\mathbb N$,  $L_j=\lceil\frac{d_{j-1}d_j}{s-1}\rceil$, and $x$ be supported on $\{k,k+1,\dots,k+p-1\}$ for some $p,k\in\mathbb N$. If $\vec{V}_{d_1,\dots,d_L}:\mathbb R^d\rightarrow\mathbb R^{d_L}$ defined by \eqref{data-mappling-deep-learning} is an arbitrary feature generated by DFCN, then
there exist $\sum_{j=1}^JL_j$   filter vectors $\{\vec{w}^{j,\ell}\}_{\ell=1}^{L_j}$ supported on $\{0,1,\dots,s\}$ and $J$ bias vectors $\vec{b}^j\in\mathbb R^{d_j}$, $j=1,\dots,J$ such that for any $1\leq i\leq d-p+1-k$ there holds
\begin{eqnarray}\label{cons-stru-deep}
     &\vec{V}_{d_1,\dots,d_J}(x)
       =
       \sigma\circ \mathcal B_{L_J,d_{J-1},0}
       \circ  \dots \circ   \sigma\circ\mathcal B_{L_1,d_0,0}(x) \\
       & =
   \sigma \circ \mathcal B_{L_J,d_{J-1},0}
       \circ  \dots \circ \sigma\circ \mathcal B_{L_2,d_1,0}\circ \sigma\circ \mathcal B_{L_1,d_1,i}(A_{i,d}x). \nonumber
\end{eqnarray}
\end{proposition}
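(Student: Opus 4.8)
The plan is to prove the two equalities separately: the first is a pure representation statement (any DFCN feature map is reproducible by stacked convolution-plus-pooling blocks), and the second upgrades it to translation-equivalence by invoking Proposition \ref{Proposition:translation-equiv}. Throughout I would work layer by layer, reproducing each affine map of the DFCN by one block $\mathcal B_{L_j,d_{j-1},0}$ and matching the interleaved ReLUs.

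For the first equality, the core step is to realize a single affine map $\vec v\mapsto W^j\vec v+\vec b^j$ with $W^j\in\mathbb R^{d_j\times d_{j-1}}$ exactly by $\mathcal B_{L_j,d_{j-1},0}$. The idea is to encode the whole matrix $W^j$ inside a single long composite filter $\vec W^j:=\vec w^{j,L_j}*\dots*\vec w^{j,1}$ arranged block by block, so that the $\rho$-th length-$d_{j-1}$ block of $\vec W^j$ stores the reversed $\rho$-th row of $W^j$. With this staircase layout one checks that $(\vec W^j*\vec v)_{\rho d_{j-1}}=\sum_\kappa W^j_{\rho,\kappa}v_\kappa=(W^j\vec v)_\rho$, so the location-based pooling $\mathcal S_{\tilde d,d_{j-1},0}$ of scaling $d_{j-1}$ selects exactly the $d_j$ desired outputs. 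The composite filter then has support of size $\approx d_{j-1}d_j$, and the factorization machinery underlying Lemma \ref{Lemma:universality} (from \cite{zhou2020universality}) writes any such filter as a convolution of $L_j=\lceil d_{j-1}d_j/(s-1)\rceil$ short filters supported on $\{0,\dots,s\}$, which fixes the block. Choosing the bias $\vec b$ block-constant so that its pooled value equals $\vec b^j$, inserting $\sigma$ to match $\sigma\circ\mathcal J_{j,W^j,\vec b^j}$, and composing over $j=1,\dots,J$ yields the first line.

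For the second equality, I would observe that only the first block changes: its pooling location moves from $0$ to $i$ and its input is pre-translated by $A_{i,d}$, while every later block is identical. Hence it suffices to show $\mathcal B_{L_1,d_0,i}(A_{i,d}x)=\mathcal B_{L_1,d_0,0}(x)$, after which applying the same $\sigma$ and the same remaining blocks gives equal outputs. Since $x$ is supported on $\{k,\dots,k+p-1\}$ and $1\leq i\leq d-p+1-k$, the translate $A_{i,d}x$ stays strictly inside the domain, so no zero-padding boundary is disturbed; Proposition \ref{Proposition:translation-equiv}, applied to the composite filter $\vec W^1$, then gives $\vec W^1*(A_{i,d}x)=A_{i,\tilde d}\circ(\vec W^1*x)$, i.e. translating the input translates the convolution output by the same step $i$. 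The location offset in the pooling, $\mathcal S_{\tilde d,d_0,i}$ versus $\mathcal S_{\tilde d,d_0,0}$, selects the correspondingly shifted coordinates and cancels this translation exactly, and the block-constant bias is invariant under the $i$-shift, so the two pooled vectors coincide.

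The main obstacle is the first equality, specifically the interaction of the staircase encoding with the filter factorization: one must verify that the block-by-block layout of $W^j$ yields a filter whose length is exactly what Lemma \ref{Lemma:universality} realizes with $\lceil d_{j-1}d_j/(s-1)\rceil$ factors of support $\{0,\dots,s\}$ (this is where $2\leq s$ is essential, guaranteeing that real polynomials admit real factors of degree at most $s$), and that the pooling scaling $d_{j-1}$ aligns the block boundaries with the selected indices without off-by-one collisions between neighbouring rows. By comparison the translation argument is routine once Proposition \ref{Proposition:translation-equiv} is available, reducing to index bookkeeping for the pooling offset and the bias.
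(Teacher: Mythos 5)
Your proposal follows essentially the same route as the paper: the row-stacking of each $W^j$ into a long filter whose Toeplitz action, sampled at stride $d_{j-1}$ by the location-based pooling, reproduces $W^j\vec v$ (the paper's Lemma on factorizing an affine map), the decomposition of that long filter into $L_j=\lceil d_{j-1}d_j/(s-1)\rceil$ short filters via the convolution-factorization result of \cite{zhou2020universality}, composition over the $J$ layers, and finally Proposition \ref{Proposition:translation-equiv} plus the shifted pooling offset for the second equality. The argument is correct, and your explicit remark that the bias must be block-constant to survive the $i$-shift is a point the paper leaves implicit.
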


 Due to the commutative law established in Lemma \ref{lemma:communication}, it is easy to check that  the representation presented in \eqref{cons-stru-deep} is not unique. Noting that there are
 $$
    L_j(s+1)=\lceil\frac{d_{j-1}d_j}{s-1}\rceil(s+1)\leq3d_jd_{j-1},
 $$
free parameters involved in the multi-layer convolutional structure $\vec{w}^{j,L_j}*\dots*\vec{w}^{j,1}$,
which is comparable with those in the classical inner product structure, i.e., $d_jd_{j-1}$, the location-based pooling scheme  is to enable the same size of the affine-mapping and multi-layer convolutional mapping without sacrificing its performance in feature extraction.
 Therefore,
Proposition \ref{Proposition:C-Struc-fea-deep} yields that acting as a feature extractor, the convolutional structure together with suitable pooling schemes outperforms the classical inner product structure in DFCN in  the sense that the former succeeds in capturing the translation-invariance structure without losing the performance of DFCN in extracting other features. A direct corollary of the above proposition is the following translation-invariance of eDCNN with pooling.

\begin{corollary}\label{Corollary:trans-invar}
 Let $L\in\mathbb N$, $2\leq s\leq d$, $1\leq p\leq d$ and  $\vec{w}^\ell$ with $\ell=1,\dots,L$ be supported on $\{0,1,\dots,s\}$. Then  for any $1\leq j,j'\leq d-p+1$ with $j'\neq j$, there holds
\begin{equation}\label{trans-invar}
\mathcal S_{d_L,d,j}(\vec{w}^L*\dots *\vec{w}^1*\vec{v}_{p,d,j})
  =
  \mathcal S_{d_L,d,j'}(\vec{w}^L*\dots *\vec{w}^1*\vec{v}_{p,d,j'}).
\end{equation}
\end{corollary}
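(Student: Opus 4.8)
The plan is to reduce both sides of \eqref{trans-invar} to the same base vector by invoking the translation-equivalence established in Proposition \ref{Proposition:translation-equiv}, and then to verify that the location parameter of the pooling operator exactly compensates for the induced shift. First I set $\vec{y}:=\vec{w}^L*\dots*\vec{w}^1*\vec{v}_{p,d,1}\in\mathbb R^{d_L}$. Since $\vec{v}_{p,d,j}=A_{j,d}\circ\vec{v}_{p,d,1}$ by \eqref{role-of-transi}, Proposition \ref{Proposition:translation-equiv} gives
$$\vec{w}^L*\dots*\vec{w}^1*\vec{v}_{p,d,j}=A_{j,d_L}\circ\vec{y},\qquad \vec{w}^L*\dots*\vec{w}^1*\vec{v}_{p,d,j'}=A_{j',d_L}\circ\vec{y},$$
so it suffices to prove $\mathcal S_{d_L,d,j}(A_{j,d_L}\circ\vec{y})=\mathcal S_{d_L,d,j'}(A_{j',d_L}\circ\vec{y})$, which moves the whole problem onto a single fixed vector $\vec{y}$.

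The second step is a direct index computation. From the definition of the translation matrix, $A_{j,d_L}$ shifts coordinates by $j-1$, i.e.\ $(A_{j,d_L}\circ\vec{y})_m=y_{m-j+1}$ for $m\geq j$ and vanishes otherwise (with the convention $y_m=0$ for $m>d_L$). The location-based pooling \eqref{def.down-sampling} with scaling $u=d$ and location $j$ then reads, for each $k=1,\dots,[d_L/d]$,
$$\bigl(\mathcal S_{d_L,d,j}(A_{j,d_L}\circ\vec{y})\bigr)_k=(A_{j,d_L}\circ\vec{y})_{kd+j}=y_{(kd+j)-j+1}=y_{kd+1}.$$
The crucial point is that the location offset $+j$ introduced by the pooling cancels the shift $-j$ carried by $A_{j,d_L}$, leaving $y_{kd+1}$, an expression that no longer depends on $j$.

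Running the identical computation with $j'$ yields $\bigl(\mathcal S_{d_L,d,j'}(A_{j',d_L}\circ\vec{y})\bigr)_k=y_{kd+1}$ over the same range of $k$, since both pooled vectors have length $[d_L/d]$ and hence lie in the same space. As the two sides then agree coordinate-wise, \eqref{trans-invar} follows with no tuning of the filters $\vec{w}^\ell$, which is precisely the translation-invariance asserted. I expect the only delicacy, and the place where care is needed, to be the bookkeeping at the boundary: one must track which selected indices $kd+j$ and $kd+j'$ fall outside $\{1,\dots,d_L\}$ and confirm that the zero-padding convention $y_m=0$ for $m>d_L$, together with the vanishing of $A_{j,d_L}\circ\vec{y}$ below index $j$, makes the two expressions coincide even in those edge cases. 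Once the shift-then-sample indices are matched precisely, the identity is immediate.
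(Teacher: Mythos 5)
Your proposal is correct and follows essentially the same route as the paper: the paper also derives the corollary by combining Proposition \ref{Proposition:translation-equiv} (the convolved output of $\vec{v}_{p,d,j}$ is $A_{j,d_L}$ applied to the convolved output of $\vec{v}_{p,d,1}$) with the definition \eqref{def.down-sampling} of the location-based pooling, so that the location offset $+j$ cancels the shift. Your explicit index computation $y_{(kd+j)-j+1}=y_{kd+1}$ and the remark about the boundary convention simply make precise what the paper states in one line inside the proof of Proposition \ref{Proposition:C-Struc-fea-deep}.
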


It follows from Corollary \ref{Corollary:trans-invar} that a suitable pooling scheme not only reduces the free parameters in a network but also endows the network with some important property.  Therefore, if the convolutional structure is employed in the network, designing an appropriate pooling scheme is crucial to improve its  performance in feature extraction and learning. Another direct corollary of Proposition \ref{Proposition:C-Struc-fea-deep} is the following comparison between eDCNN with pooling and shallow nets.

\begin{corollary}\label{Corollary:C-struc-In-F}
Let $2\leq s\leq d$, $n\in\mathbb N$, $x$ be supported on $\{j,j+1,\dots,j+p-1\}$ for some $p,j\in\mathbb N$,  $\sigma(Wx+\vec{b})$ be a feature generated by a shallow net  with    $n\times d$ weight matrix $W$ and   $n$ dimensional bias vector $\vec{b}$. Then, there exist  $L^*=\lceil\frac{nd}{s-1}\rceil$ filter vectors $\{\vec{w}^\ell\}_{\ell=1}^{L^*}$ supported on $\{0,1,\dots,s\}$and a bias vector $\vec{b}'\in\mathbb R^n$ such that  for any $1\leq k\leq d-p+1-j$, there holds
\begin{eqnarray*}
   \sigma(Wx+\vec{b})
    =
   \sigma \circ\mathcal B_{L^*,d,0}(x)
    =
   \sigma \circ\mathcal B_{L^*,d,k}(A_{k,d}x)),
\end{eqnarray*}
where $A_{k,d}$ is the translation matrix satisfying \eqref{role-of-transi}.
\end{corollary}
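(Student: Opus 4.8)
The plan is to recognize Corollary~\ref{Corollary:C-struc-In-F} as the one-layer instance of Proposition~\ref{Proposition:C-Struc-fea-deep}. A shallow-net feature $\sigma(Wx+\vec b)$ is exactly the feature $\vec V_{d_1}$ produced by a DFCN of depth $J=1$ with the single affine map $x\mapsto Wx+\vec b$, so I would set $J=1$, $d_0=d$, $d_1=n$, $W^1=W$, $\vec b^1=\vec b$ in Proposition~\ref{Proposition:C-Struc-fea-deep}. Then $L_1=\lceil\frac{d_0d_1}{s-1}\rceil=\lceil\frac{nd}{s-1}\rceil=L^*$, and the two displayed identities of the proposition collapse to the two equalities claimed here, with the support shift $i$ renamed $k$ and the support start renamed $j$. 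The admissible range $1\le k\le d-p+1-j$ is precisely the range $1\le i\le d-p+1-k$ of the proposition after this relabeling, so no new constraint appears.

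To make the argument self-contained (equivalently, to check the $J=1$ case of the proposition directly), I would build the single long filter explicitly. Writing $W_m$ for the $m$-th row of $W$, I form the filter $\vec W$ supported on $\{0,1,\dots,nd-1\}$ whose block of length $d$ starting at position $(m-1)d$ equals the reversed row $(W_{m,d},W_{m,d-1},\dots,W_{m,1})$. A direct index count with the convolution \eqref{convolution} then gives $(\vec W*x)_{md}=\sum_{k=1}^d W_{m,k}x_k=(Wx)_m$ for $m=1,\dots,n$; that is, the $n$ inner products sit exactly at the positions $d,2d,\dots,nd$ of the $\tilde d$-dimensional output. I would then factor $\vec W=\vec w^{L^*}*\dots*\vec w^{1}$ into $L^*$ filters each supported on $\{0,\dots,s\}$ by the convolutional-factorization device underlying Lemma~\ref{Lemma:universality}: pass to the generating polynomial $\widetilde W(z)=\sum_j W_j z^j$ of degree $nd-1$, factor it over $\mathbb R$ into real linear and quadratic pieces, and group these pieces into blocks of degree at most $s$. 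Because the commutative law of Lemma~\ref{lemma:communication} frees me from ordering the blocks, only their number matters.

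With $\vec W$ in hand, the first equality follows by applying $\mathcal S_{\tilde d,d,0}$ at location $0$: it selects the components at indices $d,2d,\dots,nd$, which are exactly the $(Wx)_m$, and after adding a bias placing $b_m$ at each selected index and applying $\sigma$ I recover $\sigma(Wx+\vec b)$. For the second equality I would invoke the translation-equivariance of zero-padded convolution (Proposition~\ref{Proposition:translation-equiv}) to write $\vec W*(A_{k,d}x)=A_{k,\tilde d}\circ(\vec W*x)$; reading this shifted vector off at the shifted locations $md+k$, which is what $\mathcal S_{\tilde d,d,k}$ does, returns the same values $(\vec W*x)_{md}=(Wx)_m$. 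This is the single-feature version of the invariance in Corollary~\ref{Corollary:trans-invar}. The only care needed is the bias: I would take the long bias inside $\mathcal B$ to be block-constant, equal to $b_m$ on the whole block $\{md,md+1,\dots,(m+1)d-1\}$, so that $\vec b_{md+k}=\vec b_{md}=b_m$ for every admissible $0\le k\le d-p+1-j<d$; the upper bound on $k$ keeps $md+k$ inside block $m$, making the choice consistent across all $k$ simultaneously and realizing the effective $\vec b'\in\mathbb R^n$ after pooling.

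I expect the factorization of $\vec W$ into $L^*$ short filters to be the only real obstacle, and it is the step that dictates the exponent $s-1$ rather than $s$. Over $\mathbb R$ a polynomial of degree $nd-1$ need not split into factors of degree exactly $s$; in the worst case (all roots complex and $s$ odd) the indivisible real quadratic factors can only be packed into blocks of degree $s-1$, forcing $\lceil\frac{nd}{s-1}\rceil$ blocks rather than $\lceil\frac{nd}{s}\rceil$. Establishing that $L^*=\lceil\frac{nd}{s-1}\rceil$ short filters always suffice to reproduce an arbitrary prescribed $\vec W$ exactly, absorbing any leading-coefficient scalar into one filter, is the technical heart; it is inherited directly from the universality construction cited for Lemma~\ref{Lemma:universality}, while everything else is bookkeeping with indices and the pooling map.
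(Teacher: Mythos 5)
Your proposal is correct and follows essentially the same route as the paper: the paper also treats this corollary as the depth-one instance of Proposition~\ref{Proposition:C-Struc-fea-deep}, and your self-contained construction (stacking the reversed rows of $W$ into one long filter so that $(\vec W*x)_{md}=(Wx)_m$, factorizing it into $L^*$ short filters via the symbol/polynomial argument, and reading off the shifted positions with $\mathcal S_{\tilde d,d,k}$ via translation-equivariance) is precisely the content of Lemma~\ref{Lemma:factorization} and Corollary~\ref{Corollary:trans-invar} in the appendix. Your extra care with the block-constant bias is a sound way to handle a detail the paper leaves implicit.
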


Corollary \ref{Corollary:C-struc-In-F} shows that  convolutional structures and appropriate pooling scheme can provide the translation-invariance
without sacrificing the feature extraction capability of the classical shallow nets. It implies that no matter where $x$ is supported, the multi convolutional structure together with suitable location-based pooling scheme yields the same output of high quality.

\section{Performance of eDCNNs in Feature Extraction  }\label{Sec.feature-extraction}

As discussed in the above two sections, zero-padding and location-based pooling play important roles in improving the performance  of the convolutional structure in feature extraction. However, the network provided in Proposition \ref{Proposition:C-Struc-fea-deep} uses both bias vectors and ReLU activation functions and is thus different from  the eDCNN defined by \eqref{eDCNN}. In this section, we show that adding suitable bias vectors enables to produce a unified  eDCNN structure formed as \eqref{eDCNN}. We then present a comparison between eDCNN and DFCN in  feature extraction.
% , but changes the pooling scheme from the unpopular location-based pooling to the widely used max-pooling.

Since adopting  different bias vectors in different neurons in the same layer    prohibits  the translation-equivalence of eDCNN,   we study a subset of $\mathcal H_{L,s}$ whose elements of the bias vector in the same layer  are the same. For any $d'_0=d'\in\mathbb N$ and $d_\ell'=d'+\ell s$, define  the   restricted convolution operator by
\begin{equation}\label{convolutional-mapping-res}
    \mathcal C^R_{\ell,\vec{w}^\ell,b^\ell}(x):=\vec{w}^\ell* x+b^\ell {\bf 1}_{d'_\ell}
\end{equation}
for $\vec{w}^\ell$ supported on $\{0,1,\dots,s\}$,    $b^\ell\in\mathbb R$ and $\mathbf 1_{d'_\ell}=(1,\dots,1)^T\in\mathbb R^{d'_\ell}$.
Define further the restricted eDCNN feature-extractor from $\mathbb R^{d'}$ to $\mathbb R^{d'_\ell}$ by
\begin{equation}\label{r-eDCNN-feature-extractor}
    \vec{V}_{\ell}^{s,R}(x):=\sigma\circ \mathcal C^R_{\ell,\vec{w}^{\ell}, {b}^{\ell}} \circ \sigma  \circ \dots \circ \sigma\circ\mathcal C^R_{1,\vec{w}^1, {b}^1}(x).
\end{equation}
  Denote by
\begin{eqnarray}\label{r-eDCNN-pooling}
   && \vec{V}_{\ell}^{s,R,d',j}(x)
     :=
    \mathcal S_{d'_{\ell},d'_0,j}\circ \sigma \circ \mathcal C_{\ell,\vec{w}^{\ell}, \vec{b}^{\ell}} \circ \sigma\circ \mathcal C^R_{\ell-1,\vec{w}^{\ell-1}, {b}^{\ell-1}} \nonumber\\
    &&\circ \sigma\circ \mathcal C^R_{\ell-2,\vec{w}^{\ell-2}, {b}^{\ell-2}}
    \circ \dots \circ \sigma\circ\mathcal C^R_{1,\vec{w}^1, {b}^1}(x)
\end{eqnarray}
the restricted eDCNN feature-extractor with depth $\ell$, filter length $s$, location-based pooling at the $d_{\ell}$ layer  with pooling parameter $d'j\in\mathbb N$.  It should be highlighted that when pooling happens, that is, in the $\ell$-th layer,  we use the   convolutional operator \eqref{convolutional-mapping} rather than the restricted one \eqref{convolutional-mapping-res}.
In the following theorem, we show that eDCNN with  pooling performs not worse than shallow nets with  similar number of parameters in feature extraction but succeeds in encoding the translation-invariance.

% Define further
% the max-pooling operator $\mathcal P_u:\mathbb R^{d'}\rightarrow \mathbb R^{[d'/u]}$ with length parameter $u\in \mathbb N$ by
% \begin{equation}\label{max-pooling}
%     \mathcal P_{u}(\vec{v})=(v^*_1,\dots,v^*_{[d'/u]}),
% \end{equation}
% where
% $$
%     v^*_j=\max_{(j-1)u+1 \leq \ell\leq \min\{ju,d'\}}v_\ell,\qquad j=1,\dots,[d'/u].
% $$

\begin{theorem}\label{Theorem:app-dcnn-res}
Let $2\leq s\leq d$, $n\in\mathbb N$, $L^*=\lceil\frac{nd}{s-1}\rceil$, $d_0=d$, and $d_\ell=d_{\ell-1}+s$. For any $W\in\mathbb R^{n\times d}$ and $\vec{\theta}\in\mathbb R^n$, there exist $b^\ell\in\mathbb R$ for $1\leq \ell\leq L^*-1$, $\vec{b}^{L^*}\in\mathbb R^{d_{L^*}}$ and $L^*$ filter vectors $\vec{w}^\ell$ supported on $\{0,1,\dots,s\}$ such that
\begin{eqnarray}\label{eDCNN-for-shallow}
 \sigma(Wx+\vec{\theta}) = \vec{V}_{L^*}^{s,R,d,0}(x)
   %   \sigma\circ \mathcal C_{L^*+1,\vec{w}^{L^*+1}, \vec{b}^{L^*+1}}\\
   % &&\circ \mathcal S_{d_{L^*},d,0}\circ
   % \sigma\circ \mathcal C^R_{L^*,\vec{w}^{L^*}, {b}^{L^*}}
   %    \circ \dots \circ \sigma\circ\mathcal C^R_{1,\vec{w}^1, {b}^1}(x).   \nonumber
\end{eqnarray}
If in addition $x$ is supported on $\{j,j+1,\dots,j+p-1\}$ for some $p,j\in\mathbb N$, then for any $1\leq k\leq d-p+1$, there holds
\begin{eqnarray}\label{trans-inv-eDCNN-1}
   \vec{V}_{L^*}^{s,R,d,0}(x)=\vec{V}_{L^*}^{s,R,d,k}(A_{k,d}x).
   % &&  \mathcal S_{d_{L^*},d,0}\circ
   % \sigma\circ \mathcal C^R_{L^*,\vec{w}^{L^*}, {b}^{L^*}}
   %    \circ \dots \circ \sigma\circ\mathcal C^R_{1,\vec{w}^1, {b}^1}(x) \\
   %    &=&
   %     \mathcal S_{d_{L^*},d,k}\circ
   % \sigma\circ \mathcal C^R_{L^*,\vec{w}^{L^*}, {b}^{L^*}}
   %    \circ \dots \circ \sigma\circ\mathcal C^R_{1,\vec{w}^1, {b}^1}(A_{k,d}x). \nonumber
\end{eqnarray}
\end{theorem}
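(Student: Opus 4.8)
The plan is to reduce the whole statement to Corollary~\ref{Corollary:C-struc-In-F}, which already represents the shallow-net feature $\sigma(Wx+\vec\theta)$ by the pooled pure-convolution block $\sigma\circ\mathcal B_{L^*,d,0}$ and, simultaneously, establishes its translation invariance $\sigma\circ\mathcal B_{L^*,d,0}(x)=\sigma\circ\mathcal B_{L^*,d,k}(A_{k,d}x)$. The only genuinely new ingredient is that the restricted feature extractor \eqref{r-eDCNN-pooling} inserts a ReLU after \emph{every} convolution, whereas $\mathcal B_{L^*,d,0}$ composes its $L^*$ convolutions with no intermediate nonlinearity. So the first task is to take the filters $\{\vec w^\ell\}_{\ell=1}^{L^*}$ and the pre-pooling bias, say $\vec b'$, furnished by Corollary~\ref{Corollary:C-struc-In-F}, and then to choose the constant biases $b^1,\dots,b^{L^*-1}$ of the restricted convolutions \eqref{convolutional-mapping-res} so that the inserted intermediate ReLUs act as the identity. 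Throughout I work on the compact cube $\mathbb I^d$, which is what makes the boundedness estimates below available.

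First I would establish \eqref{eDCNN-for-shallow}. Writing $\vec y^0=x$ and $\vec y^\ell=\sigma(\vec w^\ell*\vec y^{\ell-1}+b^\ell\mathbf 1_{d_\ell})$, I choose $b^\ell$ recursively: since $x\in\mathbb I^d$ and the filters are fixed, each partial convolution $\vec w^\ell*\vec y^{\ell-1}$ is uniformly bounded below, so a sufficiently large $b^\ell$ forces the pre-activation to be nonnegative and hence $\sigma$ to be the identity at layers $1,\dots,L^*-1$. With these ReLUs transparent, $\vec y^{L^*-1}=\vec w^{L^*-1}*\cdots*\vec w^1*x+\vec c$ for a fixed accumulated-bias vector $\vec c$ (determined by the $b^\ell$ and the filters, independent of $x$). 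In the last layer, which by \eqref{r-eDCNN-pooling} uses the \emph{full} convolution operator \eqref{convolutional-mapping} with a free bias vector $\vec b^{L^*}$, I set $\vec b^{L^*}:=\vec b'-\vec w^{L^*}*\vec c$; then the last pre-activation is exactly $\vec w^{L^*}*\cdots*\vec w^1*x+\vec b'$, i.e.\ the argument appearing inside the pooling of $\mathcal B_{L^*,d,0}$. Finally, since the location-based pooling \eqref{def.down-sampling} merely selects coordinates, it commutes with the coordinatewise ReLU, $\mathcal S\circ\sigma=\sigma\circ\mathcal S$; hence $\vec V_{L^*}^{s,R,d,0}(x)=\mathcal S_{d_{L^*},d,0}(\sigma(\vec w^{L^*}*\cdots*\vec w^1*x+\vec b'))=\sigma\circ\mathcal B_{L^*,d,0}(x)=\sigma(Wx+\vec\theta)$, which is \eqref{eDCNN-for-shallow}.

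For the translation invariance \eqref{trans-inv-eDCNN-1} I keep the same filters and biases and only need that the intermediate ReLUs stay transparent when the input is $A_{k,d}x$. Because every translate $A_{k,d}x$ has the same entries as $x$ (merely relocated with zero padding), it again lies in a fixed bounded set, so a single choice of the $b^\ell$ (taken large enough) renders the ReLUs transparent for all $x$ and all $1\le k\le d-p+1$ simultaneously, and the accumulated vector $\vec c$ is the same in both computations. Translation equivariance of the convolutions (Proposition~\ref{Proposition:translation-equiv}) then gives $\vec w^{L^*-1}*\cdots*\vec w^1*(A_{k,d}x)=A_{k,d_{L^*-1}}(\vec w^{L^*-1}*\cdots*\vec w^1*x)$, and the same bias cancellation as above shows that the last pre-activation for input $A_{k,d}x$ equals $\vec w^{L^*}*\cdots*\vec w^1*(A_{k,d}x)+\vec b'$, precisely the argument inside the pooling of $\mathcal B_{L^*,d,k}(A_{k,d}x)$. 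Using $\mathcal S\circ\sigma=\sigma\circ\mathcal S$ once more, $\vec V_{L^*}^{s,R,d,k}(A_{k,d}x)=\sigma\circ\mathcal B_{L^*,d,k}(A_{k,d}x)$, and \eqref{trans-inv-eDCNN-1} follows from the identity $\sigma\circ\mathcal B_{L^*,d,0}(x)=\sigma\circ\mathcal B_{L^*,d,k}(A_{k,d}x)$ already contained in Corollary~\ref{Corollary:C-struc-In-F} (which itself rests on Corollary~\ref{Corollary:trans-invar}).

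The step I expect to be the main obstacle is handling the additive biases under translation. The constant restricted biases $b^\ell\mathbf 1$ do not commute with $A_{k,d}$ (indeed $A_{k,d}\mathbf 1\neq\mathbf 1$), so one cannot push a translation through the restricted layers directly; this is exactly why the transparency device is essential — it collapses all intermediate layers to a single equivariant convolution plus an input-independent shift $\vec c$, after which Proposition~\ref{Proposition:translation-equiv} applies and the shifted pooling location cancels the translation. Two points need care: verifying that one fixed family $\{b^\ell\}$ makes the ReLUs transparent uniformly over $x$ and over all admissible translates $A_{k,d}x$ (a compactness/boundedness argument), and the bookkeeping that the full bias $\vec b^{L^*}$ — which is not constant — is nonetheless compatible with both pooling locations; the latter is not re-proved here but inherited, since the \emph{same} $\vec b'$ enters both $\mathcal B_{L^*,d,0}$ and $\mathcal B_{L^*,d,k}$ and their equality after the outer ReLU is guaranteed by Corollary~\ref{Corollary:C-struc-In-F}.
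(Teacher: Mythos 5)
Your proposal is correct and follows essentially the same route as the paper's own proof: the paper likewise chooses the constant biases $b^\ell=2^{\ell-1}B^\ell$ large enough to make every intermediate ReLU act as the identity (its Lemma~\ref{Lemma:Induction}), cancels the accumulated input-independent shift at the last layer by setting $\vec b^{L^*}=\theta_{d_{L^*}}-\vec B^{d_{L^*}}$, commutes the location-based pooling with $\sigma$, and invokes the convolutional factorization of $W$ (Lemma~\ref{Lemma:factorization}) plus Corollary~\ref{Corollary:trans-invar} for \eqref{trans-inv-eDCNN-1}. The only cosmetic difference is that you package the factorization and invariance through Corollary~\ref{Corollary:C-struc-In-F} rather than citing Lemmas~\ref{Lemma:factorization} and the translation results directly.
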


Theorem \ref{Theorem:app-dcnn-res} presents three advantages of eDCNN over shallow nets. At first, as far as the feature extraction is concerned, eDCNN performs not worse than shallow nets in the sense that it can exactly represent any features extracted by shallow nets with comparable free parameters. Then, it follows from
\eqref{trans-inv-eDCNN-1} that with suitable pooling mechanism, eDCNN succeeds in encoding the translation-invariance into the structure without tuning weights, which is totally different from other network structures. Finally, it can be derived from \eqref{eDCNN-for-shallow} and \eqref{trans-inv-eDCNN-1}  that even with the structure constraints,  the approximation capability of eDCNN is at least not worse than shallow nets.  In particular, denote by
\begin{equation}\label{eDCNN-space-shallow}
 \mathcal H_{\ell}^{s,R,d,j}:=
 \left\{h(x)=\vec{a}\cdot \vec{V}_{\ell,\ell-1}^{s,R,d,0}(x):\vec{a},\vec{b}^\ell\in\mathbb R^n,b^{k}\in\mathbb R\right\}
\end{equation}
the set of restricted eDCNN with pooling defined by
\eqref{r-eDCNN-pooling} with
$\{\vec{w}^k\}_{k=1}^\ell$ being supported on $\{0,1,\dots,s\}$.
We can derive the following corollary from Theorem \ref{Theorem:app-dcnn-res} directly.

\begin{corollary}\label{Corollary:shallow-edCNN-app}
Let $2\leq s\leq d$, $n\in\mathbb N$, $L^*=\lceil\frac{nd}{s-1}\rceil$, $d_0=d$ and $d_\ell=d_{\ell-1}+s$. Then, for any $f\in C(\mathbb I^d)$, we have
\begin{equation}\label{shallow-edcnn-app}
    \mbox{dist}(f,\mathcal H_{\ell}^{s,R,d,j})\leq
    \mbox{dist}(f,\mathcal H_{n}),
\end{equation}
where $\mbox{dist}(f,\mathcal H):=\inf_{g\in\mathcal H}\|f-g\|_{C(\mathbb I^d)}$ denotes the distance between the function $f$ and the set  $\mathcal H$ in $C(\mathbb I^d)$ and $\mathcal H_n$ is the set of shallow nets with $n$ neurons in the hidden layer.
\end{corollary}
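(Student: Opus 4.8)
The plan is to obtain Corollary \ref{Corollary:shallow-edCNN-app} as an immediate consequence of Theorem \ref{Theorem:app-dcnn-res}, by first establishing the set inclusion $\mathcal H_n\subseteq\mathcal H_{L^*}^{s,R,d,j}$ and then invoking the elementary monotonicity of the distance functional: the distance from a fixed target $f$ to a larger hypothesis set can only be smaller. Thus no new approximation argument is needed; the work reduces to reading off an inclusion of function classes from the exact feature-representation identity already provided.

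First I would take an arbitrary shallow net $g\in\mathcal H_n$, written as $g(x)=\vec a\cdot\sigma(Wx+\vec\theta)$ with $W\in\mathbb R^{n\times d}$, $\vec\theta\in\mathbb R^n$ and $\vec a\in\mathbb R^n$. Applying the representation identity \eqref{eDCNN-for-shallow} of Theorem \ref{Theorem:app-dcnn-res} to the pair $(W,\vec\theta)$, there are filter vectors $\{\vec w^\ell\}_{\ell=1}^{L^*}$ supported on $\{0,1,\dots,s\}$, scalar biases $b^\ell$ for $1\le\ell\le L^*-1$, and a vector bias $\vec b^{L^*}\in\mathbb R^{d_{L^*}}$ such that the shallow feature is realized exactly as an eDCNN feature, namely $\sigma(Wx+\vec\theta)=\vec V_{L^*}^{s,R,d,0}(x)$. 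Dotting both sides with the same output vector $\vec a$ gives $g(x)=\vec a\cdot\vec V_{L^*}^{s,R,d,0}(x)$, which is precisely a member of the class $\mathcal H_{L^*}^{s,R,d,j}$ defined in \eqref{eDCNN-space-shallow} (here the relevant depth is $\ell=L^*$, and the free parameters $\vec a$, $b^k$, $\vec b^{L^*}$ are exactly those ranged over in that definition). Since $g$ was arbitrary, this proves $\mathcal H_n\subseteq\mathcal H_{L^*}^{s,R,d,j}$.

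To finish, I would fix any $f\in C(\mathbb I^d)$ and note that, because the infimum defining $\mbox{dist}(f,\cdot)$ is taken over a strictly larger index set on the left than on the right of the inclusion, one has $\inf_{g\in\mathcal H_{L^*}^{s,R,d,j}}\|f-g\|_{C(\mathbb I^d)}\le\inf_{g\in\mathcal H_n}\|f-g\|_{C(\mathbb I^d)}$, which is exactly \eqref{shallow-edcnn-app}.

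Because this is a corollary, the substantive content lives entirely in Theorem \ref{Theorem:app-dcnn-res}, and I expect no genuine obstacle here. The only point requiring care is bookkeeping: one must confirm that the output coefficient vector $\vec a$ of the shallow net is an admissible output vector for the class \eqref{eDCNN-space-shallow}, which holds since the location-based pooling in $\vec V_{L^*}^{s,R,d,0}$ returns the feature dimension to $n$, and that the scalar biases $b^k$ and the final vector bias produced by the theorem are exactly the free parameters the class ranges over. Since the theorem furnishes an exact equality of feature maps rather than an approximation, the inclusion is clean and the distance inequality is immediate.
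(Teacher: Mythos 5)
Your proposal is correct and is exactly the derivation the paper intends: the exact representation \eqref{eDCNN-for-shallow} from Theorem \ref{Theorem:app-dcnn-res} gives the inclusion $\mathcal H_n\subseteq\mathcal H_{L^*}^{s,R,d,j}$, and the inequality \eqref{shallow-edcnn-app} follows from monotonicity of the infimum under set inclusion. The paper itself states only that the corollary follows ``directly'' from the theorem, and your write-up supplies precisely that argument, including the correct bookkeeping that the pooled feature dimension is $n$ so the output vector $\vec a$ is admissible.
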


Theorem \ref{Theorem:app-dcnn-res} and Corollary \ref{Corollary:shallow-edCNN-app} illustrate the outperformance of eDCNN over shallow nets.  In the following, we aim to compare eDCNN with DFCN in feature extraction. The following theorem is our second main result.

\begin{theorem}\label{Theorem:comp-app-dfcn}
Let $2\leq s\leq d$, $L\in\mathbb N$, $d_1^*,\dots,d_L^*\in\mathbb N$ and $d_0^*=d$. If $L_\ell=\lceil\frac{d^*_\ell d^*_{\ell-1}}{s-1}\rceil$ for $\ell=1,2,\dots,L$, then  for any  $\vec{V}_{d_1^*,\dots,d_L^*}(x)$ defined by \eqref{data-mappling-deep-learning}, there exist filter sequences $\{\vec{w}_{\ell,j}\}_{1,1}^{L,d_{L_{\ell}}}$ with $d_{L_{\ell}}=d+L_{\ell}s$ supported on $\{0,1,\dots,s\},$  $b_{\ell,j}\in \mathbb R$ and  $\vec{b}^{L_\ell }\in\mathbb R^{d^*_{L_{\ell}}}$ such that
\begin{equation}\label{edcnn-dfcn-app}
      \vec{V}_{d_1^*,\dots,d_L^*}(x)
  =
  \vec{V}_{L_L}^{s,R,d^*_{L-1},0}\circ\vec{V}_{L_{L-1}}^{s,R,d^*_{L-2},0}\circ\dots
  \circ \vec{V}_{L_1}^{s,R,d^*_{0},0}(x).
\end{equation}
If in addition $x$ is supported on $\{j,j+1,\dots,j+p-1\}$ for some $p,j\in\mathbb N$, then for any $1\leq k\leq d-p+1$, there holds
\begin{eqnarray}\label{trans-inv-edcnn-for-dfcn}
    &&\vec{V}_{L_L}^{s,R,d^*_{L-1},0}\circ\vec{V}_{L_{L-1}}^{s,R,d^*_{L-2},0}\circ\dots
  \circ \vec{V}_{L_1}^{s,R,d^*_{0},0}(x)\\
  &=&
  \vec{V}_{L_L}^{s,R,d^*_{L-1},0}\circ\vec{V}_{L_{L-1}}^{s,R,d^*_{L-2},0}\circ\dots
  \circ \vec{V}_{L_1}^{s,R,d^*_{0},k}(A_{k,d}x).\nonumber
\end{eqnarray}
\end{theorem}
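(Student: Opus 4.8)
The plan is to reduce Theorem \ref{Theorem:comp-app-dfcn} to its single-layer counterpart, Theorem \ref{Theorem:app-dcnn-res}, by exploiting the layered structure of the DFCN feature map. First I would rewrite the feature mapping \eqref{data-mappling-deep-learning} as a composition of single affine-ReLU layers
$$
    \vec{V}_{d_1^*,\dots,d_L^*}(x) = F_L\circ F_{L-1}\circ\dots\circ F_1(x), \qquad F_\ell(y):=\sigma(W^\ell y+\vec{b}^\ell),
$$
where each $F_\ell:\mathbb R^{d^*_{\ell-1}}\to\mathbb R^{d^*_\ell}$ is precisely a shallow-net feature with input dimension $d^*_{\ell-1}$ and output dimension $d^*_\ell$. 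This observation is what makes the deep result an immediate iteration of the shallow one.

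Next I would apply Theorem \ref{Theorem:app-dcnn-res} separately to each layer $F_\ell$, identifying $d\leftarrow d^*_{\ell-1}$ and $n\leftarrow d^*_\ell$. This yields, for every $\ell$, a restricted eDCNN block $\vec{V}_{L_\ell}^{s,R,d^*_{\ell-1},0}$ of depth $L_\ell=\lceil d^*_\ell d^*_{\ell-1}/(s-1)\rceil$, together with filters supported on $\{0,1,\dots,s\}$ and the scalar biases $b_{\ell,j}$ and terminal bias vector $\vec{b}^{L_\ell}$ promised there, such that $F_\ell(y)=\vec{V}_{L_\ell}^{s,R,d^*_{\ell-1},0}(y)$ holds identically. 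Because the pooling that terminates each block downsamples the expanded convolution output back to exactly $d^*_\ell$ coordinates, the output of block $\ell$ is a legitimate input for block $\ell+1$, and chaining the layerwise identities gives the composition \eqref{edcnn-dfcn-app}.

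For the translation-invariance assertion \eqref{trans-inv-edcnn-for-dfcn}, the key observation is that only the \emph{first} block ever sees an input carrying the support structure $\{j,\dots,j+p-1\}$; the intermediate outputs $F_1(x),F_2\circ F_1(x),\dots$ are arbitrary vectors, and the pooling parameters of all later blocks are held fixed at $0$. I would therefore invoke the single-layer translation-invariance \eqref{trans-inv-eDCNN-1} of Theorem \ref{Theorem:app-dcnn-res} applied to the first block, which guarantees $\vec{V}_{L_1}^{s,R,d^*_0,0}(x)=\vec{V}_{L_1}^{s,R,d^*_0,k}(A_{k,d}x)$ whenever $1\leq k\leq d-p+1$. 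Since every subsequent block is a fixed map applied to this common value, the two compositions in \eqref{trans-inv-edcnn-for-dfcn} produce identical outputs.

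The main obstacle is bookkeeping rather than a new idea: one must verify that the dimensions align across blocks so that each terminal pooling returns exactly the width $d^*_\ell$ required by the next layer, and that the restricted convolution operators \eqref{convolutional-mapping-res} and terminal bias vectors supplied by Theorem \ref{Theorem:app-dcnn-res} assemble into the exact nesting pattern of \eqref{r-eDCNN-pooling}. A secondary point worth flagging is that translation-invariance genuinely propagates only through the first layer, since the support hypothesis is consumed there; the commutative law of Lemma \ref{lemma:communication} is what lets the internal filter orderings within each block be rearranged freely without affecting this argument.
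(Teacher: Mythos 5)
Your proposal is correct and follows essentially the same route as the paper: the paper's proof likewise applies the shallow-net representation \eqref{eDCNN-for-shallow} of Theorem \ref{Theorem:app-dcnn-res} once per layer (with the terminal pooling restoring width $d^*_\ell$ so the blocks compose) and derives \eqref{trans-inv-edcnn-for-dfcn} directly from \eqref{trans-inv-eDCNN-1}. Your added remark that the support hypothesis is consumed by the first block, so only its pooling parameter needs to shift, is a useful clarification of a point the paper leaves implicit.
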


 It should be mentioned that the only difference between eDCNN for shallow nets and eDCNN for DFCN is the number of pooling. In fact, to represent an $L$-layer DFCN, it requires $L$ location-based pooling in eDCNN.
Theorem \ref{Theorem:comp-app-dfcn} shows that eDCNN with suitable pooling scheme can represent any DFCN with comparable free parameters, even though eDCNN is capable of encoding the translation-invariance in the network structure. This demonstrates the outperformance of eDCNN over DFCN. Denote by $\mathcal H^{s,R,d^*_{L-1},\dots,d^*_0,0}_{L_L,\dots,L_1}$
the set of all eDCNNs formed as
$$
  \vec{a}\cdot \vec{V}_{L_L}^{s,R,d^*_{L-1},0}\circ\vec{V}_{L_{L-1}}^{s,R,d^*_{L-2},0}\circ\dots
  \circ \vec{V}_{L_1}^{s,R,d^*_{0},0}(x).
$$
We obtain from Theorem \ref{Theorem:comp-app-dfcn} the following corollary directly.

\begin{corollary}\label{Corollary:DFCN-edCNN-app}
Let $2\leq s\leq d$, $L\in\mathbb N$, $d_1^*,\dots,d_L^*\in\mathbb N$ and $d_0^*=d$. If $L_\ell=\lceil\frac{d^*_\ell d^*_{\ell-1}}{s-1}\rceil$ for $\ell=1,2,\dots,L^*$, then  for any    $f\in C(\mathbb I^d)$, we have
\begin{equation}\label{DFCN-edcnn-app}
    \mbox{dist}(f,\mathcal H^{s,R,d^*_{L-1},\dots,d^*_0,0}_{L_L,\dots,L_1})\leq
    \mbox{dist}(f,\mathcal H_{d^*_1,\dots,d^*_L}).
\end{equation}
\end{corollary}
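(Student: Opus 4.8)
The plan is to obtain Corollary \ref{Corollary:DFCN-edCNN-app} as an immediate consequence of Theorem \ref{Theorem:comp-app-dfcn} by upgrading the pointwise representation \eqref{edcnn-dfcn-app} into a function-class inclusion and then invoking monotonicity of the uniform-distance functional under set inclusion. Concretely, I would show $\mathcal H_{d^*_1,\dots,d^*_L}\subseteq \mathcal H^{s,R,d^*_{L-1},\dots,d^*_0,0}_{L_L,\dots,L_1}$ as sets of functions on $\mathbb I^d$, and then conclude that enlarging the candidate set can only decrease the infimum that defines the distance.

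First I would unpack the definition of the DFCN class: every $h\in\mathcal H_{d^*_1,\dots,d^*_L}$ has the form $h(x)=\vec a^L\cdot\vec V_{d^*_1,\dots,d^*_L}(x)$ for some output weight $\vec a^L\in\mathbb R^{d^*_L}$ and some DFCN feature map $\vec V_{d^*_1,\dots,d^*_L}$ defined by \eqref{data-mappling-deep-learning}. Applying the representation \eqref{edcnn-dfcn-app} of Theorem \ref{Theorem:comp-app-dfcn} to this particular feature map produces admissible filter sequences supported on $\{0,1,\dots,s\}$ and biases realizing the restricted eDCNN feature map, so that $\vec V_{d^*_1,\dots,d^*_L}(x)=\vec V_{L_L}^{s,R,d^*_{L-1},0}\circ\dots\circ\vec V_{L_1}^{s,R,d^*_{0},0}(x)$ identically in $x$. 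Substituting this identity into $h$ while retaining the same output weight $\vec a=\vec a^L$ exhibits $h$ as an element of $\mathcal H^{s,R,d^*_{L-1},\dots,d^*_0,0}_{L_L,\dots,L_1}$, which, since $h$ was arbitrary, establishes the desired inclusion.

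Second, the inclusion gives the inequality directly: the infimum defining $\mbox{dist}(f,\mathcal H^{s,R,d^*_{L-1},\dots,d^*_0,0}_{L_L,\dots,L_1})$ is taken over a set containing $\mathcal H_{d^*_1,\dots,d^*_L}$, so $\inf_{g\in\mathcal H^{s,R,d^*_{L-1},\dots,d^*_0,0}_{L_L,\dots,L_1}}\|f-g\|_{C(\mathbb I^d)}\leq \inf_{g\in\mathcal H_{d^*_1,\dots,d^*_L}}\|f-g\|_{C(\mathbb I^d)}$, which is exactly \eqref{DFCN-edcnn-app}.

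The main obstacle is not analytic but a matter of careful bookkeeping about parametrizations: I must confirm that the eDCNN class $\mathcal H^{s,R,d^*_{L-1},\dots,d^*_0,0}_{L_L,\dots,L_1}$ genuinely ranges over all the filter and bias choices that Theorem \ref{Theorem:comp-app-dfcn} supplies, so that the representation \eqref{edcnn-dfcn-app} lands inside the class and not in some strictly larger family. In particular I would check that the scalar inner biases $b_{\ell,j}$, the last-layer bias vectors $\vec b^{L_\ell}$, and the depth indices $L_\ell=\lceil d^*_\ell d^*_{\ell-1}/(s-1)\rceil$ are precisely the free parameters of the class. Once this compatibility of the two parametrizations is verified, the distance inequality follows with no further work.
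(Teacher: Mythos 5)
Your argument is correct and is exactly the route the paper intends: Theorem \ref{Theorem:comp-app-dfcn} gives the exact representation \eqref{edcnn-dfcn-app} of any DFCN feature map, which yields the inclusion $\mathcal H_{d^*_1,\dots,d^*_L}\subseteq \mathcal H^{s,R,d^*_{L-1},\dots,d^*_0,0}_{L_L,\dots,L_1}$, and the distance inequality follows from monotonicity of the infimum. The paper states the corollary follows "directly" from that theorem, so your bookkeeping of the parametrization is the only content to supply and you have supplied it.
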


With the help of the location-based pooling scheme  and convolutional structures, it is easy to check that the number of free parameters involved in $\mathcal H^{s,R,d^*_{L_1},\dots,d^*_0,0}_{L_L,\dots,L_1}$  and $\mathcal H_{d^*_1,\dots,d^*_L}$ are of the same order, which implies that the approximation capability of the eDCNN is at least not worse than that of DFCN. The problem is, however, that it is difficult to set the sizes of pooling  since $\{d_j^*\}_{j=1}^L$ for a specific learning task is usually unknown, making them  be   crucial parameters in eDCNN.   The following corollary devotes to the universal approximation property of eDCNN with specified $\{d_j^*\}_{j=1}^L$.

% \begin{corollary}\label{Corollary:univ-app-structure}
% Let $2\leq s\leq d$. For an arbitrary $\varepsilon>0$ and any $f\in C(\mathbb I^d)$, there exists an $L_1\in\mathbb N$ such that
% $$
%     \mbox{dist}(f,\mathcal H^s_{L_1})<\varepsilon.
% $$
% \end{corollary}
% Corollary \ref{Corollary:univ-app-structure} can be derived from the universal approximation property of shallow nets with the classical inner product structures \cite{hornik1993some} and Theorem \ref{Theorem:comp-DFCN} with $L=1$. Our next corollary is concerned with the universal approximation property of the deep nets with the convolutional structures.

\begin{corollary}\label{Corollary:mult-app-structure}
Let $2\leq s\leq d$ and  $\ell_1=\lceil\frac{d(d+1)}{s-1}\rceil$ and $\ell_j=\lceil\frac{(d+1)^2}{s-1}\rceil$ for $j\geq 2$. For an arbitrary $\varepsilon>0$ and any $f\in C(\mathbb I^d)$, there exist  an $L\in\mathbb N$  such that
$$
    \mbox{dist}(f,\mathcal H^{s,R,d+1,\dots,d+1,0}_{\ell_L,\dots,\ell_1})<\varepsilon.
$$
\end{corollary}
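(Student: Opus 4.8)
The plan is to derive Corollary~\ref{Corollary:mult-app-structure} as a direct consequence of Corollary~\ref{Corollary:DFCN-edCNN-app} combined with the universal approximation property of DFCNs. The key observation is that the corollary fixes all intermediate widths to the value $d+1$, which is precisely the minimal width that guarantees universality of DFCNs according to the minimal-width theory of \cite{hanin2017approximating}. First I would invoke the universal approximation theorem for DFCNs: for the fixed choice of widths $d_\ell^* = d+1$ for $\ell = 1,\dots,L$ (and $d_0^* = d$), the class $\mathcal H_{d+1,\dots,d+1}$ of such DFCNs with sufficiently large depth $L$ is dense in $C(\mathbb I^d)$. Concretely, given $\varepsilon > 0$ and $f \in C(\mathbb I^d)$, there exists an $L \in \mathbb N$ such that $\mbox{dist}(f,\mathcal H_{d+1,\dots,d+1}) < \varepsilon$, where the DFCN has $L$ hidden layers each of width $d+1$.

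Next I would match the pooling parameters $\ell_1,\dots,\ell_L$ in the corollary to the layer widths dictated by Corollary~\ref{Corollary:DFCN-edCNN-app}. Applying that corollary with $d_0^* = d$ and $d_\ell^* = d+1$ for $\ell \geq 1$, the required number of convolutional layers between successive poolings is $L_\ell = \lceil \frac{d_\ell^* d_{\ell-1}^*}{s-1}\rceil$. For $\ell = 1$ this gives $L_1 = \lceil \frac{(d+1)d}{s-1}\rceil = \ell_1$, and for $\ell \geq 2$ it gives $L_\ell = \lceil \frac{(d+1)^2}{s-1}\rceil = \ell_j$, which exactly coincides with the definitions of $\ell_1$ and $\ell_j$ in the statement. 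Thus the eDCNN class $\mathcal H^{s,R,d+1,\dots,d+1,0}_{\ell_L,\dots,\ell_1}$ appearing in the corollary is precisely the class that Corollary~\ref{Corollary:DFCN-edCNN-app} asserts can approximate any function at least as well as $\mathcal H_{d+1,\dots,d+1}$.

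Finally I would chain the two estimates: by Corollary~\ref{Corollary:DFCN-edCNN-app} applied to the width profile above,
$$
   \mbox{dist}\bigl(f,\mathcal H^{s,R,d+1,\dots,d+1,0}_{\ell_L,\dots,\ell_1}\bigr)
   \leq
   \mbox{dist}\bigl(f,\mathcal H_{d+1,\dots,d+1}\bigr)
   < \varepsilon,
$$
which is exactly the claimed bound. The argument is essentially a substitution of the particular width sequence into the already-established comparison inequality, together with the density of fixed-width DFCNs.

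The step I expect to require the most care is justifying the density of DFCNs with \emph{all} hidden widths fixed at $d+1$ as the depth $L \to \infty$. The comparison inequality in Corollary~\ref{Corollary:DFCN-edCNN-app} is purely a per-network representation statement and says nothing about approximation power on its own; the universality must come from the DFCN side. I would appeal to the minimal-width universal approximation result of \cite{hanin2017approximating}, which guarantees that ReLU networks of width $d+1$ and unbounded depth are dense in $C(\mathbb I^d)$ on a compact domain. Care is needed to confirm that this cited result genuinely permits all intermediate widths to equal $d+1$ (rather than requiring one wider layer), so that the width profile used in the corollary is legitimately universal; once that is secured, the existence of a suitable finite $L$ follows immediately and the chaining completes the proof.
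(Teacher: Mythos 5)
Your proposal matches the paper's own argument: the authors derive this corollary directly from Corollary~\ref{Corollary:DFCN-edCNN-app} together with the universality of fixed-width-$(d+1)$ DFCNs from \cite{hanin2017approximating}, exactly the chaining of the comparison inequality with DFCN density that you describe. Your matching of $\ell_1$ and $\ell_j$ to the $L_\ell=\lceil d_\ell^* d_{\ell-1}^*/(s-1)\rceil$ formula is the correct bookkeeping step, so the proof is correct and essentially identical to the paper's.
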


Corollary \ref{Corollary:mult-app-structure} can be derived directly from Corollary  \ref{Corollary:DFCN-edCNN-app} and \cite{hanin2017approximating} where the universality of DFCN with fixed width $d+1$ was verified. Corollary \ref{Corollary:mult-app-structure} shows that, with the unified   structure and specific
pooling scheme in the sense that the pooling is made as soon as the width of the network achieves $\lceil\frac{(d+1)^2}{s-1}\rceil$, just as Figure \ref{loc pooling} purports to show,  eDCNN with location-based pooling is universal in approximation.

\begin{figure}[htbp]
\centerline{\includegraphics[width=0.48\textwidth]{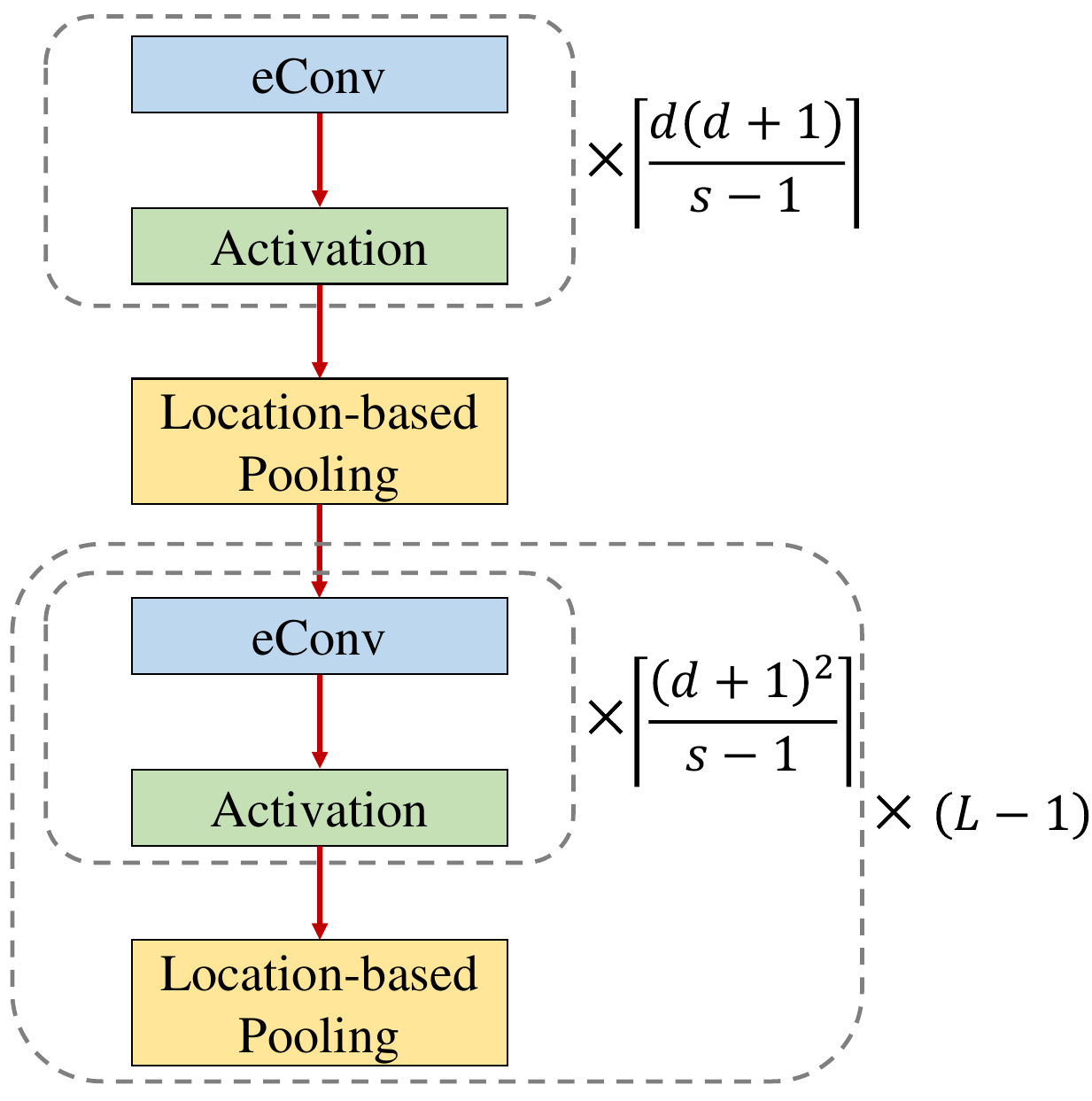}}
\caption{The unified structure and specific pooling scheme.}
\label{loc pooling}
\end{figure}

% In the next corollary, we show that the convolutional structures can realize the localized approximation, just as  \cite{chui2020realization} did for the inner product structures.
% \begin{corollary}\label{Corollary:localized-app}
% Let $d\geq 2$, $2\leq s\leq d$,  $a,b\in\mathbb R$ with $a<b$ and $\tau>0$ be an arbitrary real number. Then, there is an $L_1=\lceil\frac{4d^2}{s-1}\rceil$ and $L_2=\lceil\frac{4d}{s-1}\rceil$ such that for any $x\in\mathbb I^d$ there exists an $h\in \mathcal H^s_{L_1,L_2}$ such that
% $$
%    g(x)=\left\{\begin{array}{cc}
%        0 & \mbox{if}\ x\notin[a-\tau,b+\tau]^d, \\
%        1 &  \mbox{if}\ x\in [a+\tau,b-\tau]^d.
%    \end{array}
%    \right.
% $$
% \end{corollary}
% The above corollary can be derived by combining \cite[Proposition 1]{chui2020realization} and Theorem \ref{Theorem:comp-DFCN}. It shows that the convolutional structure can realize the locality of the input which is beyond the capability of shallow nets.

\section{ Performance of eDCNN in Learning}\label{Sec.Learning}
In this section, we study the learning performance of eDCNN in terms of verifying its strongly universal consistency \cite{gyorfi2002distribution,lin2022universal}. We conduct our analysis in the standard least-square regression framework \cite{gyorfi2002distribution,cucker2007learning} for the sake of brevity, although our results also hold for other loss functions.

In least-square regression \cite{gyorfi2002distribution,cucker2007learning},   samples in the data set $D:=\{z_i\}_{i=1}^m:=\{(x_i,y_i)\}_{i=1}^m$ are assumed to be drawn independently and identically (i.i.d.) from an unknown but definite  probability distribution $\rho$ on $Z:=\mathcal X\times\mathcal Y$, where $\mathcal X=\mathbb I^d$, and $\mathcal Y\subseteq\mathbb R$.   The aim is to learn a function $f_D$ based on $D$ to  minimize the generalization error
$
        \mathcal E(f):=\int_{\mathcal Z}(f(x)-y)^2d\rho,
$
which is theoretically minimized by the well known regression function \cite{gyorfi2002distribution} $f_\rho(x):=\int_{\mathcal Y}yd\rho(y|x)$. Since $\rho$ is unknown, its conditional mean $f_\rho$ is also unknown.
Our aim is then to find an estimator $f_D$ to minimize
\begin{equation}\label{equality}
      \mathcal E(f)-\mathcal E(f_\rho)=\|f-f_\rho\|_{L_{\rho_X}^2}^2,
\end{equation}
where $\rho_X$ is the marginal distribution of $\rho$ on ${\mathcal X}$.

One of the most important properties that a learner should have  is
that, as the sample size $m$ grows, the deduced estimator
converges to the real relation between the input and output. This property, featured as
the strongly universal consistency \cite{gyorfi2002distribution},  can be defined as follows.

\begin{definition}\label{DEFINITION WEAKLY UNIVEAL}
 A
sequence of regression estimators $\{f_m\}_{m=1}^\infty$ is called
strongly universally consistent, if
$$
       \lim_{m\rightarrow\infty}\mathcal E(f_m)-\mathcal E(f_\rho)=0
$$
holds with probability one for all Borel probability distributions $\rho$ satisfying
 $\int_{\mathcal Y}y^2d\rho(y|x) <\infty$.
\end{definition}

In our previous work \cite{lin2022universal} we show  that running empirical risk minimization (ERM) on eDCNN without pooling yields strongly universally consistent learners. In this section, we  show that imposing pooling scheme on eDCNN can maintain the strongly universal  consistency. For this purpose,
 we build up the learner via ERM:
\begin{equation}\label{ERM}
     f^{s,R,d^*_{L-1},\dots,d^*_0,0}_{D,L_L,\dots,L_1}:={\arg\min}_{f\in\mathcal H^{s,R,d^*_{L-1},\dots,d^*_0,0}_{L_L,\dots,L_1}}\mathcal E_D(f),
\end{equation}
where
% $\mathcal H^{s,R,d^*_{L-1},\dots,d^*_0,0}_{L_L,\dots,L_1}$ is the set of eDCNNs with pooling formed as \eqref{hypothesis-set} and
$
     \mathcal E_D(f)=\frac1{|D|}\sum_{i=1}^{|D|}(f(x_i)-y_i)^2
$
denotes the empirical risk of $f$.  As shown in Definition \ref{DEFINITION WEAKLY UNIVEAL},
expect for $\int_{\mathcal Y}y^2d\rho(y|x) <\infty$, there is not  any other restriction  on $y$, making the analysis be quite  difficult. A preferable way is to consider a truncation operator defined by $\pi_M t = \min\{M,|t|\}\cdot\mathrm{sgn}(t)$  on $y$, $y_{M_D}:=\pi_{M_D} y$  with $M_D\rightarrow\infty$.
Therefore, our final estimator is $\{\pi_{M_D}f^{s,R,d^*_{L-1},\dots,d^*_0,0}_{D,L_L,\dots,L_1}\}_{|D|=1}^\infty$.
It should be mentioned that such a truncation operator is widely adopted in demonstrating the universal consistency for numerous learning schemes \cite{gyorfi2002distribution,lin2022universal}, including local average regression, linear least squares,  shallow neural networks learning and eDCNN without pooling.
The following theorem  then presents   sufficient conditions on designing the pooling scheme and selecting the depth of eDCNN to guarantee the strongly universal  consistency.

\begin{theorem}\label{Theorem:universal consistency}
Let $\theta\in(0,1/2)$ be an arbitrary  real number, $2\leq s\leq d$,  and $L:=L_D\in\mathbb N$, $d_1^*,\dots,d_L^*\in\mathbb N$ possibly depending on $|D|$.
 If $d_1^*,\dots,d_L^*\geq d+1$, $\sum_{j=1}^Ld_{j-1}^*d_j^*\rightarrow\infty$, $M =M_{D}\rightarrow\infty$, $M^2_{D}|D|^{-\theta}\rightarrow0$ and
\begin{equation}\label{condition of  universal}
           \frac{M_D^4\left(\sum_{j=1}^Ld_{j-1}^*d_j^*\right)^2 \log (L_Dd^*_{\max}) \log(M_D|D|)}{|D|^{1-2\theta}}\rightarrow0,
\end{equation}
then $\pi_{M_{D}}f^{s,R,d^*_{L-1},\dots,d^*_0,0}_{D,L_L,\dots,L_1}$ is strongly universally consistent.
If in addition $x$ is supported on $\{j,j+1,\dots,j+p-1\}$ for some $p,j\in\mathbb N$, then for any $1\leq k\leq d-p+1$, there holds
\begin{eqnarray}\label{trans-inv-universal}
    \pi_{M_{D}}f^{s,R,d^*_{L-1},\dots,d^*_0,0}_{D,L_L,\dots,L_1}(x)=
    \pi_{M_{D}}f^{s,R,d^*_{L-1},\dots,d^*_0,k}_{D,L_L,\dots,L_1}(A_{k,d}x).
\end{eqnarray}
\end{theorem}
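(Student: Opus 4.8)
The plan is to prove the two displayed assertions separately: the strong universal consistency is a statistical-learning statement established by a bias--variance argument, whereas \eqref{trans-inv-universal} is a deterministic functional identity inherited directly from Theorem~\ref{Theorem:comp-app-dfcn}.

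For consistency I would follow the classical approximation--estimation decomposition of \cite{gyorfi2002distribution}, building on the pooling-free result of \cite{lin2022universal}. Abbreviating $f_D:=f^{s,R,d^*_{L-1},\dots,d^*_0,0}_{D,L_L,\dots,L_1}$ and $\mathcal H:=\mathcal H^{s,R,d^*_{L-1},\dots,d^*_0,0}_{L_L,\dots,L_1}$, the excess risk $\mathcal E(\pi_{M_D}f_D)-\mathcal E(f_\rho)=\|\pi_{M_D}f_D-f_\rho\|_{L_{\rho_X}^2}^2$ is split into an estimation (sample) error and an approximation error $\inf_{g\in\mathcal H}\|g-f_\rho\|_{L_{\rho_X}^2}^2$. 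For the estimation term I would apply a ratio-type Bernstein concentration inequality for truncated empirical risk minimizers, as in \cite{gyorfi2002distribution}, whose magnitude is governed by the covering number of the truncated class $\pi_{M_D}\mathcal H$; the truncation together with the moment condition $\int_{\mathcal Y}y^2 d\rho(y|x)<\infty$ and $M_D^2|D|^{-\theta}\to0$ controls the unbounded-output contribution.

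The crux is bounding this covering number, and here the key observation is that the location-based pooling $\mathcal S_{d',u,j}$ is a fixed, parameter-free, $1$-Lipschitz coordinate selection, so it introduces no trainable parameters and does not enlarge the complexity beyond that of the underlying restricted convolutional blocks. Consequently the number of free parameters is of order $N_D:=\sum_{j=1}^{L}d_{j-1}^*d_j^*$ (recall $L_j(s+1)\le 3d_{j-1}^*d_j^*$), and a layer-wise Lipschitz-propagation estimate gives a covering-number bound of the form $\log\mathcal N(\varepsilon,\pi_{M_D}\mathcal H,\|\cdot\|_\infty)\le C\,N_D^2\log(L_Dd^*_{\max})\log(M_D|D|/\varepsilon)$ for an absolute constant $C$. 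Inserting this into the deviation bound produces an estimation error controlled by the left-hand side of \eqref{condition of  universal}, which tends to $0$; summability of the accompanying exponential tail probabilities then upgrades convergence in probability to the almost-sure (strong) statement via Borel--Cantelli. I expect the propagation of Lipschitz constants and truncation levels through the composed convolution-plus-pooling blocks to be the main technical obstacle, since one must verify that pooling neither amplifies these constants nor inflates the parameter count.

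The approximation error is handled by universality: since $d_j^*\ge d+1$ and $\sum_{j}d_{j-1}^*d_j^*\to\infty$, the network parameters grow without bound, so Corollary~\ref{Corollary:mult-app-structure} (a consequence of Corollary~\ref{Corollary:DFCN-edCNN-app} and the fixed-width universality of \cite{hanin2017approximating}) forces $\inf_{g\in\mathcal H}\|g-f_\rho\|_{C(\mathbb I^d)}\to 0$; density of $C(\mathbb I^d)$ in $L_{\rho_X}^2$ then makes the approximation error vanish, completing the consistency proof. For \eqref{trans-inv-universal} I would argue deterministically at the level of the learned weights: writing the ERM minimizer \eqref{ERM} as $\hat{\vec a}\cdot\vec V^{s,R,d^*_{L-1},\dots,d^*_0,0}_{\hat\Theta}$ and redeploying the very same learned filters, biases and read-out $\hat{\vec a}$ with pooling offset $k$ gives $f^{s,R,d^*_{L-1},\dots,d^*_0,k}_{D,L_L,\dots,L_1}$. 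By \eqref{trans-inv-edcnn-for-dfcn} of Theorem~\ref{Theorem:comp-app-dfcn}, the feature maps coincide, $\vec V^{\dots,0}_{\hat\Theta}(x)=\vec V^{\dots,k}_{\hat\Theta}(A_{k,d}x)$, for every $x$ supported on $\{j,\dots,j+p-1\}$ and every $1\le k\le d-p+1$; composing with the shared linear read-out and the fixed univariate truncation $\pi_{M_D}$, which preserves equalities pointwise, yields \eqref{trans-inv-universal} immediately, so the learned predictor is automatically translation-invariant on window-supported inputs without any additional weight tuning.
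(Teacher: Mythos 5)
Your proposal is correct and follows essentially the same route as the paper: an approximation--estimation decomposition in the style of \cite{gyorfi2002distribution} and \cite{lin2022universal}, with the estimation error controlled by a covering-number bound of order $\bigl(\sum_{j}d_{j-1}^*d_j^*\bigr)^2\log(L_Dd^*_{\max})\log(M_D|D|/\varepsilon)$ that exploits the fact that location-based pooling adds no free parameters (the paper's Lemma~\ref{Lemma:covering-number}), the approximation error killed by the universality coming from Theorem~\ref{Theorem:comp-app-dfcn} with $d_j^*\geq d+1$, and \eqref{trans-inv-universal} obtained as a deterministic consequence of \eqref{trans-inv-edcnn-for-dfcn}. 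The only cosmetic differences are your use of a uniform-norm cover where the paper works with the $\ell^1$ empirical cover and \cite[Theorem 11.4]{gyorfi2002distribution}, neither of which changes the argument.
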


Theorem \ref{Theorem:universal consistency} presents  sufficient conditions on the structure and pooling scheme for verifying the strongly universal consistency of eDCNN. Different from \cite{lin2022universal}, Theorem \ref{Theorem:universal consistency} admits different pooling schemes to guarantee the universal consistency. Theorem \ref{Theorem:universal consistency} shows that with arbitrary $L\rightarrow\infty$ location-based pooling operators, eDCNN can reduce the width of network from infinite to finite.  The role of the location-based pooling is not only to trim the network and consequently reduces the number of parameters, but also to enhance the translation-invariance of the network structure as shown in \eqref{trans-inv-universal}. If we set $L=1$, then Theorem \ref{Theorem:universal consistency} is similar as   \cite[Theorem 1]{lin2022universal}.  If we set $d_1^*=\dots=d_L^*=d+1$, then the condition \eqref{condition of  universal} reduces to
\begin{equation}\label{condition of  universal-1}
           \frac{M_D^4L_D^2 \log (L_D) \log(M_D|D|)}{|D|^{1-2\theta}}\rightarrow0,
\end{equation}
which shows that eDCNN  with the structure in Corollary \ref{Corollary:mult-app-structure} (or Figure \ref{loc pooling}) is universally consistent.

\section{Numerical Verifications}\label{sec.Numerical}

In this section, we conduct both toy simulations and real data experiments to verify the excellent performance of eDCNN, compared with cDCNN and  DFCN.  In all these   simulations, we train the networks with an Adam optimizer for 2000 epochs. The piece-wise learning rates are [0.003, 0.001, 0.0003, 0.0001], changing at 800, 1200, 1500 epochs. Each experiment is repeated for 10 times and the outliers that the training loss does not decrease are eliminated and we report the average loss results. All the following experiments are conducted on a single GPU of Nvidia GeForce RTX 3090. Codes for our experiments are available at https://github.com/liubc17/eDCNN\_zero\_padding.

There are mainly six purposes in our experiments. The first one is to show the advantage of the  expansive convolutional structure in \eqref{convolution} over the  inner structure in DFCN; The second one is to show the outperformance of eDCNN in extracting the translation-equivalence features, compared with DFCN and cDCNN; The third one  is to  evaluate the  ability of eDCNN in learning clean data; The fourth one is to study the performance of eDCNN in learning noisy data; The fifth one is to show the universal consistency of eDCNN;
The final purpose is to verify the feasibility of eDCNN on two real world data for human activity recognition  and  heartbeat classification.

\begin{figure*}[htbp]
\centerline{\includegraphics[width=0.98\textwidth]{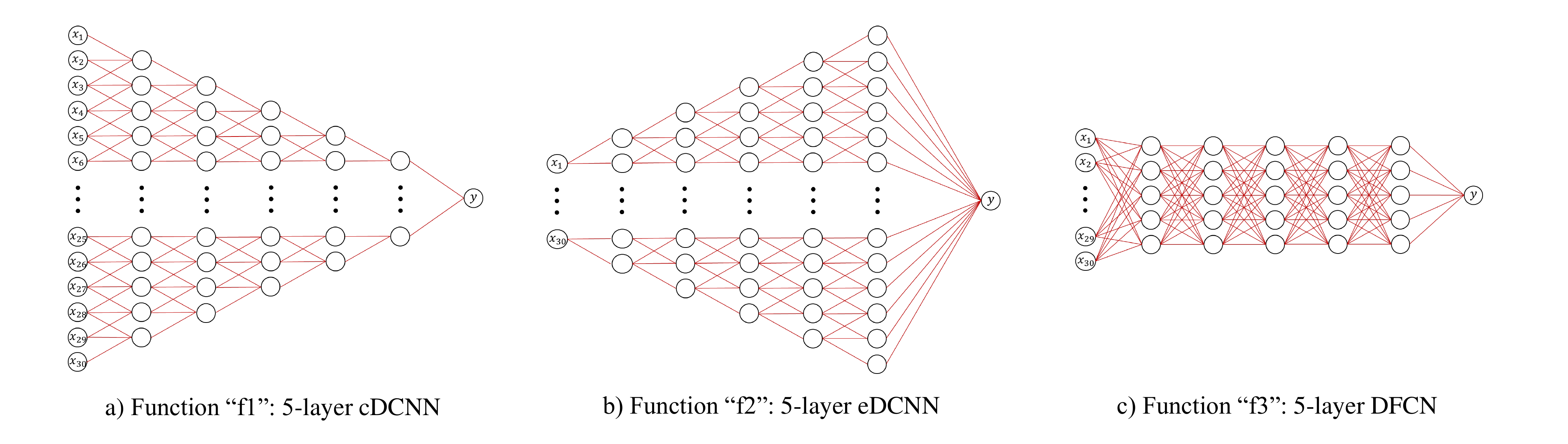}}
\caption{The fitted three functions. ``f1'' is generated by a 5-layer randomly initialized cDCNN. ``f2'' is generated by a 5-layer randomly initialized eDCNN. ``f3'' is generated by a 5-layer randomly initialized fully connected network.}
\label{3 fit functions}
\end{figure*}

\subsection{Advantages of the expansive convolutional structure}

In this simulation, we verify the expansive convolutional structure via comparing it with the inner structure in DFCN and contracting convolutional structure in cDCNN. Our basic idea is to show that the expansive convolutional structure is capable of representing the inner product structure with much fewer free parameters  but not vice-verse. For this purpose, we generate three implicit functions, denoted as ``f1'', ``f2'' and ``f3'' respectively, in Figure \ref{3 fit functions}, where``f1'' is generated by a 5-layer randomly initialized cDCNN,  ``f2'' is generated by a 5-layer randomly initialized eDCNN and ``f3'' is generated by a 5-layer randomly initialized fully connected network. The only difference between ``f1'' and ``f2'' is that ``f2'' pads 2 zeros on both side in the convolution process. For training each function, 900 training samples and 100 test samples are generated. Each sample is a 30-dimension vector with continuous 5-dimension entries uniformly distributed in [0, 1) and zero for the others.

\begin{table}[]
\centering
\caption{The fitting results of 3 functions.}
\label{Tab:fit}
\begin{tabular}{cccc}
\hline
Fit function                        & Network config.             & Test Loss    & Params.       \\ \hline
\multirow{6}{*}{function ``f1''}    & 1-layer fc                  & 5.96e-3      & 300           \\
                                    & 1-block multi-conv cDCNN    & 5.67e-3      & 16             \\
                                    & 1-block multi-conv eDCNN    & 2.23e-3      & 16             \\
                                    & 2-layer fc                  & 7.74e-3      & 400            \\
                                    & 2-block multi-conv cDCNN    & 2.95e-3      & 32             \\
                                    & 2-block multi-conv eDCNN    & 2.40e-3      & 32             \\ \hline
\multirow{6}{*}{function ``f2''}    & 1-layer fc                 & 1.37e-2       & 300            \\
                                    & 1-block multi-conv cDCNN   & 8.95e-3       & 16             \\
                                    & 1-block multi-conv eDCNN   & 8.66e-3       & 16             \\
                                    & 2-layer fc                 & 7.06e-3       & 400            \\
                                    & 2-block multi-conv cDCNN   & 2.89e-3       & 32             \\
                                    & 2-block multi-conv eDCNN   & 2.29e-3       & 32             \\
                                    \hline
\multirow{6}{*}{function ``f3''}    & 1-layer fc                 & 1.04e-3       & 300            \\
                                    & 1-block multi-conv cDCNN   & 8.30e-3       & 16             \\
                                    & 1-block multi-conv eDCNN   & 1.18e-3       & 16             \\
                                    & 2-layer fc                 & 1.16e-3       & 400            \\
                                    & 2-block multi-conv cDCNN   & 7.53e-3       & 32             \\
                                    & 2-block multi-conv eDCNN   & 9.30e-4       & 32             \\ \hline
\end{tabular}
\end{table}

% Each layer has 1 filter of filter size 3, without padding and bias. The convolution strides are set as [1, 1]. Each layer is followed by a ReLU activation. ``f2'' is generated by a 5-layer randomly initialized eDCNN. The only difference between ``f1'' and ``f2'' is that ``f2'' pads 2 zeros on both side in the convolution process. ``f3'' is generated by a 5-layer randomly initialized fully connected network.

% We verify the f  eature extraction advantage of eDCNN over shallow net in Theorem \ref{Theorem:app-dcnn-res}. As far as the feature extraction is concerned, the multi-level convolutional structures beat the classical inner product structures in the sense that the former can guarantee the translation-invariance without loss other capabilities of the latter.

We use multi-level convolutional structures and fully connected structures  to fit the three functions.
As shown in Table \ref{Tab:fit}, we use 1-layer and 2-layer fully connected (denoted as ``fc'') networks as baselines. We use 1-block and 2-block multi-level expansive convolutional (denoted as ``multi-conv'') structures to factorize corresponding 1-layer and 2-layer fully connected networks.
Each  layer of the used 1-layer and 2-layer fully connected networks has 10 units and uses ReLU activation and each block of the used 1-block and 2-block multi-level cDCNN or eDCNN convolutional structures has 5 convolution layers. Each layer has 1 filter of filter size 3. The former 4 layers do not use bias and activation. Only after each block, there is a bias and activation. To match the feature dimension of multi-level eDCNN, a max pooling with pooling size 4 is used after the first block and a max pooling with pooling size 2 is used after the second block.

There are four interesting phenomenon can be found in Table \ref{Tab:fit}: 1)  The multi-level expansive  convolutional structure performs at least not worse than the inner product structure in approximating all three functions with much fewer parameters, showing that presenting the expansive convolutional structure to take place of the inner product structure succeeds in reducing the number of parameters without sacrificing the performance of feature extraction;  2) In fitting ``f1'' and ``f2'' which possesses some translation-equivalence, multi-level convolutional structures achieve much lower test loss than their inner product structure counterparts, exhibiting the outperformance of the convolutional structure in extracting the translation-equivalence; 3) In fitting ``f3'', although 1-block multi-level convolutional structure has slightly higher test loss than 1-layer``fc'', 2-block multi-level convolutional structure achieves the least loss. This is  mainly due to the fact that ``f3'' is generated by fully connected networks and fitting ``f3'' is   easier for the inner product structure. This phenomena illustrates that with more hidden layers, inner product can be represented by the multi-convolutional structure, just as Proposition \ref{Proposition:C-Struc-fea-deep} says; 4) The expansive  convolutional structure performs always better than the contracting convolutional structure, especially for 1-block setting. The reason for this is that the contracting nature limits the representing performance of the corresponding network, although it may encodes some invariance into the structure.
All these demonstrates the power of the expansive convolutional structure and verifies our theoretical assertions in Section \ref{Sec.zero-padding} and Section \ref{Sec.pooling}.

\subsection{eDCNNs  for translation-equivalence data}
\label{translation-invariance}

In this simulation, we
 verify the outperformance of eDCNNs   in approximating translation-equivalence functions over DFCNs and cDCNNs.
 The used translation-equivalence functions are adapted from ``f1'' and ``f2'' in Figure \ref{3 fit functions}, denoted as ``f1m'' and ``f2m''. The only difference is that ``f1m'' and ``f2m'' use a same weight bias with value 0.01 after each convolutional layer. We use fully connected networks, cDCNN and eDCNN to fit ``f1m'' and ``f2m''.
The depth of the used fully connected networks, cDCNN and eDCNN for fitting ``f1m'' and ``f2m'' is set as 4. Each fully connected layer has 10 units and uses ReLU activation. Each convolutional layer of cDCNN and eDCNN has 1 filter of filter size 3 and eDCNN pads 2 zeros on both side in the convolution process. 900 training samples and 100 test samples are generated for our purpose. Each sample is a 30-dimension vector with continuous 5-dimension entries uniformly distributed in [0, 1) and zero for the others.

The fitting RMSE loss results of the three network configurations are shown in Figure \ref{5layer_fitting} and Figure \ref{5layer_edge}, where the ``Network'' axis indicates the network configuration, the``Pooling-Bias'' axis indicates the use of pooling and bias, ``W'' denotes using pooling, ``W/o'' denotes not using pooling, and ``T'', ``F'' and ``S'' denote trainable, non-trainable and restricting the trainable bias vector to be the same, respectively. The only difference between Figure \ref{5layer_fitting} and Figure \ref{5layer_edge} is that we restrict the continuous 5-dimension entries on the edge of the 30-dimension vector for the 100 test samples to show the translation-equivalence of eDCNN.

\begin{figure*}[htbp]
\centerline{\includegraphics[width=\textwidth]{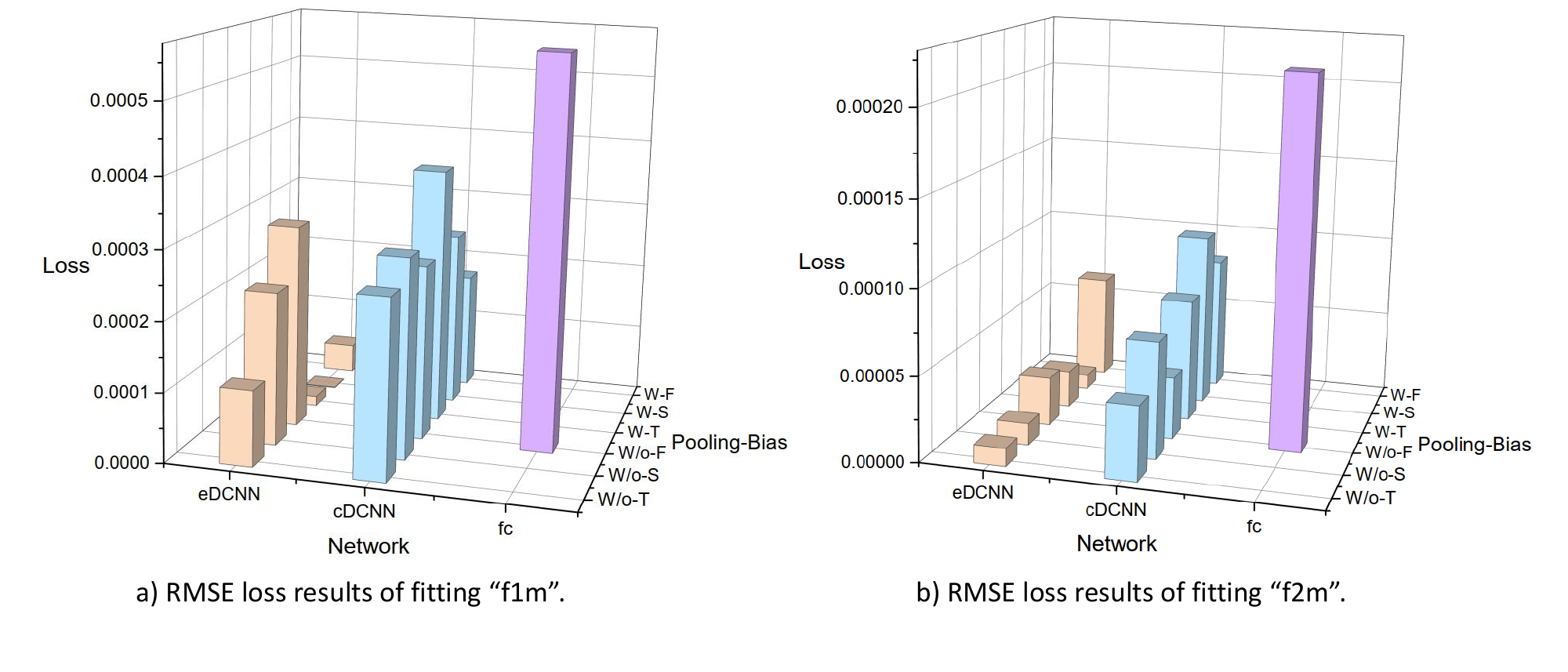}}
\caption{Fitting ``f1m'' and ``f2m'' of DFCN, cDCNN and eDCNN}
\label{5layer_fitting}
\end{figure*}
\begin{figure*}[htbp]
\centerline{\includegraphics[width=\textwidth]{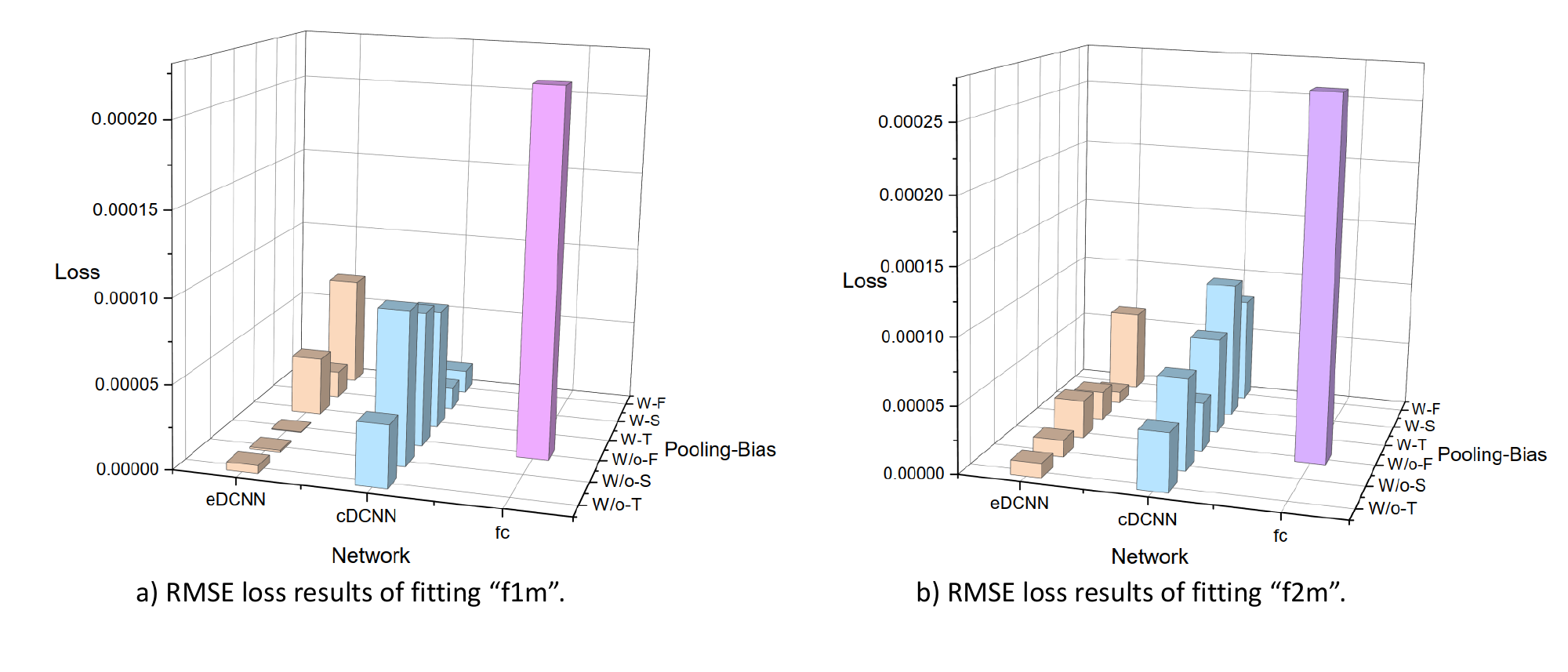}}
\caption{Fitting ``f1m'' and ``f2m'' of DFCN, cDCNN and eDCNN on the position restricted test samples. }
\label{5layer_edge}
\end{figure*}

There are mainly five observations presented in the above figures: 1) In approximating translation-equivalence functions, both cDCNN and eDCNN perform much better than DFCN. The main reason is that cDCNN and eDCNN encode the translation-equivalence in the network structure\footnote{If the support of input is far away from the edge, then cDCNN also possesses the translation-equivalence.} which is beyond the capability of DFCN; 2)  From Figure  \ref{5layer_fitting} where  testing points are drawn  far from the edge, eDCNN performs at  least not worse than cDCNN while for  Figure \ref{5layer_edge} where  testing points are drawn on the edge, eDCNN is much better than cDCNN, showing the outperformance of eDCNN over cDCNN   in encoding the translation-equivalence and verifying Lemma \ref{lemma:bad-translation-equi} and Proposition \ref{Proposition:translation-equiv}; 3) eDCNN with pooling and restricting  trainable bias vectors performs stable for all four tasks. The main reason is that  eDCNN with restricting  trainable bias vectors does not destroy the translation-equivalence of the network and therefore is more suitable for approximating translation-equivalence functions and ``f1m'' and ``f2m'' are biased with  0.01 after each convolutional layer, which verifies Theorem \ref{Theorem:comp-app-dfcn}; 4) Though the effect of pooling is a little bit unstable for different  learning tasks and different bias vector selections, it is efficient for the proposed eDCNN with restricting  trainable bias vectors, which also verifies Theorem \ref{Theorem:comp-app-dfcn}; 5) As far as the bias vector selection is concerned, it can be found in the above figures that
restricting the bias vectors to be constants in the same hidden layers generally outperform other two methods, exhibiting a ``less is more'' phenomena in the sense that less tunable parameters may lead to better approximation performance. This is due to the fact that parameterizing all elements of the bias vector destroys the translation-equivalence of eDCNN. All these show the advantages of the proposed eDCNN and verify our theoretical assertions in Section \ref{Sec.feature-extraction}.

\subsection{Learning ability of eDCNN for clean data}
In  this section, we show the learning performance of eDCNN in learning clean data $y_i=f(x_i)$ for $f$ being either $f_1(x)=x_1 + x_2 + x_3  x_4 + x_5^2$ with the entries of $x$ are uniformly distributed in $(-1, 1)$ or  $f_2(x)=x_1x_2x_3x_4x_5 + x_2x_3x_4x_5x_6 + \cdots + x_{26}x_{27}x_{28}x_{29}x_{30}$ with continuous 5-dimension entries uniformly distributed in $[0, 1)$ and zero for the others. It is easy to check that $f_2$ is translation-invariant while $f_1$ does not have any transformation-invariance property.   For each problem, 1000 training samples and 100 test samples are generated randomly.

To show the power of eDCNN, we consider  the following 5 network configurations:

 1) fully connected networks (denoted as DFCN). Each fully connected layer has 10 units and uses ReLU activation.

2) contracted DCNN (denoted as cDCNN). Each convolutional layer has 1 filter with filter size 3. Each layer has a trainable bias restricted to share the same value, expect for the last layer has a trainable bias without any restriction. Each layer does not use padding and is followed by a ReLU activation.

3) contracted DCNN followed by one fully connected layer (denoted as ``cDCNN+fc''). The setting is the same as cDCNN configuration, except that there is a fully connected layer after the convolution module. The number of units of the fully connected layer is the same as the output dimension of the convolution module after flatten operation.

4) expansive DCNN (denoted as eDCNN). Each convolutional layer has 1 filter with filter size 3. Each layer has a trainable bias restricted to share the same value, expect for the last layer has a trainable bias without any restriction. Each layer pads 2 zeros on both side in the convolution process and is followed by a ReLU activation.

5) expansive DCNN followed by one pooling layer (denoted as ``eDCNN+pl''). The setting is the same as eDCNN configuration, except that there is a max pooling with pooling size 2 after the convolution module.

 \begin{figure}[htbp]
\centerline{\includegraphics[width=0.48\textwidth]{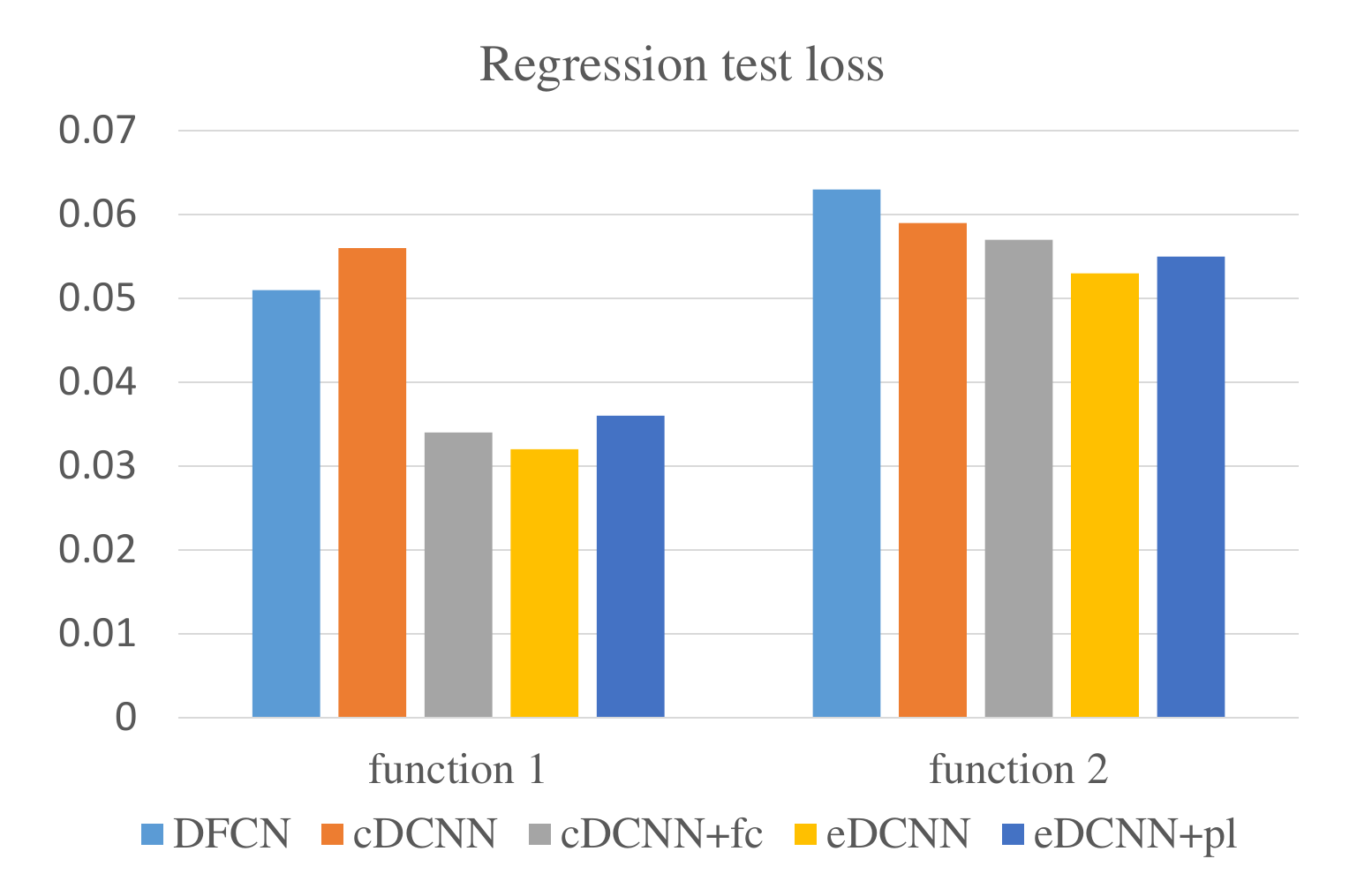}}
\caption{Learning performance of mentioned approaches on clean data}
\label{regression}
\end{figure}

The numerical results can be found in Figure \ref{regression}. There are three interesting observation showed in the figure: 1)   eDCNN as well as ``eDCNN+pl'' always performs better than other two structures in both translation-invariance data and non-translation-invariance data. Due to the universality in approximation and learning, eDCNN is capable of learning arbitrary regression functions neglecting whether it possesses some transformation-invariance, which is far beyond the capability of cDCNN; 2) cDCNN performs worse than DFCN in learning $f_1$ while better than DFCN in learning $f_2$, although cDCNN does not possesses the universality in approximation and learning. This is due to the fact that $f_2$ possesses translation-invariance  and the convolutional structure is easy to capture this invariance while the classical inner product structure fails; 3) eDCNN
performs a little bit than the popular ``cDCNN+fc'' in both learning tasks, showing that eDCNN would be a preferable alternative of the classical setting of convolutional neural networks. All these verify our theoretical setting in Section \ref{Sec.Learning} for clean data.

\begin{table}[]
\centering
\caption{The learning results of 5 network architectures.}
\label{Tab:learning}
\resizebox{\linewidth}{!}{
\begin{tabular}{c|ccccc}
\hline
Test Position & \multicolumn{5}{c}{beginning}                  \\
Network       & DFCN   & cDCNN  & cDCNN+fc & eDCNN  & eDCNN+pl \\
Loss          & 0.0578 & 0.0496 & 0.0447   & 0.0401 & 0.0406   \\ \hline
Test Position & \multicolumn{5}{c}{middle}                     \\
Network       & DFCN   & cDCNN  & cDCNN+fc & eDCNN  & eDCNN+pl \\
Loss          & 0.0597 & 0.0430 & 0.0467   & 0.0401 & 0.0406   \\ \hline
Test Position & \multicolumn{5}{c}{end}                        \\
Network       & DFCN   & cDCNN  & cDCNN+fc & eDCNN  & eDCNN+pl \\
Loss          & 0.0698 & 0.0556 & 0.0426   & 0.0385 & 0.0415   \\ \hline
\end{tabular}
}
\end{table}

\begin{figure}[htbp]
\centerline{\includegraphics[width=0.4\textwidth]{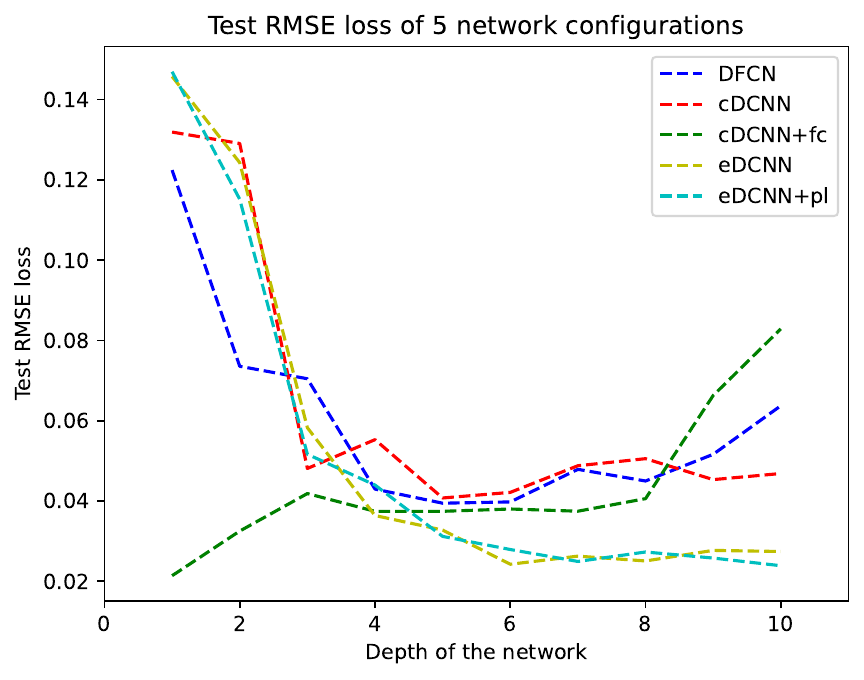}}
\caption{Relation between RMSE  of test position ``beginning'' and network depth. The number of training samples is 20000.}
\label{f3_first_samples20000}
\end{figure}

\subsection{Learning ability of eDCNN for noisy data}
In this simulation, we show the outperformance of eDCNN in learning noisy data. The data are generated by $y_i=f(x_i)+\sigma_i$ with $f_3(x) = \sin{(x_1^2 + \cdots + x_{30}^2)} + \frac{1}{2}\cos{(x_1^2 + \cdots + x_{30}^2)}$  with only continuous 5-dimension entries uniformly distributed in [0, 1) and zero for the other and $\sigma_i$ is drawn i.i.d. according to the Gaussian distribution $\mathcal N(0,0.01)$. To show the differece between cDCNN and eDCNN in reflecting the translation-invariance, we evaluate on different positions of the continuous 5-dimension entries. Our numerical results can be found in   Table \ref{Tab:learning},  where test position ``beginning'' denotes that the continuous 5-dimension entries are at positions 1 to 5, test position ``middle'' denotes that the continuous 5-dimension entries are at positions 13 to 17, and test position ``end'' denotes that the continuous 5-dimension entries are at positions 26 to 30. We also figure out the role of depth for the mentioned five structures in Figure \ref{f3_first_samples20000}.

From Table \ref{Tab:learning} and Figure \ref{f3_first_samples20000}, it is safe to draw the following two conclusions: 1)  eDCNN as well as ``eDCNN+pl'' performs better than DFCN and cDCNN in three test positions. Especially, for ``beginning'' and ``end'' positions, eDCNN performs much better. This is due to the fact that cDCNN does not encode the translation-equivalence when the support of function is at the edge, showing the necessity of zero-padding; 2) eDCNN as well as ``eDCNN+pl'' performs at least not worse than ``cDCNN+fc''. Furthermore,  it follows from Figure \ref{f3_first_samples20000} that  eDCNN and ``eDCNN+pl'' behave stable  with respect to the depth, which is similar as ``cDCNN+fc''. Both observations
illustrate that zero-padding is a feasible and efficient alternative of fully connected layer in learning noisy data.

% ``eDCNN'' achieves the smallest loss of all three positions, which validates that eDCNN of multiple convolution filters has better learning capacity than shallow nets while learning translation-invariant data. ``cDCNN'' achieve smaller loss at ``middle'' position than ``first'' and ``last'' position. It is because that when the continuous 5-dimension entries are at the edge of the data, cDCNN loses information during convolution without zero-padding.

%  We report the best regression results among different depths of each configuration of $f_1(x)$ and $f_2(x)$ in Figure \ref{regression}. ``eDCNN'' achieves the smallest loss on both $f_1(x)$ and $f_2(x)$, which shows that eDCNN has better learning ability than cDCNN and shallow nets. ``cDCNN+fc'' has less loss than ``cDCNN'', which validates that cDCNN needs to introduce fully connected layers to enhance feature extraction capacity. ``eDCNN+pl'' has larger loss than ``eDCNN'', which reveals that pooling requires strategic use.

\begin{figure*}[t]
    \centering
	\subfigure[Relation for $f_2$]{\includegraphics[width=7cm,height=6cm]{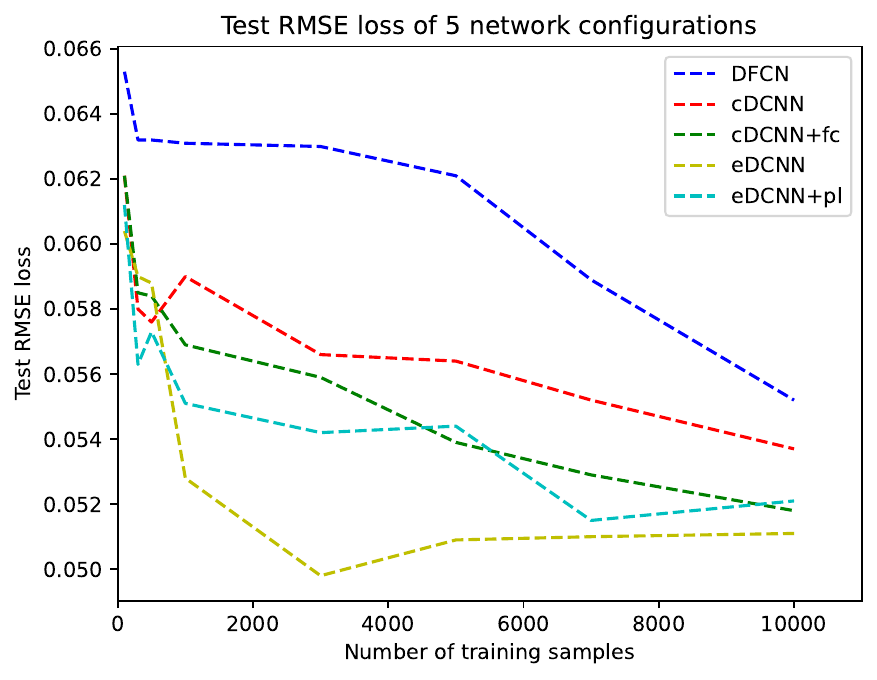}}\hspace{0.3in}
    \subfigure[Relation for $f_3$]{\includegraphics[width=7cm,height=6cm]{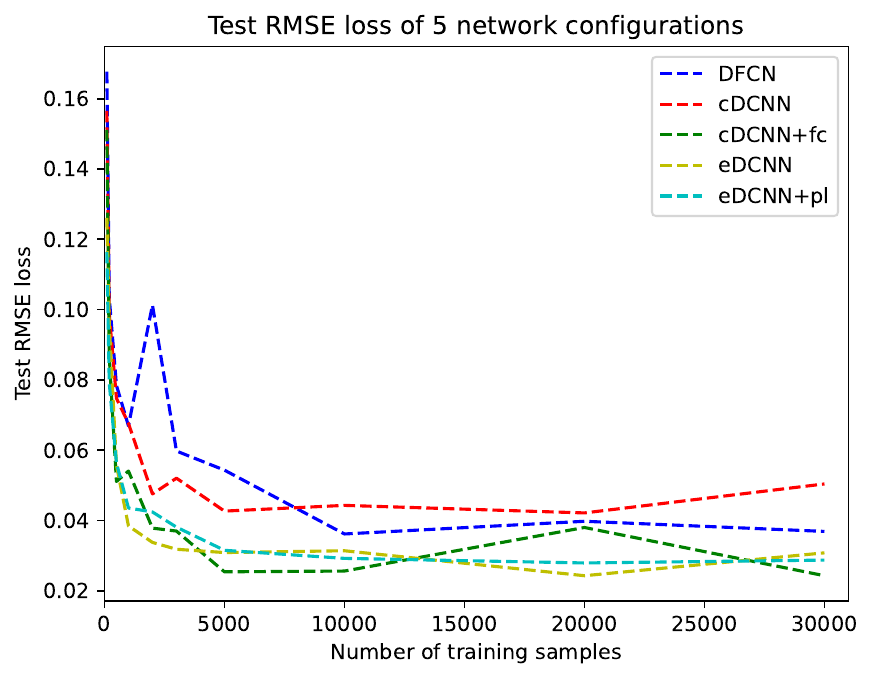}}
	\caption{Relation between RMSE and data size for $f_2$ and $f_3(x)$ with test position ``beginning''.}
\label{Fig:ComparisonMSEtest}
\end{figure*}

\subsection{Universal consistency of eDCNN}
In this simulation, we show the universal consistency of eDCNN via showing the relation between the  generalization error  and size of the data. In particular, we report this relation on both clean and noisy data for the mentioned 5  network configurations. To be detailed, we report the smallest loss among different network depths for learning $f_2$ and set depth to be 6 for learning $f_3$. The reason of different setting for different simulations is due to the stability of depth shown in the previous subsection.
The numerical results can be  found in Figure \ref{Fig:ComparisonMSEtest}.

There are also three interesting observations exhibited in Figure \ref{Fig:ComparisonMSEtest}: 1) For $f_2$, the test loss curve of  eDCNN  is much lower than others while ``eDCNN+pl''                                   achieves the suboptimal results, which validates that eDCNN has better learning capability than DFCNs and cDCNNs in learning clean translation-invariant data; 2) For $f_3$, eDCNN, ``eDCNN+pl'' and ``cDCNN+fc'' are better than DFCN and cDCNN, mainly due to the non-translation-invariance of DFCN and cDCNN when  $f_3$ is supported at the edge; 3)  RMSE roughly  decreases with respect to the number of training samples, demonstrating the consistency of the deep nets estimates. All these verify our theoretical assertions in Section \ref{Sec.Learning}.

% For more detail, we report the relation between regression loss of function $f_2(x)$ varying with the number of training samples of the 5 network configurations in Figure \ref{f2samples}. For each number of training samples, we report the smallest loss among different network depths.

% We report the relation between regression loss of function $f_3(x)$ of test position ``first'' and the number of training samples of the 5 network configurations in Figure \ref{f3_first_depth6}. The depths of the 5 networks are set as 6. The results shows similar conclusion as that of Figure \ref{f3_first_samples20000}.

\begin{figure}[htbp]
\centerline{\includegraphics[width=0.4\textwidth]{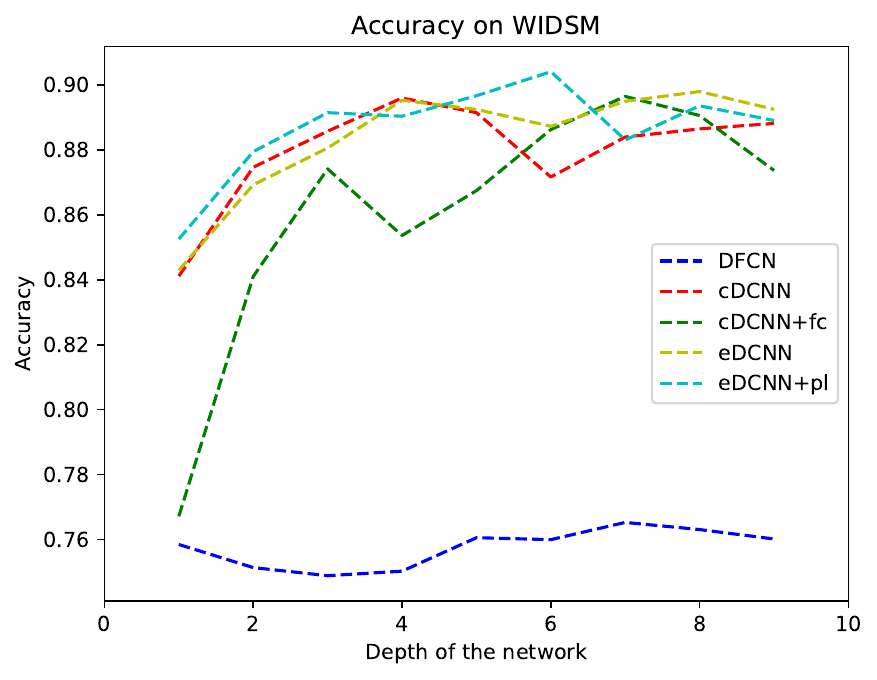}}
\caption{Classification accuracy results of the 5 network configurations on WISDM dataset.}
\label{WISDM}
\end{figure}

% \begin{figure*}[htbp]
% \centerline{\includegraphics[width=\textwidth]{}}
% \caption{X, y, z axis change for 180 frames of one representative sample from each class.}
% \label{6classes}
% \end{figure*}

% We report the relation between learning loss of function $f_3(x)$ of test position ``first'' and network depth of the 5 network configurations in Figure \ref{f3_first_samples20000}. The number of training samples is 20000. From the results, ``fc'' is not good at dealing with translation-invariant data. ``cDCNN'' is not able to achieve small loss beacuse of the information loss during convolution without bias. ``cDCNN+fc'' is able to achieve small loss using only 1 layer convolution. ``eDCNN'' and ``eDCNN+pl'' can also achieve small loss with multiple layers.

\begin{figure}[htbp]
\centerline{\includegraphics[width=0.4\textwidth]{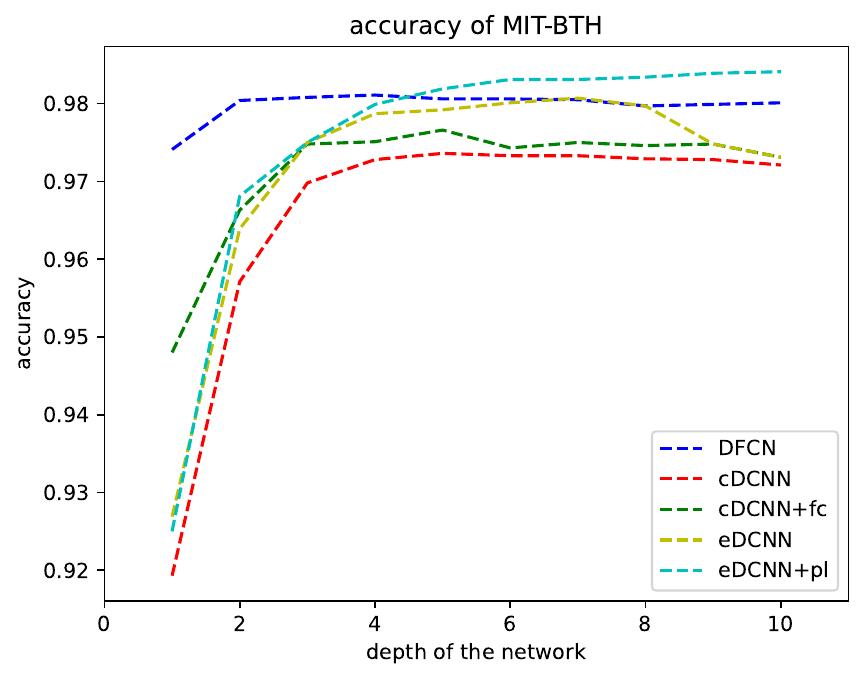}}
\caption{Classification accuracy results of the 5 network configurations on MIT-BTH dataset.}
\label{MITacc}
\end{figure}

\begin{figure}[htbp]
\centerline{\includegraphics[width=0.4\textwidth]{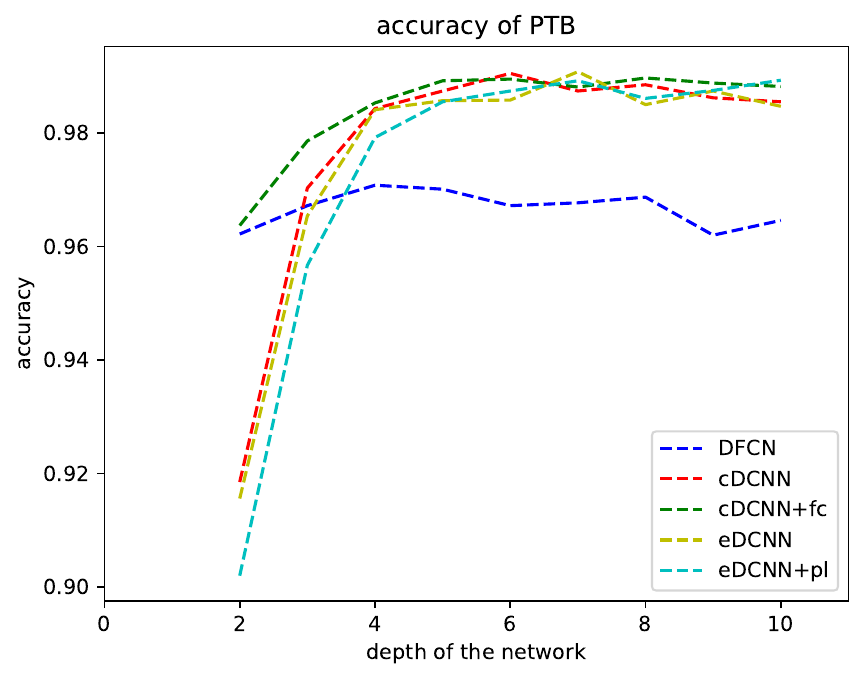}}
\caption{Classification accuracy results of the 5 network configurations on PTB dataset.}
\label{PTBacc}
\end{figure}

\subsection{Real date everifications}
In this part, we aim at showing the usability and efficiency of eDCNN on two real world data.
\subsubsection{WISDM dataset}
For human activity recognition task, we evaluate on WISDM dataset which includes 1098207 samples with 6 categories of movement (walking, running, jogging, etc.).

The network structures are as follows. There are  60 units in each  fully connected layer and 20 filters with filter length 9 in the the convolutional layer of cDCNN, ``cDCNN+fc'', eDCNN and ``eDCNN+pl''.
The number of units of the fully connected layer of ``cDCNN+fc'' is set as 100. The pooling size of the max pooling layer of ``eDCNN+pl'' is set as 2 and the max pooling layer lies after the convolutional module. We train the network with an Adam optimizer for 150 epochs. The batch size is set as 512.

Figure \ref{WISDM} shows the classification accuracy results of the 5 network configurations on WISDM dataset varying with depth. It can be found that ``eDCNN+pl'' achieves the highest test accuracy and holds high test accuracy with different depths, which shows the good learning capacity of eDCNN with appropriate pooling scheme. DFCN is not good at dealing with such data that contains temporal information. Even when dealing with such data, cDCNN outperforms ``cDCNN+fc'' overall, which further validates the disadvantage of the fully connected layer in dealing with such data.

% Figure \ref{6classes} shows x, y, z axis change for 180 frames of one representative sample from each class.

\subsubsection{ECG heartbeat dataset}

For ECG heartbeat classification task, we evaluate on the MIT-BIH Arrhythmia Database and the PTB Diagnostic ECG Database. An ECG is a 1D signal that is the result of recording the electrical activity of the heart using an electrode. It is a very useful tool that cardiologists use to diagnose heart anomalies and diseases. The MIT-BIH Arrhythmia data set includes 109446 samples with 5 categories, and the PTB Diagnostic ECG Database includes 14552 samples with 2 categories.  Similarly, the network depth of DFCN, cDCNN, ``cDCNN+fc'', eDCNN and ``eDCNN+pl'' varies from 1 to 10. For fully connected layers, each layer has 80 units. For the convolutional layer, each layer has 16 filters with length  6. The number of units of the fully connected layer is   64. The pool size of the max pooling layer of ``eDCNN+pl'' is  2 and the max pooling layer lies after the first convolutional layer. We train the network with an Adam optimizer for 100 epochs.

Figure \ref{MITacc} shows the classification accuracy results of the 5 network configurations on MIT-BTH dataset varying with depth. ``eDCNN+pl'' achieves the highest test accuracy and holds high test accuracy with large network depths, which shows the good learning capacity of eDCNN with appropriate pooling scheme. DFCN achieves good and stable results among different network depths. The relatively poor results of cDCNN and ``cDCNN+fc'' may due to the information loss while convolving at the edge of feature without bias.

Figure \ref{PTBacc} shows the classification accuracy results of the 5 network configurations on PTB dataset varying with depth. DFCN is not good at dealing with this dataset. The other four configurations achieve good and vary close results. It may because this dataset is relatively easy to classify.

 \section{Conclusion}
In this paper, we studied the representation and learning performance of deep convolutional neural networks with zero-padding (eDCNN). After detailed analysis of the roles of zero-padding, pooling and bias vectors, we found that eDCNN succeeds in encoding the translation-equivalence (or translation-invariance for eDCNN with pooling) without sacrificing the universality in approximation and learning of DFCNs. This demonstrated that eDCNN is essentially better than  DFCN  in translation-equivalence related applications, since the network structure itself can reflect this data feature without any training. Noting that deep neural networks without zero-padding (cDCNN) do not possess the universality, we found that eDCNN is essentially better than cDCNN in the sense that eDCNN can extract any  data features if it is sufficiently deep, which is beyond the capability of cDCNN. This assertion was also made in extracting the  translation-equivalence (or translation-invariance) by terms that cDCNN fails to encode it if the support of data is on the edge.   All these findings together with solid theoretical analysis and numerous numerical experiments
illustrated the outperformance of eDCNN over cDCNN and DFCN and provided a unified network structure with theoretical verifications for feature extraction and learning purposes.

\section*{Acknowledgement}
Z. Han was partially supported by the National Key Research
and Development Program of China under Grant 2020YFB1313400, the CAS Project for Young Scientists in Basic Research under Grant YSBR-041, the National Natural Science Foundation of China under Grant 61903358, 61773367, 61821005, and the Youth Innovation Promotion Association of the Chinese Academy of Sciences under Grant 2022196, Y202051.
S. B. Lin was partially
	supported by   the
	National Natural Science Foundation of China [Grant Nos. 62276209]. D. X. Zhou was partially  supported in part by the Research Grants Council of Hong Kong [Project Nos. CityU 11308020, N\_CityU 102/20, C1013-21GF], Hong Kong Institute for Data Science,  Germany/Hong Kong Joint Research Scheme  [Project No. G-CityU101/20],
  Laboratory for AI-Powered Financial Technologies, and National Science Foundation of China [Project No. 12061160462].

\bibliographystyle{IEEEtran}
\bibliography{DCNN}{}

% Generated by IEEEtran.bst, version: 1.13 (2008/09/30)
\begin{thebibliography}{10}
\providecommand{\url}[1]{#1}
\csname url@samestyle\endcsname
\providecommand{\newblock}{\relax}
\providecommand{\bibinfo}[2]{#2}
\providecommand{\BIBentrySTDinterwordspacing}{\spaceskip=0pt\relax}
\providecommand{\BIBentryALTinterwordstretchfactor}{4}
\providecommand{\BIBentryALTinterwordspacing}{\spaceskip=\fontdimen2\font plus
\BIBentryALTinterwordstretchfactor\fontdimen3\font minus
  \fontdimen4\font\relax}
\providecommand{\BIBforeignlanguage}[2]{{%
\expandafter\ifx\csname l@#1\endcsname\relax
\typeout{** WARNING: IEEEtran.bst: No hyphenation pattern has been}%
\typeout{** loaded for the language `#1'. Using the pattern for}%
\typeout{** the default language instead.}%
\else
\language=\csname l@#1\endcsname
\fi
#2}}
\providecommand{\BIBdecl}{\relax}
\BIBdecl

\bibitem{cipolla2013machine}
R.~Cipolla, S.~Battiato, and G.~M. Farinella, \emph{Machine Learning for
  Computer Vision}.\hskip 1em plus 0.5em minus 0.4em\relax Springer, 2013,
  vol.~5.

\bibitem{pallathadka2021impact}
H.~Pallathadka, M.~Mustafa, D.~T. Sanchez, G.~S. Sajja, S.~Gour, and M.~Naved,
  ``Impact of machine learning on management, healthcare and agriculture,''
  \emph{Materials Today: Proceedings}, 2021.

\bibitem{dixon2020machine}
M.~F. Dixon, I.~Halperin, and P.~Bilokon, \emph{Machine Learning in
  Finance}.\hskip 1em plus 0.5em minus 0.4em\relax Springer, 2020, vol. 1406.

\bibitem{athey2018impact}
S.~Athey, ``The impact of machine learning on economics,'' in \emph{The
  economics of artificial intelligence: An agenda}.\hskip 1em plus 0.5em minus
  0.4em\relax University of Chicago Press, 2018, pp. 507--547.

\bibitem{bowling2006machine}
M.~Bowling, J.~F{\"u}rnkranz, T.~Graepel, and R.~Musick, ``Machine learning and
  games,'' \emph{Machine learning}, vol.~63, no.~3, pp. 211--215, 2006.

\bibitem{shawe2004kernel}
J.~Shawe-Taylor, N.~Cristianini \emph{et~al.}, \emph{Kernel methods for pattern
  analysis}.\hskip 1em plus 0.5em minus 0.4em\relax Cambridge University Press,
  2004.

\bibitem{goodfellow2016deep}
I.~Goodfellow, Y.~Bengio, and A.~Courville, \emph{Deep Learning}.\hskip 1em
  plus 0.5em minus 0.4em\relax MIT Press, 2016.

\bibitem{deng2014deep}
L.~Deng, D.~Yu \emph{et~al.}, ``Deep learning: methods and applications,''
  \emph{Foundations and trends{\textregistered} in signal processing}, vol.~7,
  no. 3--4, pp. 197--387, 2014.

\bibitem{lecun2015deep}
Y.~LeCun, Y.~Bengio, and G.~Hinton, ``Deep learning,'' \emph{nature}, vol. 521,
  no. 7553, pp. 436--444, 2015.

\bibitem{kayhan2020translation}
O.~S. Kayhan and J.~C.~v. Gemert, ``On translation invariance in cnns:
  Convolutional layers can exploit absolute spatial location,'' in
  \emph{Proceedings of the IEEE/CVF Conference on Computer Vision and Pattern
  Recognition}, 2020, pp. 14\,274--14\,285.

\bibitem{zhang2019rotation}
Z.~Zhang, B.-S. Hua, D.~W. Rosen, and S.-K. Yeung, ``Rotation invariant
  convolutions for 3d point clouds deep learning,'' in \emph{2019 International
  conference on 3d vision (3DV)}.\hskip 1em plus 0.5em minus 0.4em\relax IEEE,
  2019, pp. 204--213.

\bibitem{chatzidakis2019towards}
M.~Chatzidakis and G.~Botton, ``Towards calibration-invariant spectroscopy
  using deep learning,'' \emph{Scientific reports}, vol.~9, no.~1, pp. 1--10,
  2019.

\bibitem{bengio2009learning}
Y.~Bengio, ``Learning deep architectures for ai,'' \emph{Foundations and
  Trends{\textregistered} in Machine Learning}, vol.~2, no.~1, pp. 1--127,
  2009.

\bibitem{brahma2015deep}
P.~P. Brahma, D.~Wu, and Y.~She, ``Why deep learning works: A manifold
  disentanglement perspective,'' \emph{IEEE transactions on neural networks and
  learning systems}, vol.~27, no.~10, pp. 1997--2008, 2015.

\bibitem{bengio2013representation}
Y.~Bengio, A.~Courville, and P.~Vincent, ``Representation learning: A review
  and new perspectives,'' \emph{IEEE Transactions on Pattern Analysis and
  Machine Intelligence}, vol.~35, no.~8, pp. 1798--1828, 2013.

\bibitem{yarotsky2017error}
D.~Yarotsky, ``Error bounds for approximations with deep relu networks,''
  \emph{Neural Networks}, vol.~94, pp. 103--114, 2017.

\bibitem{chui2020realization}
C.~K. Chui, S.-B. Lin, B.~Zhang, and D.-X. Zhou, ``Realization of spatial
  sparseness by deep relu nets with massive data,'' \emph{IEEE Transactions on
  Neural Networks and Learning Systems}, 2020.

\bibitem{schwab2019deep}
C.~Schwab and J.~Zech, ``Deep learning in high dimension: Neural network
  expression rates for generalized polynomial chaos expansions in uq,''
  \emph{Analysis and Applications}, vol.~17, no.~01, pp. 19--55, 2019.

\bibitem{lin2018generalization}
S.-B. Lin, ``Generalization and expressivity for deep nets,'' \emph{IEEE
  transactions on neural networks and learning systems}, vol.~30, no.~5, pp.
  1392--1406, 2018.

\bibitem{chui2019deep}
C.~K. Chui, S.-B. Lin, and D.-X. Zhou, ``Deep neural networks for
  rotation-invariance approximation and learning,'' \emph{Analysis and
  Applications}, vol.~17, no.~05, pp. 737--772, 2019.

\bibitem{mhaskar2016deep}
H.~N. Mhaskar and T.~Poggio, ``Deep vs. shallow networks: An approximation
  theory perspective,'' \emph{Analysis and Applications}, vol.~14, no.~06, pp.
  829--848, 2016.

\bibitem{han2020depth}
Z.~Han, S.~Yu, S.-B. Lin, and D.-X. Zhou, ``Depth selection for deep relu nets
  in feature extraction and generalization,'' \emph{IEEE Transactions on
  Pattern Analysis and Machine Intelligence}, 2020.

\bibitem{shaham2018provable}
U.~Shaham, A.~Cloninger, and R.~R. Coifman, ``Provable approximation properties
  for deep neural networks,'' \emph{Applied and Computational Harmonic
  Analysis}, vol.~44, no.~3, pp. 537--557, 2018.

\bibitem{kohler2016nonparametric}
M.~Kohler and A.~Krzy{\.z}ak, ``Nonparametric regression based on hierarchical
  interaction models,'' \emph{IEEE Transactions on Information Theory},
  vol.~63, no.~3, pp. 1620--1630, 2016.

\bibitem{petersen2020equivalence}
P.~Petersen and F.~Voigtlaender, ``Equivalence of approximation by
  convolutional neural networks and fully-connected networks,''
  \emph{Proceedings of the American Mathematical Society}, vol. 148, no.~4, pp.
  1567--1581, 2020.

\bibitem{oono2019approximation}
K.~Oono and T.~Suzuki, ``Approximation and non-parametric estimation of
  resnet-type convolutional neural networks,'' in \emph{International
  conference on machine learning}.\hskip 1em plus 0.5em minus 0.4em\relax PMLR,
  2019, pp. 4922--4931.

\bibitem{zhou2018deep}
D.-X. Zhou, ``Deep distributed convolutional neural networks: Universality,''
  \emph{Analysis and Applications}, vol.~16, no.~06, pp. 895--919, 2018.

\bibitem{zhou2020universality}
------, ``Universality of deep convolutional neural networks,'' \emph{Applied
  and Computational Harmonic Analysis}, vol.~48, no.~2, pp. 787--794, 2020.

\bibitem{hanin2017approximating}
B.~Hanin and M.~Sellke, ``Approximating continuous functions by relu nets of
  minimal width,'' \emph{arXiv preprint arXiv:1710.11278}, 2017.

\bibitem{lin2022universal}
S.-B. Lin, K.~Wang, Y.~Wang, and D.-X. Zhou, ``Universal consistency of deep
  convolutional neural networks,'' \emph{IEEE Transactions on Information
  Theory}, 2022.

\bibitem{gyorfi2002distribution}
L.~Gy{\"o}rfi, M.~Kohler, A.~Krzy{\.z}ak, and H.~Walk, \emph{A
  Distribution-free Theory of Nonparametric Regression}.\hskip 1em plus 0.5em
  minus 0.4em\relax Springer, 2002, vol.~1.

\bibitem{cucker2007learning}
F.~Cucker and D.~X. Zhou, \emph{Learning Theory: an Approximation Theory
  Viewpoint}.\hskip 1em plus 0.5em minus 0.4em\relax Cambridge University
  Press, 2007, vol.~24.

\bibitem{petersen2018optimal}
P.~Petersen and F.~Voigtlaender, ``Optimal approximation of piecewise smooth
  functions using deep relu neural networks,'' \emph{Neural Networks}, vol.
  108, pp. 296--330, 2018.

\bibitem{schmidt2020nonparametric}
J.~Schmidt-Hieber, ``Nonparametric regression using deep neural networks with
  relu activation function,'' \emph{The Annals of Statistics}, vol.~48, no.~4,
  pp. 1875--1897, 2020.

\bibitem{lin2017does}
H.~W. Lin, M.~Tegmark, and D.~Rolnick, ``Why does deep and cheap learning work
  so well?'' \emph{Journal of Statistical Physics}, vol. 168, no.~6, pp.
  1223--1247, 2017.

\bibitem{safran2017depth}
I.~Safran and O.~Shamir, ``Depth-width tradeoffs in approximating natural
  functions with neural networks,'' in \emph{International conference on
  machine learning}.\hskip 1em plus 0.5em minus 0.4em\relax PMLR, 2017, pp.
  2979--2987.

\bibitem{gu2018recent}
J.~Gu, Z.~Wang, J.~Kuen, L.~Ma, A.~Shahroudy, B.~Shuai, T.~Liu, X.~Wang,
  G.~Wang, J.~Cai \emph{et~al.}, ``Recent advances in convolutional neural
  networks,'' \emph{Pattern recognition}, vol.~77, pp. 354--377, 2018.

\bibitem{zhou2020theory}
D.-X. Zhou, ``Theory of deep convolutional neural networks: Downsampling,''
  \emph{Neural Networks}, vol. 124, pp. 319--327, 2020.

\bibitem{daubechies1992ten}
I.~Daubechies, \emph{Ten Lectures on Wavelets}.\hskip 1em plus 0.5em minus
  0.4em\relax SIAM, 1992.

\end{thebibliography}

\clearpage

\section*{Appendix: Proofs}
We prove our theoretical results in Appendix. The proof idea  of  Lemma \ref{lemma:communication} is standard by either using the definition of the convolution or  using the the symbol of sequences in \cite{daubechies1992ten,zhou2020universality}.

\begin{proof}[Proof of Lemma \ref{lemma:communication}]
Due to \eqref{convolution1} and the fact that  $\vec{w}^1=(w_j^1)_{j=-\infty}^{\infty}$
and $\vec{w}^2=(w_j^2)_{j=-\infty}^{\infty}$ are supported on $\{0,1,\dots,s\}$, for any $-s\leq k\leq 0, k\leq j\leq s+k$, the $j$-th component of $\vec{w}^1\star\vec{w}^2$ is
\begin{eqnarray*}
   && (\vec{w}^1\star\vec{w}^2)_j=\sum_{k=j-s}^{j}
    w^1_{j-k}w^2_{k+s}\\
    &=&
    w_0^1w_{j+s}^2+w_1^1w_{j+s-1}^2+\dots+w_{j+s-1}^1w_1^2+w^1_{j+s}w_0^2.
\end{eqnarray*}
Noting again that $\vec{w}^1$  and $\vec{w}^2$ have the same support, we thus have
$$
  (\vec{w}^1\star\vec{w}^2)_j=(\vec{w}^2\star\vec{w}^1)_j,
$$
which implies \eqref{culmulative-law} for $\otimes=\star$. To prove \eqref{culmulative-law} for $\otimes=*$, we use the
symbol of sequences as follows.
The symbol  $\tilde{\vec{w}}$ of a sequence  $\vec{w}$ supported on the set of nonnegative integers is defined as a polynomial on $\mathbb C$ by $\tilde{\vec{w}}=\sum_{k=0}^\infty w_kz^k$. Then, direct computation yields
$$
\widetilde{\vec{w}^1*\vec{w}^2}=\tilde{\vec{w}}^1\tilde{\vec{w}}^2.
$$
Noting
$$
  \tilde{\vec{w}}^1\tilde{\vec{w}}^2=\tilde{\vec{w}}^2\tilde{\vec{w}}^1=\widetilde{\vec{w}^2*\vec{w}^1},
$$
we then have
$$
   \widetilde{\vec{w}^1*\vec{w}^2}=\widetilde{\vec{w}^2*\vec{w}^1}
$$
and consequently
$$
    \vec{w}^1*\vec{w}^2 = \vec{w}^2*\vec{w}^1.
$$
This completes the proof of Lemma \ref{lemma:communication}.
\end{proof}

 Define $T_{\tilde{d},d'}^{\vec{w}}$ be a $\tilde{d}\times d'$ Toeplitz matrix    by
\begin{equation}\label{convolution-matrix}
         \left[\begin{array}{ccccccc}
                w_0 & 0 & 0 & \cdots & 0\\
                w_1 & w_0& 0 & \cdots & 0\\
                \vdots &  \ddots &  \ddots & \ddots &\vdots\\
                w_{d'-1} & w_{d'-2} & \cdots&\cdots&w_0\\
                w_{d'}& w_{d'-1}& \cdots&  0\cdots & w_1\\
                 \vdots & \ddots & \ddots  & \ddots &\vdots\\
                 w_{\tilde{d}-d'}& \cdots&\cdots& \cdots& w_{\tilde{d}-2d'+1}\\
                  \vdots & \ddots &   \ddots & \ddots &\vdots\\
                    w_{\tilde{d}-2}&w_{\tilde{d}-3} &\cdots& w_{\tilde{d}-d'}&w_{\tilde{d}-d'-1}\\
                  w_{\tilde{d}-1}&w_{\tilde{d}-2}&\cdots&w_{\tilde{d}-d'+1}&w_{\tilde{d}-d'}
                  \end{array}\right].
\end{equation}
 Observe that if $W$ is supported in $\{0,\dots,S\}$ for some $S\in\mathbb N$, the entry $(T_{D,d'}^{\Vec{w}})_{i,k}=w_{i-k}$ vanishes when $i-k>S$. To prove Proposition \ref{Proposition:translation-equiv}, it is sufficient to prove the following lemma.

\begin{lemma}\label{Lemma:translation-equivalence}
Let $1\leq p\leq d$, $2\leq s\leq d$, $d_0=d$ and $d_\ell=d+\ell s$. If $\vec{w}$ is supported on $\{0,1,\dots,s\}$ and $T_{d_\ell,d_{\ell-1}}^{\vec w}$ is the Toeplitz matrix in (\ref{convolution-matrix}). Then  for any $j\leq d_{\ell-1}-p+1$, there holds
$$
    T_{d_\ell,d_{\ell-1}}^{\vec w} A_{j,d_{\ell-1}} \vec{v}_{p,d_{\ell-1},1}
    =
     A_{j,d_{\ell}}T_{d_\ell,d_{\ell-1}}^{\vec w} \vec{v}_{p,d_{\ell-1},1}.
$$
\end{lemma}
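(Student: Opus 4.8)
The plan is to reduce this equality of two vectors in $\mathbb R^{d_\ell}$ to an equality of the first $p$ columns of two $d_\ell\times d_{\ell-1}$ matrices, and then to verify that column-wise agreement by a direct index computation using the explicit entries of the Toeplitz matrix (\ref{convolution-matrix}) and of the translation matrix $A_{j,d}$. The point is that the full matrix identity $T^{\vec w}_{d_\ell,d_{\ell-1}}A_{j,d_{\ell-1}}=A_{j,d_\ell}T^{\vec w}_{d_\ell,d_{\ell-1}}$ is \emph{false} in general; it becomes true only on the columns selected by the support of $\vec v_{p,d_{\ell-1},1}$, and isolating exactly which columns those are is where the hypothesis $j\le d_{\ell-1}-p+1$ enters.

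First I would record the two entry formulas I need. From (\ref{convolution-matrix}), $(T^{\vec w}_{d_\ell,d_{\ell-1}})_{i,k}=w_{i-k}$, which vanishes unless $0\le i-k\le s$ since $\vec w$ is supported on $\{0,\dots,s\}$. From the definition of $A_{j,d}$, its $(m,k)$ entry equals $1$ exactly when $m=k+j-1$ with $1\le k\le d-j+1$, and is $0$ otherwise. Multiplying these out, I would obtain
$$
   (T^{\vec w}_{d_\ell,d_{\ell-1}}A_{j,d_{\ell-1}})_{i,k}=w_{\,i-k-j+1}\qquad\text{for }1\le k\le d_{\ell-1}-j+1
$$
(and $0$ for larger $k$), whereas
$$
   (A_{j,d_\ell}T^{\vec w}_{d_\ell,d_{\ell-1}})_{i,k}=w_{\,i-j+1-k}\qquad\text{for } j\le i\le d_\ell
$$
(and $0$ for $i<j$). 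So the two products carry the \emph{same} symbol $w_{\,i-k-j+1}$ but differ in where they are forced to vanish: the left one cuts off in the column index $k$, the right one in the row index $i$.

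Next I would exploit the support of the input. Since $\vec v_{p,d_{\ell-1},1}$ is supported on the first $p$ coordinates, only columns $k\in\{1,\dots,p\}$ of each matrix are relevant. The hypothesis $j\le d_{\ell-1}-p+1$ is precisely the inequality $p\le d_{\ell-1}-j+1$, which places all these columns inside the range where the left product equals $w_{\,i-k-j+1}$. For the right product the only possible discrepancy is at rows $i<j$, where it is declared $0$; but for $k\le p$ and $i<j$ one has $i-k-j+1\le -k\le -1<0$, so $w_{\,i-k-j+1}=0$ there as well, using once more that $\vec w$ is supported on the nonnegative integers. Hence columns $1,\dots,p$ of $T^{\vec w}_{d_\ell,d_{\ell-1}}A_{j,d_{\ell-1}}$ and of $A_{j,d_\ell}T^{\vec w}_{d_\ell,d_{\ell-1}}$ coincide, and applying both matrices to $\vec v_{p,d_{\ell-1},1}$ yields the same vector.

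The step I expect to be the crux is the bookkeeping at the upper boundary: I must make sure that translating the already-convolved vector by $A_{j,d_\ell}$ does not push any nonzero coordinate past index $d_\ell$, i.e. does not silently truncate it. The convolution $\vec w*\vec v_{p,d_{\ell-1},1}$ is supported on $\{1,\dots,p+s\}$, and translation by $j$ sends this to $\{j,\dots,j+p+s-1\}$; the requirement $j+p+s-1\le d_\ell=d_{\ell-1}+s$ is again exactly $j\le d_{\ell-1}-p+1$. Tracking this single inequality consistently through both orders of composition is the only genuinely delicate part, since it is what guarantees that the column restriction on one side and the row restriction on the other never actually bite; the remainder is the routine index algebra sketched above.
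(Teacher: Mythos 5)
Your proposal is correct and follows essentially the same route as the paper's proof: both arguments are a direct index computation comparing $w_{i-k-j+1}$ on the two sides, with the hypothesis $j\le d_{\ell-1}-p+1$ entering only to keep the support of $\vec v_{p,d_{\ell-1},1}$ (and its convolved image) inside the range where neither the column cutoff of $T^{\vec w}A_{j,d_{\ell-1}}$ nor the row cutoff of $A_{j,d_\ell}T^{\vec w}$ bites. The paper organizes this as a componentwise comparison of the two output vectors (via the convolution sums $\sum_i w_{k-i}v_{i-j+1}$) rather than of matrix columns, but that is only a difference in packaging; your explicit remark that the full matrix identity fails in general and holds only on the selected columns is a useful clarification the paper leaves implicit.
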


\begin{proof}
Direct computation yields
\begin{equation}\label{convolution-and-Toep}
    \vec{w}^\ell*\vec{v}=T_{d_\ell,d_{\ell-1}}^{\vec{w}^\ell}\vec{v}.
\end{equation}
Due to \eqref{role-of-transi}, we have
$$
     A_{j,d_{\ell-1}} \vec{v}_{p,d_{\ell-1},1}=\vec{v}_{p,d_{\ell-1},j}.
$$
Then, for any $k=1,\dots,d_\ell$, we have from  \eqref{convolution-and-Toep} and \eqref{convolution} that
\begin{eqnarray*}
  &&\left(T_{d_\ell,d_{\ell-1}}^{\vec w} A_{j,d_{\ell-1}} \vec{v}_{p,d_{\ell-1},1}\right)_k
   =
  (\vec{w}* \vec{v}_{p,d_{\ell-1},j})_k\\
  &=&
  \sum_{i=j}^{j+p-1}w_{k-i}v_{i-j+1}
\end{eqnarray*}
and
$$
\left(T_{d_\ell,d_{\ell-1}}^{\vec w}   \vec{v}_{p,d_{\ell-1},1}\right)_k =(\vec{w}* \vec{v}_{p,d_{\ell-1},1})_k
=
\sum_{i=1}^pw_{k-i}v_{i}.
$$
According to \eqref{role-of-transi} again, we have for any $k=j,\dots,j+p-1$,
\begin{eqnarray*}
  && \left(A_{j,d_{\ell}}T_{d_\ell,d_{\ell-1}}^{\vec w}   \vec{v}_{p,d_{\ell-1},1}\right)_k=
   \left(T_{d_\ell,d_{\ell-1}}^{\vec w}   \vec{v}_{p,d_{\ell-1},1}\right)_{k-j+1}\\
 &=&
   \sum_{i=1}^pw_{k-j+1-i}v_{i}=\sum_{i=j}^{j+p-1}w_{k-i}v_{i-j+1}.
\end{eqnarray*}
Noting further that except for the $j,\dots,j+p-1$ components, other components are zero. We then have
$$
    T_{d_\ell,d_{\ell-1}}^{\vec w} A_{j,d_{\ell-1}} \vec{v}_{p,d_{\ell-1},1}
    =
     A_{j,d_{\ell}}T_{d_\ell,d_{\ell-1}}^{\vec w} \vec{v}_{p,d_{\ell-1},1}.
$$
This completes the proof of Lemma \ref{Lemma:translation-equivalence}.
\end{proof}

Based on Lemma \ref{Lemma:translation-equivalence}, we can prove Proposition \ref{Proposition:translation-equiv} directly.
\begin{proof}[Proof of Proposition \ref{Proposition:translation-equiv}]
Due to Lemma \ref{Lemma:translation-equivalence}, we have
\begin{eqnarray*}
      && T_{d_\ell,d_{\ell-1}}^{\vec{w}^\ell}\cdots T_{d_1,d}^{\vec{w}^1} A_{j,d} \vec{v}_{p,d,1}
    =
     T_{d_\ell,d_{\ell-1}}^{\vec{w}^\ell}\cdots A_{j,d_{1}}T_{d_1,d_{0}}^{\vec{w}^1} \vec{v}_{p,d,1}\\
     &=&
     \dots=
     A_{j,d_{\ell}}T_{d_\ell,d_{\ell-1}}^{\vec{w}^\ell}\cdots T_{d_1,d_{0}}^{\vec{w}^1} \vec{v}_{p,d,1}
\end{eqnarray*}
Then it follows from  (\ref{convolution-and-Toep}) that \eqref{trans-equiv}
holds.
This completes the proof of Proposition \ref{Proposition:translation-equiv}.
\end{proof}

To prove Proposition \ref{Proposition:C-Struc-fea-deep}, we need several auxiliary lemmas. The first one is the
 convolution   factorization lemma  provided in  \cite[Theorem 3]{zhou2020universality}.

\begin{lemma}\label{Lemma:convolution-fraction}
Let $S\geq 0,2\leq s\leq d$ and $\vec{u}=(u_k)_{-\infty}^\infty$ be supported on $\{0,\dots,S\}$. Then there exists $L<\frac{S}{s-1}+1$ filter vectors $\{\vec{w}^{\ell}\}_{\ell=1}^L$ supported on $\{0,\dots,s\}$ such that $\vec{u}=\vec{w}^{L}*\cdots*\vec{w}^{1}$.
\end{lemma}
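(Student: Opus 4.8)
The plan is to convert the convolution factorization into a polynomial factorization by means of the symbol machinery already exploited in the proof of Lemma \ref{lemma:communication}, and then to apply the Fundamental Theorem of Algebra. Recall that a sequence $\vec{v}$ supported on the nonnegative integers carries the symbol $\tilde{\vec{v}}(z)=\sum_{k\geq 0}v_k z^k$, and that the computation in the proof of Lemma \ref{lemma:communication} gives $\widetilde{\vec{w}^{1}*\vec{w}^{2}}=\tilde{\vec{w}}^{1}\tilde{\vec{w}}^{2}$ for the expansive convolution \eqref{convolution}; by an immediate induction, $\widetilde{\vec{w}^{L}*\cdots*\vec{w}^{1}}=\tilde{\vec{w}}^{L}\cdots\tilde{\vec{w}}^{1}$. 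Hence ``$\vec{w}^{\ell}$ supported on $\{0,\dots,s\}$'' is equivalent to ``$\tilde{\vec{w}}^{\ell}$ is a polynomial of degree at most $s$'', and the identity $\vec{u}=\vec{w}^{L}*\cdots*\vec{w}^{1}$ is equivalent to factoring the single polynomial $\tilde{\vec{u}}$, of degree at most $S$, into a product of $L$ polynomials each of degree at most $s$. Since convolution is commutative by Lemma \ref{lemma:communication}, the order in which the factors are assigned to $\vec{w}^{1},\dots,\vec{w}^{L}$ is irrelevant.

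First I would factor $\tilde{\vec{u}}$ over $\mathbb{R}$. Because $\vec{u}$ has real entries, $\tilde{\vec{u}}$ is a real polynomial, so the Fundamental Theorem of Algebra, after pairing each complex root with its conjugate, produces a factorization of $\tilde{\vec{u}}$ into real linear factors and real irreducible quadratic factors, i.e. into elementary factors each of degree $1$ or $2\leq s$. Keeping the factors real in this way is precisely what restricts the elementary building blocks to degree at most $2$ rather than degree $1$. Next I would group these elementary factors greedily: I build groups one at a time, appending to the current group any remaining elementary factor whose degree keeps the running total at most $s$, and I close a group only when no remaining factor fits. Each closed group then has degree at most $s$, so its symbol is a filter supported on $\{0,\dots,s\}$; translating back through the symbol identity recovers $\vec{u}=\vec{w}^{L}*\cdots*\vec{w}^{1}$.

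The delicate point, which I expect to be the only real obstacle, is the count $L<\frac{S}{s-1}+1$. The key observation is that any group closed while factors still remain must have degree at least $s-1$: if its degree were at most $s-2$, then a remaining factor (of degree at most $2$) could still be appended without exceeding $s$, contradicting closure. Thus all groups except possibly the last have degree at least $s-1$, while the last has degree at least $1$, so $S\geq (L-1)(s-1)+1$. Rearranging gives $L-1\leq\frac{S-1}{s-1}<\frac{S}{s-1}$, that is $L<\frac{S}{s-1}+1$. The appearance of $s-1$ rather than $s$ in the bound is exactly the cost of the real-coefficient constraint: when $s$ is odd and all roots are complex, a group assembled from degree-$2$ blocks can be filled only up to degree $s-1$.

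Finally I would dispose of the degenerate case in which $\tilde{\vec{u}}$ is constant (i.e. $\vec{u}$ is supported on $\{0\}$), where the single filter $\vec{w}^{1}=\vec{u}$ already works; the greedy count above supplies the stated strict bound in the remaining case $S\geq 1$. I note that no part of the argument needs $\tilde{\vec{u}}$ to have exact degree $S$: if the top coefficient vanishes the true degree is smaller, which only improves the bound.
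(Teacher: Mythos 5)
The paper never proves this lemma itself --- it is imported as Theorem~3 of \cite{zhou2020universality} --- and your argument is a correct reconstruction of precisely the proof used there: the symbol map turns the convolution factorization into a factorization of the real polynomial $\tilde{\vec{u}}$ of degree at most $S$, the Fundamental Theorem of Algebra (with conjugate roots paired) supplies real linear and quadratic elementary factors, a greedy grouping into blocks of degree at most $s$ is always possible since $s\geq 2$, and your count is the right one --- every block closed while factors remain has degree at least $s-1$, the final block has degree at least $1$, and $\deg\tilde{\vec{u}}\leq S$ gives $(L-1)(s-1)+1\leq S$, hence $L<\frac{S}{s-1}+1$. The only wrinkle is the degenerate case $S=0$, where the stated bound would force $L<1$ while your fallback $\vec{w}^{1}=\vec{u}$ uses $L=1$; that is a defect of the lemma's formulation (it implicitly assumes $S\geq 1$), not of your argument.
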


The second one  is the  following lemma that was given in  \cite{zhou2018deep}.

\begin{lemma}\label{Lemma:convolution-to-matrix}
 Let $2\leq s\leq d'$, $d'_\ell=d'+\ell s$ and $d'_0=d'$. If $\{\vec{w}^{\ell}\}_{\ell=1}^{L^*}$ is supported on  $\{0,\dots,s\}$, then
\begin{equation}\label{matrix-fraction}
    T_{d_\ell',d'}^{\vec{w}^\ell*\cdots \vec{w}^2*\vec{w}^1}=T_{d'_\ell,d'_{\ell-1}}^{\vec{w}^\ell}\cdots T_{d_2',d'_1}^{\vec{w}^2}T_{d'_1,d'}^{\vec{w}^1}
\end{equation}
holds for any $\ell\in\{1,2,\dots,L^*\}$.
\end{lemma}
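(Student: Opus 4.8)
The plan is to prove \eqref{matrix-fraction} by induction on $\ell$, reducing the telescoping statement to a single two-factor identity whose content is nothing more than associativity of the expansive convolution $*$. The base case $\ell=1$ is the tautology $T_{d'_1,d'}^{\vec{w}^1}=T_{d'_1,d'}^{\vec{w}^1}$. For the inductive step I would set $\vec{u}:=\vec{w}^{\ell-1}*\cdots*\vec{w}^1$ and assume $T_{d'_{\ell-1},d'}^{\vec{u}}=T_{d'_{\ell-1},d'_{\ell-2}}^{\vec{w}^{\ell-1}}\cdots T_{d'_1,d'}^{\vec{w}^1}$. The multiplicativity of the symbol map $\widetilde{\vec{a}*\vec{b}}=\tilde{\vec{a}}\,\tilde{\vec{b}}$, already exploited in the proof of Lemma \ref{lemma:communication}, shows that $\vec{u}$ is supported on $\{0,\dots,(\ell-1)s\}$, so convolving a vector in $\mathbb R^{d'}$ with $\vec{u}$ yields a vector in $\mathbb R^{d'+(\ell-1)s}=\mathbb R^{d'_{\ell-1}}$; this is what makes the intermediate dimension of the product in \eqref{matrix-fraction} equal to $d'_{\ell-1}$. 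It then suffices to establish the two-factor identity
\begin{equation*}
   T_{d'_\ell,d'}^{\vec{w}^\ell*\vec{u}}=T_{d'_\ell,d'_{\ell-1}}^{\vec{w}^\ell}\,T_{d'_{\ell-1},d'}^{\vec{u}},
\end{equation*}
after which \eqref{matrix-fraction} follows by substituting the inductive hypothesis.

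To prove the two-factor identity I would act on an arbitrary $\vec{v}\in\mathbb R^{d'}$ and use the matrix form of convolution \eqref{convolution-and-Toep}, extended in the obvious way from filters of support $s$ to filters of arbitrary finite support (the extension is immediate by comparing \eqref{convolution} with the Toeplitz definition \eqref{convolution-matrix}). Applying it to the filter $\vec{w}^\ell*\vec{u}$ gives $(\vec{w}^\ell*\vec{u})*\vec{v}=T_{d'_\ell,d'}^{\vec{w}^\ell*\vec{u}}\vec{v}$, while applying it twice gives $\vec{w}^\ell*(\vec{u}*\vec{v})=T_{d'_\ell,d'_{\ell-1}}^{\vec{w}^\ell}T_{d'_{\ell-1},d'}^{\vec{u}}\vec{v}$. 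Associativity of $*$, which again follows from the symbol map via $\widetilde{(\vec{a}*\vec{b})*\vec{c}}=\tilde{\vec{a}}\,\tilde{\vec{b}}\,\tilde{\vec{c}}=\widetilde{\vec{a}*(\vec{b}*\vec{c})}$, equates the two left-hand sides, so the two matrix expressions agree on every $\vec{v}\in\mathbb R^{d'}$ and therefore as matrices.

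As an alternative to the associativity route, one can bypass convolution algebra entirely and compare entries directly. Since $(T^{\vec{w}^\ell})_{i,j}=w^\ell_{i-j}$ and $(T^{\vec{u}})_{j,k}=u_{j-k}$, the product has entries $\sum_j w^\ell_{i-j}u_{j-k}$, which after the substitution $m=j-k$ becomes $\sum_m w^\ell_{(i-k)-m}u_m=(\vec{w}^\ell*\vec{u})_{i-k}=(T^{\vec{w}^\ell*\vec{u}})_{i,k}$ by the definition \eqref{convolution}. Either route closes the inductive step, and I would present whichever reads more cleanly given the conventions already fixed in the paper.

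I expect the only delicate point to be the dimension and support bookkeeping rather than the algebra. Concretely, one must verify that the growing support $\{0,\dots,(\ell-1)s\}$ of the partial convolution exactly matches the prescribed widths $d'_{\ell-1}=d'+(\ell-1)s$, so that all Toeplitz factors in the product are conformable, and one must be careful that \eqref{convolution-and-Toep} is genuinely being invoked for filters whose support exceeds $s$. There is also a minor index-offset to watch, since filters are indexed from $0$ while the vectors in \eqref{convolution} are indexed from $1$. Once these sizes and indices are reconciled, the remaining computation is routine.
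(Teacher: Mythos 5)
Your proposal is correct, but there is nothing in the paper to compare it against: the paper does not prove Lemma~\ref{Lemma:convolution-to-matrix} at all, it simply imports it from \cite{zhou2018deep}. Your argument is therefore a self-contained replacement rather than an alternative route. Of the two closings you offer for the inductive step, the direct entry comparison is the cleaner one in this paper's conventions, since it avoids the index-offset issue between filters (indexed from $0$) and vectors (indexed from $1$) that the associativity-of-$*$ route has to reconcile. The one point that genuinely needs checking, and which you correctly flag as ``bookkeeping,'' is that the finite sum over the middle index $j\in\{1,\dots,d'_{\ell-1}\}$ in the matrix product $\bigl(T_{d'_\ell,d'_{\ell-1}}^{\vec{w}^\ell}T_{d'_{\ell-1},d'}^{\vec{u}}\bigr)_{i,k}=\sum_j w^\ell_{i-j}u_{j-k}$ captures every nonzero term of the full convolution $\sum_m w^\ell_{i-k-m}u_m$: after the substitution $m=j-k$ the range of $m$ is $\{1-k,\dots,d'_{\ell-1}-k\}$, which contains the support $\{0,\dots,(\ell-1)s\}$ of $\vec{u}$ precisely because $1\leq k\leq d'$ and $d'_{\ell-1}=d'+(\ell-1)s$. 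With that verified, the identity $(T^{\vec{w}^\ell}T^{\vec{u}})_{i,k}=(\vec{w}^\ell*\vec{u})_{i-k}=(T^{\vec{w}^\ell*\vec{u}})_{i,k}$ holds entrywise and the induction closes. Your support computation for $\vec{u}=\vec{w}^{\ell-1}*\cdots*\vec{w}^1$ via the symbol map is also sound, so the dimensions of all factors are conformable as claimed.
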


The third one focuses on representing the inner product by multi-layer convolution.

\begin{lemma}\label{Lemma:factorization}
Let $\tilde{d},d'\in\mathbb N$, $2\leq s\leq d'$, $d'=0$ and $d'_\ell=d'+\ell s$. Then for any $\tilde{d}\times d'$ matrix $W$, there exist   $L^*=\lceil\frac{d'\tilde{d}}{s-1}\rceil$ filter vectors $\{\vec{w}^\ell\}_{\ell=1}^{L^*}$ supported on $\{0,1,\dots,s\}$ such that
\begin{equation}\label{Covo-app1}
     W x= \mathcal S_{d'_{L^*},d',0}\left(\vec{w}^{L^*} *\dots*\vec{w}^{1}*x\right),\qquad \forall x\in\mathbb R^{d'}.
\end{equation}
\end{lemma}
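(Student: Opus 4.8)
The plan is to realize the linear map $x\mapsto Wx$ in two stages: first encode all $\tilde{d}$ rows of $W$ into a single long filter $\vec{u}$ so that the zero-padded convolution $\vec{u}*x$ carries the value $(Wx)_i$ at the periodically spaced position $id'$, and then recover these values with the location-based pooling $\mathcal{S}_{d'_{L^*},d',0}$ defined in \eqref{def.down-sampling}. Once this single-filter representation is in place, I would invoke the convolution factorization of Lemma~\ref{Lemma:convolution-fraction} to split $\vec{u}$ into filters supported on $\{0,1,\dots,s\}$, and Lemma~\ref{Lemma:convolution-to-matrix} to identify the iterated convolution with the corresponding product of Toeplitz matrices, thereby matching \eqref{Covo-app1}.

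Concretely, I would define $\vec{u}=(u_m)_{m=-\infty}^{\infty}$ by placing the reversed $i$-th row of $W$ in the block of indices $\{(i-1)d',\dots,id'-1\}$, i.e.\ $u_{id'-k}=W_{ik}$ for $k=1,\dots,d'$ and $i=1,\dots,\tilde{d}$, and $u_m=0$ otherwise. This makes $\vec{u}$ supported on $\{0,1,\dots,\tilde{d}d'-1\}$, and the $\tilde{d}$ blocks are pairwise disjoint. Using the zero-padded convolution \eqref{convolution}, a direct computation gives $(\vec{u}*x)_{id'}=\sum_{k=1}^{d'}u_{id'-k}x_k=\sum_{k=1}^{d'}W_{ik}x_k=(Wx)_i$ for $1\le i\le\tilde{d}$, while the support bound on $\vec{u}$ forces $(\vec{u}*x)_{id'}=0$ for $i>\tilde{d}$. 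Hence, recalling that $\mathcal{S}_{d'_{L^*},d',0}$ selects the entries at positions $d',2d',\dots$, the pooled vector equals $Wx$ (read off its first $\tilde{d}$ coordinates, as permitted by the zero-extension convention stated after \eqref{def.down-sampling}).

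It then remains to turn the single filter $\vec{u}$ into exactly $L^*$ filters of length $s$. Since $\vec{u}$ is supported on $\{0,\dots,S\}$ with $S=\tilde{d}d'-1$, Lemma~\ref{Lemma:convolution-fraction} yields $L<\frac{S}{s-1}+1<L^*+1$, so that $L\le L^*=\lceil\frac{\tilde{d}d'}{s-1}\rceil$ filters $\{\vec{w}^\ell\}$ supported on $\{0,\dots,s\}$ satisfy $\vec{u}=\vec{w}^{L}*\cdots*\vec{w}^1$; if $L<L^*$ I would append copies of the identity filter $\delta_0=(1,0,0,\dots)$, which leaves the convolution unchanged, to reach exactly $L^*$ factors. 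By Lemma~\ref{Lemma:convolution-to-matrix} the iterated convolution $\vec{w}^{L^*}*\cdots*\vec{w}^1*x$ coincides with $T^{\vec{u}}_{d'_{L^*},d'}x$ in $\mathbb{R}^{d'_{L^*}}$, and since $d'_{L^*}=d'+L^*s>(\tilde{d}+1)d'$ the sampling positions $d',2d',\dots,\tilde{d}d'$ all lie within range, so the pooling sees every value computed above. The steps requiring the most care are the index bookkeeping in the construction of $\vec{u}$ (so that the reversed rows align exactly with the sampling grid of period $d'$) and the counting that yields $L\le L^*$ together with $d'_{L^*}\ge\tilde{d}d'$; these two points are what simultaneously guarantee correctness of the representation and that it fits within the prescribed $L^*$ layers.
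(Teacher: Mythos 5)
Your proposal is correct and follows essentially the same route as the paper's own proof: you build the long filter $\vec{u}$ by stacking the (reversed) rows of $W$ so that $(\vec{u}*x)_{id'}=(Wx)_i$, factor it via Lemma~\ref{Lemma:convolution-fraction}, pad with delta filters to reach exactly $L^*$ factors, and pass to Toeplitz products via Lemma~\ref{Lemma:convolution-to-matrix} before pooling. If anything, your write-up is slightly more careful than the paper's on the index bookkeeping and on why the sampled positions beyond $\tilde{d}d'$ vanish.
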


\begin{proof}
Define a sequence  $\vec{u}$ supported on $\{0,\dots,d'\tilde{d}-1\}$ by stacking the rows of $W$, i.e.
\begin{eqnarray*}
    \vec{u}^T
    &=&(W_{1,1},W_{2,1},\dots,W_{d\tilde{d},1},W_{1,2},\dots,W_{\tilde{d},d})\\
    &=:&
    (W_{0},\dots,W_{d'\tilde{d}-1}).
\end{eqnarray*}
  We apply Lemma \ref{Lemma:convolution-fraction} with $S=d'\tilde{d}-1$ and $s\geq 2$ to obtain   $\hat{L}<\frac{d'\tilde{d}-1}{s-1}+1$ filter vectors $\{\vec{w}^\ell\}_{\ell=1}^{\hat{L}}$ such that
$$
    \vec{u}= \vec{w}^{\hat{L}}*\dots*\vec{w}^1.
$$
Let $T^{\vec{u}}$ be the $d'_{\hat{L}}\times d'$ matrix   $(W_{k-j})_{k=1,\dots,d_{\hat{L}}',j=1,\dots,d'}$. Observe from the definition of the sequence $\vec{u}$ that for $j=1,\dots,\tilde{d}$, the $jd'$-th row of $T^{\vec{u}}$ is exactly the $j$-th raw of $W$.   Setting  $L^*=\lceil\frac{d'\tilde{d}}{s-1}\rceil$, it is obvious that $\hat{L}\leq L^*$.  Taking $\vec{w}^{\hat{L}+1}=\dots=\vec{w}^{L^*}$ to be the delta sequence, we have $\vec{u}=\vec{w}^{L^*}*\dots*\vec{w}^1$. Then it follows from Lemma \ref{Lemma:convolution-to-matrix} that
$
    T^{\vec{u}}=T^{\vec{w}^{L^*}}\cdots T^{\vec{w}^1}.
$
According to the definition of $\mathcal S_{d_{L^*}',d',0}$ in (\ref{def.down-sampling})  and the definition of $T^{\vec{u}}$, we have
$$
    \mathcal S_{d_{L^*}',d',0}\left(T^{\vec{u}}x\right)=W x.
$$
Therefore,
\begin{eqnarray*}
   &&\mathcal S_{d_{L^*}',d',0} (T^{\vec{w}^{L^*}}\dots      T^{\vec{w}^1}x)\\
& =&
  \mathcal S_{d_{L^*}',d',0} (\vec{w}^{L^*}*\dots*\vec{w}^1*x)
   = Wx.
\end{eqnarray*}
This completes the proof of Lemma \ref{Lemma:factorization}.
\end{proof}

Our fourth lemma is a representation of matrices multiplication, which can be derived from Lemma  \ref{Lemma:factorization}.

\begin{lemma}\label{Lemma:Multi-factorization}
Let $J,d\in\mathbb N$, $2\leq s\leq d$, $d_1,\dots,d_J\in\mathbb N$ satisfying $d_0=d$ and  $L_j=\lceil\frac{d_{j-1}d_j}{s-1}\rceil$.  If $W^j$  is a $d_j\times d_{j-1}$ matrix, then  for any $x\in\mathbb R^d$,
there exist $\sum_{j=1}^JL_j$   filter vectors $\{\vec{w}^{j,\ell}\}_{\ell=1}^{L_j}$ supported on $\{0,1,\dots,s\}$ such that
\begin{eqnarray}\label{Covo-app2}
    &&W^J W^{J-1} \cdots W^1 x\nonumber\\
    &=&
    \mathcal S_{d_{J-1}+L_Js,d_{J-1},0}\left(\vec{w}^{J,L_J} *\cdots*\vec{w}^{J,1}\right)\nonumber\\
    &&*\dots*
    \mathcal S_{d_{0}+L_1s,d_0,0}\left(\vec{w}^{1,L_1} *\dots*\vec{w}^{1,1}*x\right).
\end{eqnarray}
\end{lemma}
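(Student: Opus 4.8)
The plan is to reduce this multi-factor statement to the single-matrix result already established in Lemma \ref{Lemma:factorization} and then to chain the resulting representations by composition. The point is that Lemma \ref{Lemma:factorization} already rewrites one matrix-vector product as a pooled multi-convolution, so that all that remains is a layer-by-layer bookkeeping argument.

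First I would fix $j\in\{1,\dots,J\}$ and apply Lemma \ref{Lemma:factorization} with $d'=d_{j-1}$ and $\tilde{d}=d_j$. This yields $L_j=\lceil\frac{d_{j-1}d_j}{s-1}\rceil$ filter vectors $\{\vec{w}^{j,\ell}\}_{\ell=1}^{L_j}$ supported on $\{0,1,\dots,s\}$ with
$$
   W^j v=\mathcal S_{d_{j-1}+L_js,d_{j-1},0}\left(\vec{w}^{j,L_j}*\cdots*\vec{w}^{j,1}*v\right),\qquad \forall v\in\mathbb R^{d_{j-1}}.
$$
Writing the operator on the right as $\mathcal B_j:\mathbb R^{d_{j-1}}\to\mathbb R^{d_j}$, this reads $W^jv=\mathcal B_j(v)$ for every admissible $v$. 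Collecting the filters over $j=1,\dots,J$ produces exactly the $\sum_{j=1}^JL_j$ filter vectors demanded by the statement.

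Next I would prove $W^j\cdots W^1x=\mathcal B_j\circ\cdots\circ\mathcal B_1(x)$ by induction on $j$. The base case $j=1$ is the displayed identity with $v=x$. For the inductive step I set $y_{j-1}:=W^{j-1}\cdots W^1x=\mathcal B_{j-1}\circ\cdots\circ\mathcal B_1(x)\in\mathbb R^{d_{j-1}}$; since the identity for $W^j$ holds for an arbitrary input in $\mathbb R^{d_{j-1}}$, substituting $v=y_{j-1}$ gives $W^jy_{j-1}=\mathcal B_j(y_{j-1})$, i.e. $W^j\cdots W^1x=\mathcal B_j\circ\cdots\circ\mathcal B_1(x)$. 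Taking $j=J$ is precisely the nested expression \eqref{Covo-app2}.

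The step that needs the most care is the dimensional compatibility that makes this composition legitimate: the output of $\mathcal B_{j-1}$ must lie in $\mathbb R^{d_{j-1}}$ so that it is a valid input for $\mathcal B_j$. This is exactly what the choice $L_j=\lceil\frac{d_{j-1}d_j}{s-1}\rceil$ secures inside Lemma \ref{Lemma:factorization}: the multi-convolution sends $\mathbb R^{d_{j-1}}$ into $\mathbb R^{d_{j-1}+L_js}$, and the location-$0$ pooling $\mathcal S_{d_{j-1}+L_js,d_{j-1},0}$ is tuned, via the row-stacking construction of $W^j$ into a single filter, so that it reads off the $d_j$ inner products forming the rows of $W^j$, the surplus pooled entries vanishing. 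Reading the pooling as an operator into $\mathbb R^{d_j}$ in the sense of the stated convention for $\mathcal S_{d',u,j}$ then makes each block map $\mathbb R^{d_{j-1}}\to\mathbb R^{d_j}$, so the induction is well posed. I would also note that, by the commutative law of Lemma \ref{lemma:communication}, the order of the filters within each block is irrelevant, so the construction depends only on the matrices $W^j$ themselves.
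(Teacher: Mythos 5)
Your proposal is correct and matches the paper's intent exactly: the paper gives no explicit proof of this lemma, stating only that it "can be derived from Lemma \ref{Lemma:factorization}", and your block-by-block application of that lemma followed by an induction on the composition is precisely that derivation, with the dimensional bookkeeping (pooling each block back into $\mathbb R^{d_j}$ under the stated convention for $\mathcal S_{d',u,j}$) spelled out correctly.
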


With these helps, we are in a position to prove Proposition \ref{Proposition:C-Struc-fea-deep}

\begin{proof}[Proof of Proposition \ref{Proposition:C-Struc-fea-deep}]
According to Proposition \ref{Proposition:translation-equiv}, for any $1\leq j,j'$, the $j$-th component of
$\vec{w}^L*\dots *\vec{w}^1*\vec{v}_{p,d,j}$ is exactly the $j'$-th component of $\vec{w}^L*\dots *\vec{w}^1*\vec{v}_{p,d,j'}$. We then have from  (\ref{def.down-sampling}) that   \eqref{trans-invar} holds. Then Proposition \ref{Proposition:C-Struc-fea-deep} is a direct result by combining   Lemma \ref{Lemma:Multi-factorization} and \eqref{trans-invar}. This completes the proof of Proposition \ref{Proposition:C-Struc-fea-deep}.
\end{proof}

% \begin{proof}[Proof of Theorem \ref{Theorem:comp-DFCN}] It suffices to prove that under the conditions of Theorem \ref{Theorem:comp-DFCN},  for an arbitrary $\mathcal N_{d_1,\dots,d_L}\in\mathcal H_{d_1,\dots,d_L}$
% there exists an $\mathcal N^s_{L_1,\dots,L_L}\in\mathcal H^s_{L_1,\dots,L_L}$
% such that
% $$
%      \mathcal N_{d_1,\dots,d_L}
%       =
%       \mathcal N^s_{L_1,\dots,L_L}.
% $$
% This  argument can be derived directly from \eqref{cons-stru-deep}.  This completes the proof of Theorem \ref{Theorem:comp-DFCN}.
% \end{proof}

To prove Theorem \ref{Theorem:app-dcnn-res}, we need some preliminaries. For a sequence $\vec{w}$ supported on $\{0,1,\dots,s\}$, write $\|\vec{w}\|_1=\sum_{k=-\infty}^\infty|w_k|$ and $\|\vec{w}\|_\infty=\max_{-\infty\leq k\leq \infty}|w_k|$. Define $B^0:=\max_{x\in\mathbb I^d}\max_{k=1,\dots,d}|x^{(k)}|$ and
$$
    B^\ell:=\|\vec{w}^\ell\|_1B^{\ell-1}\cdots B^1B^0,\qquad   \ell\geq 1.
$$
Let $T_{d_{\ell},d_{\ell-1}}^{\vec{w}^{\ell}}$ be the convolution matrix given in \eqref{convolution-matrix}. Then for any $j=1,\dots,d_\ell$, direct computation yields
\begin{equation}\label{bound-1}
     \max_{x\in\mathbb I^d}\left|\left(T_{d_{\ell},d_{\ell-1}}^{\vec{w}^{\ell}}\dots
     T_{d_{1},d_{0}}^{\vec{w}^{1}} x\right)_j\right|\leq B^\ell
\end{equation}
and for $1\leq k\leq \ell-1$
\begin{equation}\label{bound-1.1}
      \left|\left(T_{d_{\ell},d_{\ell-1}}^{\vec{w}^{\ell}}\dots
     T_{d_{k+1},d_{k}}^{\vec{w}^{k+1}}  B^k\mathbf{1}_{d_k}\right)_j\right|\leq B^\ell.
\end{equation}
\begin{lemma}\label{Lemma:Induction}
Let $\ell\in\mathbb N$, $2\leq s\leq d$ and $\mathcal C^R_{\ell,\vec{w}^\ell,b^\ell}$ be defined by \eqref{convolutional-mapping-res} with $\vec{w}^\ell$ supported on $\{0,1,\dots,s\}$ and $b^\ell=2^{\ell-1}B^\ell$, then
\begin{eqnarray}\label{induction}
 &&\sigma\circ \mathcal C^R_{\ell,\vec{w}^\ell, {b}^\ell} \circ \sigma  \circ \dots \circ \sigma\circ\mathcal C^R_{1,\vec{w}^1, {b}^1}(x)\nonumber\\
 &=&
 T_{d_\ell,d_{\ell-1}}^{\vec{w}^\ell}\cdots T_{d_2,d_1}^{\vec{w}^2}T_{d_1,d}^{\vec{w}^1}x +b^\ell{\bf 1}_{d_\ell}\nonumber \\
 &+&
 \sum_{k=1}^{\ell-1}T_{d_\ell,d_{\ell-1}}^{\vec{w}^\ell}\cdots T_{d_{k+1},d_{k}}^{\vec{w}^{k+1}}
 b^k{\mathbf 1}_{d_k}.
\end{eqnarray}
\end{lemma}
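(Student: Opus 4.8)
The plan is to argue by induction on $\ell$, the central observation being that the calibrated choice $b^\ell=2^{\ell-1}B^\ell$ forces the argument of every ReLU to be entrywise nonnegative, so that each $\sigma$ acts as the identity and the composition collapses to the purely affine map obtained by deleting all activations. First I would observe that the right-hand side of \eqref{induction} is exactly $\mathcal C^R_{\ell,\vec{w}^\ell,b^\ell}\circ\dots\circ \mathcal C^R_{1,\vec{w}^1,b^1}(x)$ with every $\sigma$ stripped out: composing the affine maps $\vec{v}\mapsto T_{d_k,d_{k-1}}^{\vec{w}^k}\vec{v}+b^k\mathbf 1_{d_k}$ and expanding by linearity propagates the $k$-th bias through the later Toeplitz matrices $T_{d_\ell,d_{\ell-1}}^{\vec{w}^\ell}\cdots T_{d_{k+1},d_k}^{\vec{w}^{k+1}}$, which is precisely the stated sum. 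Hence it suffices to show that no ReLU ever truncates.

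For the base case $\ell=1$, with $b^1=B^1$, bound \eqref{bound-1} gives $|(T_{d_1,d}^{\vec{w}^1}x)_j|\le B^1$, so $(T_{d_1,d}^{\vec{w}^1}x+b^1\mathbf 1_{d_1})_j\ge -B^1+B^1=0$ and $\sigma$ leaves the vector unchanged. For the inductive step I would assume \eqref{induction} at level $\ell-1$, apply $\mathcal C^R_{\ell,\vec{w}^\ell,b^\ell}$ to that affine expression, and regroup the resulting bias contributions (absorbing the term $T_{d_\ell,d_{\ell-1}}^{\vec{w}^\ell}b^{\ell-1}\mathbf 1_{d_{\ell-1}}$ into the $k=\ell-1$ summand) so that the preactivation at layer $\ell$ equals the claimed right-hand side. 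The remaining task is to verify its nonnegativity entrywise.

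This is where the doubling of the bias pays off. Using \eqref{bound-1} for the input term and \eqref{bound-1.1} together with $b^k=2^{k-1}B^k$ for each propagated bias term (so that $|(T_{d_\ell,d_{\ell-1}}^{\vec{w}^\ell}\cdots T_{d_{k+1},d_k}^{\vec{w}^{k+1}}b^k\mathbf 1_{d_k})_j|\le 2^{k-1}B^\ell$), I would bound the $j$-th component of the preactivation from below by
\begin{equation*}
-B^\ell+2^{\ell-1}B^\ell-\sum_{k=1}^{\ell-1}2^{k-1}B^\ell
=B^\ell\Bigl(2^{\ell-1}-1-\sum_{k=1}^{\ell-1}2^{k-1}\Bigr)=0,
\end{equation*}
since $\sum_{k=1}^{\ell-1}2^{k-1}=2^{\ell-1}-1$. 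Thus every entry of the layer-$\ell$ preactivation is nonnegative, $\sigma$ acts as the identity, and \eqref{induction} follows.

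The main obstacle is the bookkeeping in the inductive step: one must track how each earlier bias $b^k\mathbf 1_{d_k}$ is amplified by the later convolution matrices and confirm, through the geometric growth $2^{\ell-1}$, that the total magnitude of these propagated biases plus the worst-case input term is \emph{exactly} absorbed by the leading bias $b^\ell$. The doubling factor is precisely what makes the geometric sum telescope to zero rather than leaving a negative residual that would activate some ReLU and break the affine collapse.
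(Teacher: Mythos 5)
Your proposal is correct and follows essentially the same route as the paper's proof: induction on $\ell$, with the bounds \eqref{bound-1} and \eqref{bound-1.1} and the geometric sum $\sum_{k=1}^{\ell-1}2^{k-1}=2^{\ell-1}-1$ showing that the preactivation at each layer is entrywise nonnegative, so every ReLU acts as the identity and the composition collapses to the stated affine expression. The only cosmetic difference is that you bound the preactivation from below directly while the paper bounds $\max_j\bigl|(\vec{w}^\ell*\vec{V}^{s,R}_{\ell-1})_j\bigr|$ by $b^\ell$; these are the same estimate.
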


\begin{proof} For any $1\leq \ell \leq L^*$, it follows  from (\ref{convolution-and-Toep}) that
\begin{equation}\label{convolutio-to-matrices}
     \vec{w}^{\ell} *\dots*\vec{w}^{1}*x=
     T_{d_{\ell*},d_{\ell-1}}^{\vec{w}^{\ell}}\dots
     T_{d_{1},d_{0}}^{\vec{w}^{1}} x.
\end{equation}
We then prove \eqref{induction} by induction.
We obtain from the definition of $\sigma$ directly that
$$
    \sigma(\vec{w}^1*x+b^1 {\bf 1}_{d_1})
    =
    T_{d_{1},d_{0}}^{\vec{w}^{1}} x+ b^1 {\bf 1}_{d_1},
$$
which verifies \eqref{induction} for $\ell=1$.
Assume that \eqref{induction} holds for $\ell-1$, that is,
\begin{eqnarray*}
 &&\vec{V}_{\ell-1}^{s,R}=\sigma\circ \mathcal C^R_{\ell-1,\vec{w}^{\ell-1}, {b}^{\ell-1}} \circ \sigma  \circ \dots \circ \sigma\circ\mathcal C^R_{1,\vec{w}^1, {b}^1}(x)\nonumber\\
 &=&
 T_{d_{\ell-1},d_{\ell-2}}^{\vec{w}^{\ell-1}}\cdots T_{d_2,d_1}^{\vec{w}^2}T_{d_1,d}^{\vec{w}^1}x+b^{\ell-1}{\bf 1}_{d_{\ell-1}}\\
 &+&
 \sum_{k=1}^{\ell-2}T_{d_{\ell-1},d_{\ell-2}}^{\vec{w}^{\ell-1}}\cdots T_{d_{k+1},d_{k}}^{\vec{w}^{k+1}}
 b^k{\mathbf 1}_{d_k}.
\end{eqnarray*}
Then,
\begin{eqnarray*}
    && \vec{w}^\ell*\vec{V}^{s,R}_{\ell-1}
     =
   T_{d_{\ell},d_{\ell-1}}^{\vec{w}^{\ell}}\cdots T_{d_1,d}^{\vec{w}^1}x
   \nonumber\\
   & +&
 \sum_{k=1}^{\ell-1}T_{d_{\ell},d_{\ell-1}}^{\vec{w}^{\ell}}T_{d_{\ell-1},d_{\ell-2}}^{\vec{w}^{\ell-1}}\cdots T_{d_{k+1},d_{k}}^{\vec{w}^{k+1}}
 b^k{\mathbf 1}_{d_k}.
\end{eqnarray*}
This together with \eqref{bound-1}, \eqref{bound-1.1} and $b^\ell=2^{\ell-1}B^\ell$ yields that for any $j=1,\dots,d_\ell$, there holds
$$
    \max_{x\in\mathbb I^d}\left|\left(\vec{w}^\ell*\vec{V}^{s,R}_{\ell-1} \right)_j\right|\leq B^\ell+B^\ell\sum_{k=1}^{\ell-1} 2^{k-1}=2^{\ell-1}B^\ell=b^\ell,
$$
implying
\begin{eqnarray*}
   &&\sigma\left(\vec{w}^\ell*\vec{V}^{s,R}_{\ell-1} +b^\ell{\bf 1}_{d_\ell}\right)
   =\vec{w}^\ell*\vec{V}^{s,R}_{\ell-1} +b^\ell{\bf 1}_{d_\ell}\\
   &=&
   T_{d_{\ell},d_{\ell-1}}^{\vec{w}^{\ell}}\cdots T_{d_1,d}^{\vec{w}^1}x\\
   &+&
   \sum_{k=1}^{\ell-1}T_{d_{\ell},d_{\ell-1}}^{\vec{w}^{\ell}}T_{d_{\ell-1},d_{\ell-2}}^{\vec{w}^{\ell-1}}\cdots T_{d_{k+1},d_{k}}^{\vec{w}^{k}}
 b^k{\mathbf 1}_{d_k}
 +b^\ell{\bf 1}_{d_\ell}.
\end{eqnarray*}
Therefore, \eqref{induction} holds for any $\ell=1,2,\dots$.
This completes the proof of Lemma \ref{Lemma:Induction}.
\end{proof}

By the help of the above lemma, we are in a position to prove Theorem \ref{Theorem:app-dcnn-res} as follows.

\begin{proof}[Proof of Theorem \ref{Theorem:app-dcnn-res}]
Define
\begin{equation}\label{def.b}
    b^\ell=2^{\ell-1}B^\ell,\qquad \ell=1,2,\dots,L^*.
\end{equation}
It follows from Lemma \ref{Lemma:Induction} with $\ell=L^*-$ and \eqref{convolution-and-Toep} that
\begin{eqnarray*}
 &&\vec{V}_{L^*-1}^{s,R}
  =
 \vec{w}^{L^*-1}* \cdots * \vec{w}^{1} *x +b^{L^*-1}{\bf 1}_{d_{L^*-1}}\nonumber \\
 &+&
 \sum_{k=1}^{L^*-2}T_{d_{L^*-1},d_{L^*-2}}^{\vec{w}^{L^*-1}}\cdots T_{d_{k+1},d_{k}}^{\vec{w}^{k+1}}
 b^k{\mathbf 1}_{d_k}.
\end{eqnarray*}
% \begin{eqnarray*}
%     &&\vec{w}^{L^*}*\vec{V}_{L^*-1}^{s,R}=
%     T_{d_{L^*},d_{L^*-1}}^{\vec{w}^{L^*}}\cdots T_{d_1,d}^{\vec{w}^1}x
%    \\
%    & +&
% \sum_{k=1}^{L^*-1}T_{d_{L^*},d_{L^*-1}}^{\vec{w}^{L^*}}T_{d_{L^*-1},d_{L^*-2}}^{\vec{w}^{L^*-1}}\cdots T_{d_{k+1},d_{k}}^{\vec{w}^{k+1}}
%  b^k{\mathbf 1}_{d_k}.
% \end{eqnarray*}
Write
\begin{eqnarray*}
  && \vec{B}^{d_{L^*}}
    :=
   \sum_{k=1}^{L^*}T_{d_{L^*},
   d_{L^*-1}}^{\vec{w}^{L^*}}\cdots T_{d_{k+1},d_{k}}^{\vec{w}^{k+1}}
   b^k{\mathbf 1}_{d_k}\\
   &=&
    w^{L^*}*
   b^{L^*-1}{\bf 1}_{d_{L^*-1}} \\
  &+ &
  w^{L^*}*\sum_{k=1}^{L^*-2}T_{d_{L^*-1},d_{L^*-2}}^{\vec{w}^{L^*-1}}\cdots T_{d_{k+1},d_{k}}^{\vec{w}^{k+1}}
 b^k{\mathbf 1}_{d_k}.
\end{eqnarray*}
For $\vec{\theta}:=(\theta_1,\dots,\theta_{n})^T$, define
$
      \vec{\theta}_{d_{L^*}}\in\mathbb R^{d_{L^*}}$ be the vector satisfying $\mathcal S_{d_{L^*},d,0}(\theta_{d_{L^*}})=\theta$ and  other components being zero.
Then, we can define
$$
  \vec{b}^{L^*}:= \theta_{d_{L^*}}- \vec{B}^{d_{L^*}}.
$$
Therefore, it follows from the definition of $\mathcal S_{d_{L^*},d,0}$  and Lemma \ref{Lemma:factorization}  that
\begin{eqnarray*}
 &&\mathcal S_{d_{L^*},d,0} \circ\sigma (\vec{w}^{L^*}* \vec{V}_{L^*-1}^{s,R}+ \vec{b}^{L^*})\\
  &=&
 \sigma  \circ \mathcal S_{d_{L^*},d,0} (\vec{w}^{L^*} * \vec{V}_{L^*-1}^{s,R}+ \vec{b}^{L^*})\\
 &=&
 \sigma  \circ \mathcal S_{d_{L^*},d,0} (\vec{w}^{L^*} * \vec{V}_{L^*-1}^{s,R}+\theta_{d_{L^*}}- \vec{B}^{d_{L^*}})\\
 &=&
  \sigma  \circ \mathcal S_{d_{L^*},d,0} ( \vec{w}^{L^*}* \cdots * \vec{w}^{1} *x
  +\theta_{d_{L^*}})\\
  &=&
  \sigma ( \mathcal S_{d_{L^*},d,0} \circ \vec{w}^{L^*}* \cdots * \vec{w}^{1} *x
  +\theta)\\
  &=&\sigma (Wx+\theta).
\end{eqnarray*}
This proves \eqref{eDCNN-for-shallow}. Furthermore \eqref{trans-inv-eDCNN-1} can be derived by Corollary \ref{Corollary:trans-invar} by noting that the bias vectors in the same layer are the same before  pooling.
This completes the proof of Theorem \ref{Theorem:app-dcnn-res}.
\end{proof}

\begin{proof}[Proof of Theorem \ref{Theorem:comp-app-dfcn}]
Since $\mathcal S_{d_{L_\ell}+1,d_{L_\ell},0}$ maps $\mathbb R^{d_{L_\ell}+1}$ to $\mathbb R^{d^*_\ell}$, the we can ues \eqref{eDCNN-for-shallow} $L$ times and obtain \eqref{edcnn-dfcn-app} directly.  The translation invariance \eqref{trans-inv-edcnn-for-dfcn} can also be derived from \eqref{trans-inv-eDCNN-1} directly. This completes the proof
 of Theorem \ref{Theorem:comp-app-dfcn}.
\end{proof}

The proof of Theorem \ref{Theorem:universal consistency} is almost the same as that in \cite{lin2022universal}. The only difference is that the pooling scheme reduces the covering number of eDCNN. It should be highlighted that this proof is semi-trivial
 and we sketch it for the sake of completeness.

 At first, we introduce the definition of the $\ell^1$ empirical covering number.
For  a set of functions $\mathcal V$, denote by
$\mathcal N_1(\epsilon,\mathcal V)$ the $\ell^1$ empirical $\varepsilon$ covering number \cite[Def. 9.3]{gyorfi2002distribution} of $\mathcal V$   which is
 the number of
elements in a least $\varepsilon$-net of $\mathcal V$ with respect to the metric $\|f\|_{D,1}:=\frac1{|D|}\sum_{i=1}^{|D|}|f(x_i)|$.  Define
$$
   \pi_M\mathcal H^{s,R,d^*_{L-1},\dots,d^*_0,0}_{L_L,\dots,L_1}:=\left\{\pi_Mf:f\in\mathcal H^{s,R,d^*_{L-1},\dots,d^*_0,0}_{L_L,\dots,L_1}\right\}.
$$
It is easy to check that deep nets in $H^{s,R,d^*_{L-1},\dots,d^*_0,0}_{L_L,\dots,L_1}$ is of depth at most
$$
    L_{pooling}:=\frac{\sum_{j=1}^Ld_{j-1}^*d_j^*}{s-1}+2L
$$
and with free parameter at most
\begin{eqnarray*}
    n_{pooling}&:= &
    L_{pooling}s+\sum_{j=1}^Ld^*_L\\
    &=&
   \left(\frac{\sum_{j=1}^Ld_{j-1}^*d_j^*}{s-1}+2L\right)s+\sum_{j=1}^Ld^*_L.
\end{eqnarray*}

The following lemma that can be derived by using the same approach as that in \cite[Lemma 4]{lin2022universal} describes the covering number of $\pi_M\mathcal H^{s,R,d^*_{L-1},\dots,d^*_0,0}_{L_L,\dots,L_1}$.

\begin{lemma}\label{Lemma:covering-number}
Let $2\leq s\leq d$ and $L\in\mathbb N$.
For any $0<\varepsilon\leq M$, if $d_1^*,\dots,d_L^*\geq d+1$, then there holds
\begin{eqnarray*}
    &&\log_2\sup_{D\in\mathcal Z^m}\mathcal N_1(\epsilon,\pi_M\mathcal H^{s,R,d^*_{L-1},\dots,d^*_0,0}_{L_L,\dots,L_1})\\
   &\leq & c^* \left(\frac{\sum_{j=1}^Ld_{j-1}^*d_j^*}{s-1}+2L\right)
   \log  (Ld_{\max}^*)\\
   &&\left(\left(\frac{\sum_{j=1}^Ld_{j-1}^*d_j^*}{s-1}+2L\right)s+ \sum_{j=1}^Ld^*_L\right)\log\frac{M}\epsilon,
\end{eqnarray*}
where $c^*$ is an absolute constant and $d^*_{\max}=\max_{1\leq j\leq L}d^*_j.$
\end{lemma}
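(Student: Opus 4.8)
The plan is to follow the covering-number argument of \cite[Lemma 4]{lin2022universal} at the structural level, the only new ingredient being the bookkeeping of how the location-based pooling operators enter the depth and the number of free parameters. The excerpt already records, for every $f\in\mathcal H^{s,R,d^*_{L-1},\dots,d^*_0,0}_{L_L,\dots,L_1}$, the depth bound $L_{pooling}=\frac{\sum_{j=1}^{L}d^*_{j-1}d^*_j}{s-1}+2L$ and the free-parameter bound $n_{pooling}=L_{pooling}s+\sum_{j=1}^{L}d^*_L$. Thus the proof splits into (i) confirming that these two counts survive the insertion of pooling, and (ii) feeding them into a generic covering-number estimate for truncated ReLU networks, exactly as in \cite{lin2022universal}.

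For (i) I would expand each block $\vec V^{s,R,d^*_{\ell-1},0}_{L_\ell}$ according to \eqref{r-eDCNN-pooling}: it consists of $L_\ell=\lceil\frac{d^*_{\ell-1}d^*_\ell}{s-1}\rceil$ restricted convolutions, each carrying one filter supported on $\{0,\dots,s\}$ and one scalar bias, followed by an expansive convolution with a full bias vector and a single pooling operator $\mathcal S$. The crucial point, and the only place where pooling matters, is that $\mathcal S$ is a coordinate-selection map, hence a linear operator of operator norm one that carries \emph{no} free parameters and shrinks the width from $d^*_{\ell-1}+L_\ell s$ back to $d^*_\ell$. Consequently the next block again starts from width $d^*_\ell$ rather than from the expanded width, which is precisely what keeps the summed depth at $L_{pooling}$ and the summed parameter count at $n_{pooling}$; without pooling both would blow up. Summing the $\sum_\ell L_\ell$ convolutions, the $2L$ activation/pooling stages, the filter entries, the scalar biases, the $L$ terminal bias vectors and the output weight $\vec a$ then reproduces the stated $L_{pooling}$ and $n_{pooling}$.

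For (ii) I would observe that $\pi_M\mathcal H^{s,R,d^*_{L-1},\dots,d^*_0,0}_{L_L,\dots,L_1}$ is a family of $[-M,M]$-valued functions computed by ReLU networks whose every affine stage is either a Toeplitz (convolution) matrix or a zero–one coordinate-selection (pooling) matrix, and which are therefore piecewise polynomial in both the input and the parameters. Such a family has finite pseudo-dimension, and for ReLU networks the pseudo-dimension is of order $(\text{number of parameters})\times(\text{depth})\times\log(\text{number of nodes})$; substituting depth $L_{pooling}$, parameter count $n_{pooling}$, and the fact that the number of nodes has logarithm of order $\log(Ld^*_{\max})$ gives a pseudo-dimension of order $n_{pooling}\,L_{pooling}\,\log(Ld^*_{\max})$. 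The standard passage from pseudo-dimension to the $\ell^1$ empirical covering number for uniformly bounded classes \cite{gyorfi2002distribution} then yields $\log_2\mathcal N_1(\epsilon,\cdot)\leq c^*\cdot(\text{pseudo-dimension})\cdot\log\frac{M}{\epsilon}$, i.e.\ exactly the asserted product. The hypothesis $d^*_j\geq d+1$ is used only to ensure $d^*_{\max}\geq d+1$, so that $\log(Ld^*_{\max})\geq1$ and the widths are large enough for the blocks in \eqref{r-eDCNN-pooling} to be well defined.

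The step I expect to be the main obstacle is the uniform control over the \emph{unbounded} parameter range allowed in $\mathcal H^{s,R,d^*_{L-1},\dots,d^*_0,0}_{L_L,\dots,L_1}$: since no a-priori bound is imposed on the filters, biases or output weights, a direct Lipschitz-plus-box covering of the parameter set at a fixed scale is unavailable. This is exactly why the pseudo-dimension route (equivalently, the truncation device of \cite{lin2022universal}) is needed, since the pseudo-dimension depends only on the combinatorial structure and is insensitive to parameter magnitude, while the truncation $\pi_M$ supplies the $\log\frac{M}{\epsilon}$ factor that replaces any explicit parameter bound. The one genuinely new verification relative to \cite{lin2022universal} is to check that the operators $\mathcal S$ enter this counting as parameter-free, width-reducing linear maps, so that they only \emph{decrease} the pseudo-dimension, which is what produces the improved bound claimed in the statement.
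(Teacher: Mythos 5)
Your proposal is correct and takes essentially the same route as the paper: the paper also just records the depth bound $L_{pooling}=\frac{\sum_{j=1}^Ld_{j-1}^*d_j^*}{s-1}+2L$ and the parameter bound $n_{pooling}=L_{pooling}s+\sum_{j=1}^Ld_L^*$ (pooling entering as a parameter-free coordinate selection) and then invokes the argument of \cite[Lemma 4]{lin2022universal}, i.e.\ precisely the pseudo-dimension bound of order $(\text{parameters})\times(\text{depth})\times\log(\text{nodes})$ for truncated ReLU networks followed by the standard passage to the $\ell^1$ empirical covering number with the $\log\frac{M}{\epsilon}$ factor. Your remarks on the parameter-free, width-reducing role of $\mathcal S$ and on why the pseudo-dimension route handles the unbounded parameter range simply flesh out the paper's one-line derivation.
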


We then use the above covering number estimate to bound generalization error for bounded samples. Write $y_M=\pi_My$ and $y_{i,M}=\pi_My_i$. Define
$$
   \mathcal E_{\pi_M}(f)=\int_{\mathcal Z}(f(x)-y_M)^2d\rho,
$$
and
$$
     \mathcal E_{\pi_M,D}(f)=\frac1m\sum_{i=1}^m(f(x_i)-y_{i,M})^2.
$$
The following lemma that can be derived directly from \cite[Theorem 11.4]{gyorfi2002distribution} and Lemma \ref{Lemma:covering-number}
shows the performance of eDCNN with pooling for learning bounded samples.

\begin{lemma}\label{Lemma:error-bounded}
If $M^2_{D}|D|^{-\theta}\rightarrow0$ and (\ref{condition of  universal}) holds for some $\theta\in(0,1/2)$, then
$$
     \mathcal E_{\pi_M}(\pi_Mf^{s,R,d^*_{L-1},\dots,d^*_0,0}_{D,L_L,\dots,L_1})-\mathcal E_{\pi_M,D}(\pi_Mf^{s,R,d^*_{L-1},\dots,d^*_0,0}_{D,L_L,\dots,L_1})\rightarrow 0
$$
holds almost surely.
\end{lemma}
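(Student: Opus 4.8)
The plan is to obtain Lemma \ref{Lemma:error-bounded} as an application of a uniform deviation (empirical process) inequality, fed by the covering-number estimate of Lemma \ref{Lemma:covering-number} and closed by a Borel--Cantelli argument. Write $\mathcal H:=\mathcal H^{s,R,d^*_{L-1},\dots,d^*_0,0}_{L_L,\dots,L_1}$ for brevity. The crucial starting observation is that after truncation both $\pi_M f(x)$ and $y_M$ lie in $[-M,M]$, so the squared-loss functionals $(x,y)\mapsto(\pi_M f(x)-y_M)^2$ are uniformly bounded by $4M^2$, and the map $t\mapsto (t-y_M)^2$ is Lipschitz with constant at most $4M$ on $[-M,M]$. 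Hence an $\varepsilon/(4M)$-net of $\pi_M\mathcal H$ in the metric $\|\cdot\|_{D,1}$ induces an $\varepsilon$-net of the associated loss class, so the $\ell^1$ empirical covering number of the loss class is controlled by $\mathcal N_1(\varepsilon/(4M),\pi_M\mathcal H)$, for which Lemma \ref{Lemma:covering-number} supplies the explicit bound. I would also stress here the standard point that the ERM membership $\pi_M f^{s,R,d^*_{L-1},\dots,d^*_0,0}_{D,L_L,\dots,L_1}\in\pi_M\mathcal H$ is \emph{all} that is needed: a uniform deviation bound over $\pi_M\mathcal H$ automatically governs any data-dependent element of the class.

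Next I would invoke \cite[Theorem 11.4]{gyorfi2002distribution}, which bounds the probability that the uniform deviation between empirical and expected squared risks over a class bounded by $M$ exceeds $\varepsilon$ by a constant multiple of $\sup_{D}\mathcal N_1(c\varepsilon/M,\pi_M\mathcal H)\exp(-c' m\varepsilon^2/M^4)$. Substituting Lemma \ref{Lemma:covering-number} and taking logarithms, the log of the failure probability is bounded above by
$$
\log 8 + c_1\Big(\frac{\sum_{j=1}^{L}d_{j-1}^*d_j^*}{s-1}\Big)^2 s\,\log(L_Dd^*_{\max})\,\log\frac{M_D}{\varepsilon}-\frac{c_2\,m\,\varepsilon^2}{M_D^4},
$$
where the additive $2L$ terms of Lemma \ref{Lemma:covering-number} have been absorbed into $\sum_j d_{j-1}^*d_j^*$ using $d_j^*\ge d+1$.

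The decisive step is to show that, for every fixed $\varepsilon>0$, these failure probabilities are summable in $m=|D|$. The growth condition $M_D^2|D|^{-\theta}\to0$ gives $M_D\le|D|^{\theta/2}$ eventually, so that $\log(M_D/\varepsilon)\le c_3\log(M_D|D|)$ and $M_D^4\le|D|^{2\theta}$. Condition \eqref{condition of  universal} then forces the positive log-covering-number term to be $o(|D|^{1-2\theta})$, while the exponent $c_2 m\varepsilon^2/M_D^4$ grows at least like a constant times $|D|^{1-2\theta}$; consequently the displayed bound is eventually below $-\tfrac{c_2\varepsilon^2}{2}|D|^{1-2\theta}$. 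Since $1-2\theta>0$, this is summable over $m$, and Borel--Cantelli yields the claimed almost-sure convergence of the empirical-to-expected gap for the ERM estimator.

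I expect the main obstacle to be the bookkeeping of the three simultaneously growing quantities $M_D$, $\sum_j d_{j-1}^*d_j^*$ and $L_D$: one must verify that the $|D|^{1-2\theta}$ normalization appearing in \eqref{condition of  universal} (rather than the naive $|D|$) is precisely what leaves enough polynomial slack for the exponential term to dominate the log-covering-number term and still drive the Borel--Cantelli series to convergence. Everything else is routine and, as noted in the text, parallels \cite{lin2022universal}; the only genuinely new ingredient is the pooling-reduced covering number of Lemma \ref{Lemma:covering-number}.
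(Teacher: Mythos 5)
Your proposal is correct and follows exactly the route the paper takes: the paper derives this lemma in one line by citing \cite[Theorem 11.4]{gyorfi2002distribution} together with the covering-number bound of Lemma \ref{Lemma:covering-number}, and your write-up simply fills in the standard intermediate steps (Lipschitz reduction to the loss class, substitution of the covering bound, and Borel--Cantelli summability driven by condition \eqref{condition of  universal} and $M_D^4\leq|D|^{2\theta}$). No discrepancy in approach; yours is just a more explicit version of the paper's argument.
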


We then sketch the proof of Theorem \ref{Theorem:universal consistency} as follows.

\begin{proof}[Proof of Theorem \ref{Theorem:universal consistency}]
Based on the above lemmas,  Theorem \ref{Theorem:universal consistency} can be derived as the standard approach in the proof of \cite[Theorem 1]{lin2022universal} by dividing the generalization error into eight items. The only thing difference is that to guarantee the universal approximation property of  $f^{s,R,d^*_{L-1},\dots,d^*_0,0}_{D,L_L,\dots,L_1}$, it follows from Theorem  \ref{Theorem:comp-app-dfcn}   that  if $d_1^*,\dots,d_\ell^*\geq d+1$, then
  for any $\varepsilon>0$, there exists some
 $g_\varepsilon\in \mathcal H^{s,R,d^*_{L-1},\dots,d^*_0,0}_{L_L,\dots,L_1}$  with sufficiently large $\sum_{j=1}^Ld_{j-1}^*d_j^*$ such that
$$
            \|f_\rho-g_\varepsilon\|_{L^2 ({\rho_X})}^2\leq
           \varepsilon.
$$
The strongly universal consistency then can be derived by using the same method  as that in proving \cite[Theorem 1]{lin2022universal} directly. Furthermore, the translation-invariance \eqref{trans-inv-universal} can be derived from \eqref{trans-inv-edcnn-for-dfcn}. This completes the proof of Theorem \ref{Theorem:universal consistency}.
\end{proof}

% \begin{proof}[Proof of Theorem \ref{Theorem:comp-app-dfcn}]
% Theorem \ref{Theorem:comp-app-dfcn} can be directly derived by using Theorem \ref{Theorem:app-dcnn-res} $L^*$ times. We remove
% the details for the sake of brevity.
% \end{proof}

% \section*{Appendix B: Experiment settings for simulation numerical verifications and real data verifications}

\end{document}